\newtheorem{theorem}{Theorem}[section]
\newtheorem{proposition}[theorem]{Proposition}
\newtheorem{lemma}[theorem]{Lemma}
\newtheorem{definition}[theorem]{Definition}
\newtheorem{assumption}[theorem]{Assumption}
\title{Counterfactual Fairness with Partially Known \\ Causal Graph}
\def\@fnsymbol#1{\ensuremath{\ifcase#1\or \dagger\or \ddagger\or
   \mathsection\or \mathparagraph\or \|\or **\or \dagger\dagger
   \or \ddagger\ddagger \else\@ctrerr\fi}}
\author{%
  Aoqi Zuo \\
   The University of Melbourne\\
  \texttt{azuo@student.unimelb.edu.au} \\
   \And
   Susan Wei \\
   The University of Melbourne \\
   \texttt{susan.wei@unimelb.edu.au} \\
   \AND
   Tongliang Liu \\
   The University of Sydney \\
   \texttt{tongliang.liu@sydney.edu.au} \\
   \And
   Bo Han \\
   Hong Kong Baptist University \\
   \texttt{bhanml@comp.hkbu.edu.hk} \\
   \And
   Kun Zhang \\
   Carnegie Mellon University \& MBZUAI \\
   \texttt{kunz1@cmu.edu} \\
   \And
   Mingming Gong\thanks{Corresponding author} \\
   The University of Melbourne \\
   \texttt{mingming.gong@unimelb.edu.au} \\
}
\begin{document}


\maketitle

\begin{abstract}

Fair machine learning aims to avoid treating individuals or sub-populations unfavourably based on \textit{sensitive attributes}, such as gender and race.
Those methods in fair machine learning that are built on causal inference ascertain discrimination and bias through causal effects.
Though causality-based fair learning is attracting increasing attention, current methods assume the true causal graph is fully known.
This paper proposes a general method to achieve the notion of counterfactual fairness when the true causal graph is unknown. 
To select features that lead to counterfactual fairness, we derive the conditions and algorithms to identify ancestral relations between variables on a \textit{Partially Directed Acyclic Graph (PDAG)}, specifically, a class of causal DAGs that can be learned from observational data combined with domain knowledge. 
Interestingly, we find that counterfactual fairness can be achieved as if the true causal graph were fully known, when specific background knowledge is provided: the sensitive attributes do not have ancestors in the causal graph. Results on both simulated and real-world datasets demonstrate the effectiveness of our method.
\end{abstract}

\section{Introduction}\label{sec: Introduction}
With the widespread application of machine learning in various fields (e.g., hiring decisions \citep{hoffman2018discretion}, recidivism predictions \citep{dieterich2016compas, brennan2009evaluating},  and finance \citep{sweeney2013discrimination, khandani2010consumer}), the ethical and social impact of machine learning is receiving increasing attention. In particular, machine learning algorithms are sensitive to the bias in training data, which may render their decisions discriminatory against individual or sub-population group with respect to \textit{sensitive attributes}, e.g., gender and race. For example, bias against African-Americans was found with COMPAS, a decision support tool used by U.S. courts to assess the likelihood of a defendant becoming a recidivist \citep{dressel2018accuracy}.  

To achieve fair machine learning, a large body of methods have been proposed to mitigate bias according to different fairness measures. These methods can be roughly categorized into two groups. 
The first group focuses on devising statistical fairness notions, which typically indicate the statistical discrepancy between individuals or sub-populations, e.g., statistical parity~\citep{dwork2012fairness}, equalized odds~\citep{hardt2016equality}, and predictive parity~\citep{chouldechova2017fair}. 
Built on the causal inference framework \citep{pearl2000models}, the second group treats the presence of causal effect of the sensitive attribute on the decision as discrimination~\citep{zhang2017causal,kilbertus2017avoiding, kusner2017counterfactual, zhang2018fairness, zhang2018equality,nabi2018fair,wu2019pc,khademi2019fairness,chiappa2019path, russell2017worlds, zhang2018causal, zhang2017anti, kusner2019making, salimi2019interventional, wu2018discrimination, galhotra2022causal}. 

Among the causal fairness works, counterfactual fairness~\citep{kusner2017counterfactual}, which considers
the causal effect of sensitive attributes on the individual level, has received much attention. Given the true causal graph, Kusner et al. \citep{kusner2017counterfactual} provide the conditions and algorithms for achieving counterfactual fairness in the constructed predictive model. Wu et al. \citep{wu2019pc} and Chiappa \citep{chiappa2019path} extend counterfactual fairness by considering path-specific effects. When the counterfactual effect is unidentifiable, counterfactual fairness can be approximately achieved by lower
and upper bounds of counterfactual effects \citep{wu2019counterfactual}.

Existing counterfactual fairness methods assume the availability of a causal directed acyclic graph (DAG) \citep{pearl2009causality}, which encodes causal relationships between variables. However, in many real-world scenarios, the causal DAG is often unknown due to insufficient understanding of the system under investigation. 
An obvious path forward is to infer the causal DAG from observational data using causal discovery methods \citep{spirtes1991algorithm, colombo2014order, spirtes2000constructing, chickering2002optimal, shimizu2006linear,hoyer2008nonlinear, zhang2009identifiability, peters2014causal, peters2014identifiability}. Unfortunately, without strong assumptions on the data generating process, such as linearity \citep{shimizu2006linear} and additive noise~\citep{hoyer2008nonlinear, peters2014causal}, one cannot uniquely recover the underlying true causal graph from observational data alone. In the general case, causal discovery methods could output a Markov equivalence class of DAGs that encode the same set of conditional independencies from data, which can be represented by a completely partially directed acyclic graph (CPDAG) \citep{spirtes2000causation, chickering2002optimal}. With additional background knowledge, we can discern more causal directions, which can be represented by a maximally partially directed acyclic graph (MPDAG) \citep{meek1995causal}, but we still cannot obtain a unique causal DAG. 

Following this augment, we have a natural question to answer: can we learn counterfactual fairness with a partially known causal graph represented by MPDAG? \footnote{CPDAG is a special case of MPDAG without background knowledge, so we deal with MPDAG generally.}
In a causal DAG, if a variable $S$ has a causal path to another variable $T$, then $S$ is an ancestor of $T$ and $T$ is a descendant of $S$. Counterfactual fairness deems the prediction to be counterfactually fair if it is a function of the non-descendants of sensitive attributes \citep{kusner2017counterfactual}, which are straightforward to identify in DAGs. 
However, in an MPDAG, with respect to a variable $S$, a variable $T$ can be either
\begin{itemize}
\itemsep0em
    \item a \textit{definite descendant} of $S$ if $T$ is a descendant of $S$ in every equivalent DAG,
    \item a \textit{definite non-descendant} of $S$ if $T$ is a non-descendant of $S$ in every equivalent DAG,
    \item a \textit{possible descendant} of $S$ if $T$ is neither a definite descendant nor a definite non-descendant of $S$.
\end{itemize}
To achieve counterfactual fairness in MPDAGs, we need to select the definite non-descendants and some possible descendants of the sensitive attributes that could be non-descendants in the true DAG to make prediction. This comes to the core challenge: the identifiability of the ancestral relation between two distinct variables in an MPDAG. We refer the interested readers to a summary of existing identifiability results in \cref{sec: Related Work}. 

In this paper, we assume no selection bias and presence of confounders because the causal discovery algorithms themselves will not work well in such challenging scenarios. Under this assumption, which can also be found in the most related work \citep{zhang2017causal, chiappa2019path, chikahara2021learning, wu2019counterfactual}, but removing the assumption on a fully directed causal DAG, we make the following main contributions towards achieving counterfactual fairness on MPDAGs:   
\begin{itemize}
\itemsep0em
    \item We provide a sufficient and necessary graphical criterion (\cref{theo}) to check whether a variable is a definite descendant of another variable on an MPDAG;
    \item Based on the proposed criterion, we give an efficient algorithm (\cref{alg: ancestral relation between X and Y}) to identify ancestral relations between any two variables on an MPDAG;
    \item We propose the first approach for achieving counterfactual fairness on partially known causal graphs, specifically MPDAGs;
    \item We find that on an MPDAG, counterfactual fairness can be achieved as if the true causal graph is fully known with the assumption that the sensitive attributes can not have ancestors in the causal graph.
\end{itemize}

\section{Background} \label{sec: Preliminary}
In this section, we first review the structural causal model, causal graph and counterfactual inference. Then we introduce counterfactual fairness criterion and its intuitive implications.

\subsection{Structural causal model, causal graph, and counterfactual inference} \label{sec: SCM and CF}
Structural causal model (SCM) \citep{pearl2000models} is a framework to model causal relations between variables. It is defined by a triple $(U,V,F)$, where $V$ are observable endogenous variables and $U$ are unobserved exogenous variables that cannot be caused by any variable in $V$. $F$ is a set of functions ${f_1,...,f_n}$, one for each $V_i \in V$ expressing how $V_i$ is dependant on its direct causes:
$
V_i = f_i(pa_i, U_i),
$ 
where $pa_i$ is the observed direct causes of $V_i$ and $U_i$ is the set of unobserved direct causes of $V_i$. The exogenous $U_i$s are required to be jointly independent. The set of equations $F$ induces a causal graph $\mathcal{D}$ over the variables, usually in the form of a directed acyclic graph (DAG), where the directed causes of $V_i$ represents its parent set in the causal graph.

Based on SCM, one can perform counterfactual inference to answer the problems in the counterfactual world. For example, consider in a fairness context, $A$, $Y$ and $\mathcal{X}$ represent the sensitive attributes, outcome of interest, and other observable attributes, respectively. For an individual $U=u$ with $A=a, Y=y$ and $\mathcal{X}=\mathbf{x}$, a common counterfactual query is: “For this individual $u$, what would the value of $Y$ have been had $A$ taken value $a'$”. The solution, denoted as $Y_{A \leftarrow a'}(u)$ can be obtained by three steps in the deterministic case: Abduction, Action and Prediction \cite[Chapter 7.1]{pearl2000models}.
In probabilistic counterfactual inference, the procedure can be modified to estimate the posterior of $u$ and the distribution of $Y_{A \leftarrow a'}(u)$. 

\textbf{DAGs, PDAGs and CPDAGs.} In a directed acyclic graph (DAG), all edges are directed and there is no directed cycle in the graph; when some edges are undirected, we say it is a partially directed graph (PDAG). A DAG encodes a set of conditional independence relations based on the notion \textit{d-separation} \citep{pearl1988probabilistic}. Multiple DAGs are Markov equivalent if they encode the same set of conditional independence relations. A \textit{Markov equivalence class} of a DAG $\mathcal{D}$ can be uniquely represented by a completed partially directed acyclic graph (CPDAG) $\mathcal{G^*}$, denoted by $[\mathcal{G^*}]$.

\textbf{MPDAGs.} The CPDAGs with background knowledge constraint is known as maximally oriented PDAGs (MPDAGs) \citep{meek1995causal}, which can be obtained by applying Meek's rules R1, R2, R3 and R4 in \citep{meek1995causal}. The Algorithm 1 in \citep{perkovic2017interpreting} can be used to construct the MPDAG $\mathcal{G}$ from the CPDAG $\mathcal{G^*}$ and backgroud knowledge $\mathcal{B}$, where the background knowledge $\mathcal{B}$ is assumed to be the \textit{direct causal information} in the form $X \rightarrow Y$, meaning that $X$ is a direct cause of $Y$. The subset of Markov equivalent DAGs consistent with the background knowledge $\mathcal{B}$ can be uniquely represented by an MPDAG $\mathcal{G}$, denoted by $[\mathcal{G}]$. Both a DAG and a CPDAG can be regarded as special cases of an MPDAG when the background knowledge is completely known and not known, respectively.


\subsection{Counterfactual fairness} \label{sec: Counterfactual fairness}
Counterfactual fairness \citep{kusner2017counterfactual} is a fairness criterion based on SCM \citep{pearl2000models}. 
Let $A$, $Y$ and $\mathcal{X}$ represent the sensitive attributes, outcome of interest and other observable attributes, respectively and the prediction of $Y$ is denoted by $\hat{Y}$. For an individual with $\mathcal{X}=\mathbf{x}$ and $A=a$, we say the prediction $\hat{Y}$ is counterfactually fair if it would have been the same had $A$ been $a'$ in the counterfactual world as in the real world that $A$ is $a$.


\begin{definition}[Counterfactual fairness] \citep[Definition 5]{kusner2017counterfactual} \label{def: counterfactual fairness}
    We say the prediction $\hat{Y}$ is counterfactually fair if under any context $\mathcal{X} =\mathbf{x}$ and $A=a$,
    \begin{align}\label{eq: counterfactual fairness}
        &P(\hat{Y}_{A \gets a}(U)=y|\mathcal{X}=\mathbf{x},A=a)
        =P(\hat{Y}_{A\gets{a'}}(U)=y|\mathcal{X}=\mathbf{x},A=a),    \nonumber
    \end{align}
    for all $y$ and any value $a^{\prime}$ attainable by A.
\end{definition}


The definition of counterfactual fairness immediately suggests the following approach in \cref{lem: counterfactual fairness implication} to design a counterfactually fair model. 

\begin{lemma}\citep[Lemma 1]{kusner2017counterfactual}\label{lem: counterfactual fairness implication}
    Let $\mathcal{G}$ be the causal graph of the given model $(U,V,F)$. Then $\hat{Y}$ will be counterfactually fair if it is a function of the non-descendants of $A$.
\end{lemma}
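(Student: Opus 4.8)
The plan is to exploit the recursive structure of the SCM to show that intervening on $A$ leaves every non-descendant of $A$ unchanged, so a predictor built only from non-descendants produces identical counterfactual outputs regardless of the value to which $A$ is set. Writing $\mathbf{Z}$ for the collection of non-descendants of $A$ in $\mathcal{G}$ and taking $\hat{Y} = g(\mathbf{Z})$ for some measurable $g$ (this is exactly the hypothesis that $\hat{Y}$ is a function of the non-descendants of $A$), the target I would aim for is the pointwise identity $\hat{Y}_{A \gets a}(u) = \hat{Y}_{A \gets a'}(u)$ for every $u$ and every pair $a, a'$, which is strictly stronger than the distributional equality demanded by \cref{def: counterfactual fairness}.

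First I would establish the core invariance claim: for each $Z \in \mathbf{Z}$ and each $u$, the counterfactual value $Z_{A \gets a'}(u)$ equals the factual value $Z(u)$. The natural tool is induction along a topological ordering of $\mathcal{G}$. The base case covers source variables, whose values are fixed by $U$ alone and hence untouched by any intervention on $A$. For the inductive step, a non-descendant $Z$ of $A$ has all of its parents among the non-descendants of $A$ as well, since any parent that were equal to $A$ or a descendant of $A$ would render $Z$ a descendant of $A$; by the induction hypothesis those parents retain their factual values under $A \gets a'$, and because the structural equation $f_Z$ does not take $A$ as an argument, $Z$ is recomputed to the same value. This is the step I expect to be the crux, as it is where the graphical notion of ``non-descendant'' must be tied precisely to the algebraic fact that the relevant structural equations never reference $A$, directly or transitively.

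With the invariance claim in hand, I would conclude by chaining $\hat{Y}_{A \gets a}(u) = g(\mathbf{Z}_{A \gets a}(u)) = g(\mathbf{Z}(u)) = g(\mathbf{Z}_{A \gets a'}(u)) = \hat{Y}_{A \gets a'}(u)$ for every $u$. Finally, both conditional distributions appearing in \cref{def: counterfactual fairness} are obtained by pushing the \emph{same} abduction posterior $P(U \mid \mathcal{X} = \mathbf{x}, A = a)$ through these two counterfactual maps; since the maps agree on every $u$ in the support of that posterior, the resulting laws of $\hat{Y}_{A \gets a}(U)$ and $\hat{Y}_{A \gets a'}(U)$ coincide, which is precisely counterfactual fairness. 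The only bookkeeping needed here is to note that the conditioning event $\mathcal{X} = \mathbf{x}, A = a$ is identical on both sides, so the abduction step yields a common posterior over $U$ and no further integration argument is required.
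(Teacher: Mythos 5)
Your proposal is correct. The paper itself gives no proof of this lemma—it is imported verbatim as Lemma 1 of Kusner et al.\ (2017)—and your argument is a careful, fully fleshed-out version of the standard one behind that result: the induction along a topological order establishing that every non-descendant of $A$ is pointwise invariant under the intervention $A \gets a'$ (the key observation being that all parents of a non-descendant are themselves non-descendants), followed by pushing the common abduction posterior through the two identical counterfactual maps. Nothing is missing; your version is in fact more rigorous than the one-line justification given in the cited source.
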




\section{Problem formulation}\label{sec: Problem Formulation}
In this section, we introduce the task of achieving counterfactual fairness given PDAGs, especially MPDAGs that can be learned from observational data using causal discovery algorithms \citep{spirtes1991algorithm,colombo2014order,chickering2002learning}. 

\cref{lem: counterfactual fairness implication} in \cref{sec: Counterfactual fairness} implies learning a counterfactually fair prediction can be framed as selecting the non-descendants of $A$ to predict $Y$. If a causal DAG is used to encode the causal relations of all attributes, finding all non-descendants of $A$ is straightforward. However, as mentioned in \cref{sec: Introduction}, given observational data and optional background knowledge about direct causal information, we can only learn a CPDAG or an MPDAG $\mathcal{G}$, instead of the true DAG $\mathcal{D} \in [\mathcal{G}]$. Unfortunately, not all ancestral relations between $A$ and attributes in $\mathcal{X}$ are identifiable in a CPDAG or MPDAG.

Therefore, to achieve counterfactually fair prediction, we have two problems to solve:
\begin{itemize}
\itemsep0em
    \item Identify the type of ancestral relations of any other attributes with $A$ in $\mathcal{G}$, {\it i.e.}, identifying the definite non-descendants, definite descendants, and possible descendants of $A$ in an MPDAG (Section \ref{sec: Identifiability of Ancestral Relations in MPDAGs});
    \item Build a counterfactually fair model based on the identified ancestral relations (Section \ref{sec: Counterfactual Fairness in MPDAGs}).
\end{itemize}


\section{Identifiability of ancestral relations in MPDAGs}\label{sec: Identifiability of Ancestral Relations in MPDAGs}

In this section, we give a sufficient and necessary graphical criterion on identifying the definite ancestral relations between two distinct vertices in an MPDAG. We also provide an efficient algorithm for implementing the proposed criterion. We denote the parents, children, siblings and adjacencies of the node $W$ in a graph $\mathcal{G}$ as $pa(W, \mathcal{G})$, $ch(W, \mathcal{G})$, $sib(W, \mathcal{G})$ and $adj(W, \mathcal{G})$, respectively. A \textit{chord} of a path in $\mathcal{G}$ is any edge joining two non-consecutive vertices on the path. A path without any chord is called \textit{chordless path}. In a graph $\mathcal{G}=(V,E)$, where $V$ and $E$ represent the node set and edge set in $\mathcal{G}$, the \textit{induced subgraph} of $\mathcal{G}$ over $V' \subset V$ is the graph with vertex $V'$ and edges between vertices in $V'$, that is $E' \subset E$. A graph is \textit{complete} if any two distinct vertices are adjacent. 

\subsection{Graphical criterion on identifying ancestral relations in MPDAGs}\label{sec: Graphical criterion}

We first introduce the term \textit{b-possibly causal path} \citep{perkovic2017interpreting} in MPDAGs where the prefix $b$- stands for background.

\begin{definition} [b-possibly causal path, b-non-causal path] \citep[Definition 3.1]{perkovic2017interpreting}\label{b-path}
    Suppose $p=\langle S=V_{0},..., V_{k}=T\rangle $ is a path from $S$ to $T$ in an MPDAG $\mathcal{G}$, $p$ is b-possibly causal in $\mathcal{G}$ if and only if no edge $V_i \leftarrow V_j$, $0 \leq i \le j \leq k $ is in $\mathcal{G}$, including the edge not on $p$. Otherwise, $p$ is a b-non-causal path in $\mathcal{G}$.
\end{definition}

Perkovi{\'c} et al. \citep{perkovic2017interpreting} state that $T$ is a definite non-descendant of $S$ in an MPDAG $\mathcal{G}$ if and only if there is no b-possibly causal path from $S$ to $T$ in $\mathcal{G}$. In this section, to identify whether $T$ is a definite descendant of $S$ in an MPDAG, we provide a sufficient and necessary condition. 

We introduce the term \textit{critical set} \citep{fang2020ida} in CPDAGs as follows. 
\begin{definition} [Critical Set] \citep[Definition 2]{fang2020ida} \label{def: Critical Set}
    Let $\mathcal G$ be an CPDAG, $S$ and $T$ be two distinct vertices in $\mathcal G$. The critical set of $S$ with respect to $T$ in $\mathcal G$ consists of all adjacent vertices of $S$ lying on at least one chordless possibly causal path from $S$ to $T$.
\end{definition}
\cref{def: Critical Set} can be extended to MPDAGs directly,
based on which, we provide a sufficient and necessary graphical condition in \cref{lem: definite ancestral relations in Markov equivalent DAGs} for identifying the definite ancestral relation between two distinct vertices in an MPDAG. 

\begin{lemma}\label{lem: definite ancestral relations in Markov equivalent DAGs}
    Let $\mathcal G$ be an MPDAG and $S$, $T$ be two distinct vertices in $\mathcal G$, then $T$ is a definite descendant of $S$ in $\mathcal G$ if and only if the critical set of $S$ with respect to $T$ always contains a child of $S$ in every DAG $\mathcal D \in [\mathcal G]$.
\end{lemma}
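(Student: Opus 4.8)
The plan is to prove both implications of the biconditional through a single structural lemma describing how a chordless possibly causal path of $\mathcal{G}$ must orient inside any member of $[\mathcal{G}]$. Throughout I would lean on two standard properties of a DAG $\mathcal{D} \in [\mathcal{G}]$: it shares the skeleton and the v-structures of $\mathcal{G}$, and every edge already directed in $\mathcal{G}$ keeps its orientation in $\mathcal{D}$.

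The key lemma I would establish first is the following: if $p=\langle S=V_0, V_1, \ldots, V_k=T\rangle$ is a chordless possibly causal path from $S$ to $T$ in $\mathcal{G}$ and some $\mathcal{D}\in[\mathcal{G}]$ orients its first edge as $S\to V_1$, then $p$ is a fully directed path $S\to V_1\to\cdots\to T$ in $\mathcal{D}$, so $T$ is a descendant of $S$ in $\mathcal{D}$. I would prove this by induction along $p$: assuming $V_{i-1}\to V_i$ in $\mathcal{D}$, I inspect the edge between $V_i$ and $V_{i+1}$. Since $p$ is possibly causal this edge is not $V_i\leftarrow V_{i+1}$ in $\mathcal{G}$, so it is either already directed $V_i\to V_{i+1}$ (hence preserved in $\mathcal{D}$) or undirected in $\mathcal{G}$. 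In the undirected case the only alternative to $V_i\to V_{i+1}$ in $\mathcal{D}$ is $V_{i+1}\to V_i$; but because $p$ is chordless the non-consecutive vertices $V_{i-1}$ and $V_{i+1}$ are non-adjacent, so $V_{i-1}\to V_i\leftarrow V_{i+1}$ would be a v-structure of $\mathcal{D}$ absent from $\mathcal{G}$, contradicting v-structure preservation. Hence $V_i\to V_{i+1}$ in $\mathcal{D}$, closing the induction.

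With the lemma in hand the forward direction is short. If $T$ is a definite descendant, fix any $\mathcal{D}\in[\mathcal{G}]$ and take a shortest directed path $S\to V_1\to\cdots\to T$ in $\mathcal{D}$; minimality forces it to be chordless (a forward chord would shortcut it, a backward chord would create a directed cycle), and since directed edges of $\mathcal{D}$ are never reversed in $\mathcal{G}$, the same path is chordless and possibly causal in $\mathcal{G}$. Its first vertex $V_1$ is then a child of $S$ in $\mathcal{D}$ lying on a chordless possibly causal path to $T$, so $V_1$ is in the critical set, and the critical set contains a child of $S$ in this arbitrary $\mathcal{D}$.

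For the reverse direction I would argue by contraposition. Suppose $T$ is not a definite descendant, so some $\mathcal{D}_0\in[\mathcal{G}]$ has $T$ not a descendant of $S$. If any critical-set vertex $C$ were a child of $S$ in $\mathcal{D}_0$, then $C$ sits on some chordless possibly causal path to $T$ whose first edge is $S\to C$ in $\mathcal{D}_0$, and the key lemma would make this whole path directed in $\mathcal{D}_0$, forcing $T$ to be a descendant of $S$ — a contradiction. Hence in $\mathcal{D}_0$ the critical set contains no child of $S$, so it does not always contain one. The main obstacle is getting the key lemma exactly right: I must invoke chordlessness precisely to guarantee $V_{i-1}\not\sim V_{i+1}$ and rely on preservation of both the skeleton and, crucially, the v-structures across all of $[\mathcal{G}]$; the remaining steps are bookkeeping.
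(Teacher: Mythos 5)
Your proof is correct and follows essentially the same route as the paper: one direction takes a shortest causal path in an arbitrary $\mathcal{D}\in[\mathcal{G}]$ and observes that minimality makes it chordless and possibly causal in $\mathcal{G}$, so its first vertex is a child of $S$ in the critical set; the other direction propagates a first edge $S\to C$ along a chordless possibly causal path to make $T$ a descendant. The only (harmless) difference is that you prove the propagation sub-lemma from scratch by induction on v-structure preservation, whereas the paper obtains it by citing Perkovi\'c et al.'s Lemma B.1 together with the fact that chordless paths are of definite status.
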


The proof of \cref{lem: definite ancestral relations in Markov equivalent DAGs} is in \cref{proof_lem: definite ancestral relations in Markov equivalent DAGs}. Note that we have to enumerate all Markov equivalent DAGs for checking the condition given in \cref{lem: definite ancestral relations in Markov equivalent DAGs}. To resolve this problem, we provide a condition in \cref{lem: graphical characteristic of critical set} to check the graphical characteristic of the corresponding critical set in the MPDAG directly. The graphical criterion in \cref{lem: graphical characteristic of critical set} has been proved by \cite[Lemma 2]{fang2022local} and \cite[Lemma 1]{chen2021definite} for CPDAGs. We extend it to general MPDAGs here. 


\begin{lemma}\label{lem: graphical characteristic of critical set}
    Let $\mathcal{G}$ be an MPDAG and $S$, $T$ be two distinct vertices in $\mathcal{G}$. Denote by $\mathbf{C}$ the critical set of $S$ with respect to $T$ in $\mathcal{G}$, then $\mathbf{C} \cap{ch(S, \mathcal D)} = \emptyset$ for some DAG $\mathcal{D} \in [\mathcal{G}]$, if and only if $\mathbf{C} = \emptyset$, or $\mathbf{C}$ induces a complete subgraph of $\mathcal{G}$ but $\mathbf{C} \cap{ch(S,\mathcal{G})}=\emptyset$.
\end{lemma}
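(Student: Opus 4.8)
The plan is to prove the two directions of the biconditional separately, after recording one structural fact: every $C \in \mathbf{C}$ satisfies that the edge between $S$ and $C$ in $\mathcal{G}$ is either $S \to C$ or undirected $S - C$, but never $C \to S$, because $C$ is the first vertex after $S$ on a b-possibly causal path and such a path forbids an arrowhead into $S$ at its source. The case $\mathbf{C} = \emptyset$ is immediate, since $\mathbf{C} \cap ch(S, \mathcal{D}) = \emptyset$ holds vacuously for every $\mathcal{D} \in [\mathcal{G}]$, so I assume $\mathbf{C} \neq \emptyset$ throughout. It then remains to show that a witness DAG exists if and only if $\mathbf{C}$ induces a complete subgraph and all $S$-edges to $\mathbf{C}$ are undirected.

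For the direction ``witness exists $\Rightarrow$ complete and undirected'', I would argue the contrapositive in two cases. If $\mathbf{C} \cap ch(S, \mathcal{G}) \neq \emptyset$, pick $C$ with $S \to C$ in $\mathcal{G}$; this directed edge is invariant across the equivalence class, so $C \in ch(S, \mathcal{D})$ for every $\mathcal{D} \in [\mathcal{G}]$ and no witness exists. If instead all $S$-edges to $\mathbf{C}$ are undirected but $\mathbf{C}$ is not complete, choose non-adjacent $C_1, C_2 \in \mathbf{C}$, so that $C_1 - S - C_2$ is an unshielded non-collider in $\mathcal{G}$; since all members of $[\mathcal{G}]$ are Markov equivalent and hence share the same v-structures, this triple is a non-collider in every $\mathcal{D} \in [\mathcal{G}]$, which forces at least one of $S \to C_1$ or $S \to C_2$ and therefore $\mathbf{C} \cap ch(S, \mathcal{D}) \neq \emptyset$. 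In both cases no witness exists, giving the contrapositive.

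The harder direction, and the main obstacle, is the explicit construction: assuming $\mathbf{C}$ induces a complete subgraph and every $S - C$ is undirected, I must produce $\mathcal{D} \in [\mathcal{G}]$ with every $C \in \mathbf{C}$ oriented $C \to S$. The plan is to treat the orientations $\{C \to S : C \in \mathbf{C}\}$ as additional direct-causal background knowledge, adjoin them to the knowledge already encoded in $\mathcal{G}$, and close under Meek's rules via the construction of \citep{perkovic2017interpreting}; it then suffices to show this augmented knowledge is consistent, since a consistent MPDAG always has a DAG extension whose class is a subset of $[\mathcal{G}]$. Consistency reduces to checking that orienting $\mathbf{C}$ into $S$ creates no new v-structure and no semi-directed cycle. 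No new collider can arise at $S$ between two members of $\mathbf{C}$, because completeness makes them adjacent; no new collider can arise between some $C \in \mathbf{C}$ and a pre-existing parent $P \to S$, because if $P$ and $C$ were non-adjacent then Meek's rule R1 applied to $P \to S - C$ would already have forced $S \to C$ in the maximally oriented $\mathcal{G}$, contradicting that $S - C$ is undirected. For acyclicity, the key fact is that an undirected edge $S - C$ in an MPDAG precludes any directed path from $S$ to $C$ (otherwise $S \to C$ would be invariant across the class and hence already present), so making $S$ a common sink of $\mathbf{C}$ closes no directed cycle back to any $C$. The delicate point to get exactly right is that these checks survive the full Meek-rule propagation, i.e.\ that the cascade of forced orientations neither flips an already directed edge nor produces a cycle among the $C_i$; this is precisely where the completeness of $\{S\} \cup \mathbf{C}$ and the closure of $\mathcal{G}$ under Meek's rules must be invoked jointly.
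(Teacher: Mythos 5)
Your treatment of the empty case and of the direction ``a witness DAG exists $\Rightarrow$ $\mathbf{C}$ is complete and $\mathbf{C}\cap ch(S,\mathcal{G})=\emptyset$'' is correct, and in fact more elementary than the paper's: the paper derives completeness by routing $\mathbf{R}=sib(S,\mathcal{G})\cap pa(S,\mathcal{D})$ through \cref{theo: fang2020ida/Theorem_1}, whereas your unshielded-collider argument for two non-adjacent members of $\mathbf{C}$ needs only the invariance of skeletons and v-structures across $[\mathcal{G}]$.

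The gap is in the construction for the converse. You orient $\{C\to S: C\in\mathbf{C}\}$ as extra background knowledge and check consistency only against configurations visible before Meek propagation: collisions among members of $\mathbf{C}$, collisions with pre-existing parents $P\in pa(S,\mathcal{G})$, and directed paths from $S$ into $\mathbf{C}$. What you do not rule out is a sibling $W\in sib(S,\mathcal{G})\setminus\mathbf{C}$ with $W\to C$ for some $C\in\mathbf{C}$: once $C\to S$ is imposed, Meek's rule R2 forces $W\to S$, and if $W$ were non-adjacent to some $C'\in\mathbf{C}$ this would create a new v-structure $W\to S\leftarrow C'$, i.e.\ the augmented knowledge would be inconsistent after propagation even though all of your local checks pass. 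Ruling this out is exactly the content of the paper's \cref{lem: complete2complete}: any such $W$ is necessarily adjacent to all of $\mathbf{C}$, so $\mathbf{C}\cup\{W\}$ is again a complete set of siblings, and iterating this (\cref{lem: complex2simple}) yields a complete superset $\mathbf{R}\supseteq\mathbf{C}$ for which \cref{theo: fang2020ida/Theorem_1} certifies a DAG $\mathcal{D}\in[\mathcal{G}]$ with $pa(S,\mathcal{D})=\mathbf{R}\cup pa(S,\mathcal{G})$, whence $\mathbf{C}\cap ch(S,\mathcal{D})=\emptyset$. Your closing sentence conceding that one must still show ``these checks survive the full Meek-rule propagation'' is naming the missing proof rather than a detail: without an argument of this kind (or an appeal to condition (3) of \cref{theo: fang2020ida/Theorem_1}, which packages it), the consistency of the augmented background knowledge is unsupported, and the purely local checks you perform are not sufficient in general.
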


 The proof of \cref{lem: graphical characteristic of critical set} is in \cref{proof_lem: graphical characteristic of critical set}. 
Building on \cref{lem: definite ancestral relations in Markov equivalent DAGs} and \cref{lem: graphical characteristic of critical set}, we arrived at the desired sufficient and necessary graphical criterion in \cref{theo}. 

\begin{theorem}\label{theo}
    Let $S$ and $T$ be two distinct vertices in an MPDAG $\mathcal G$, and $\mathbf{C}$ be the critical set of $S$ with respect to $T$ in $\mathcal G$. Then $T$ is a definite descendant of $S$ if and only if either $S$ has a definite arrow into $\mathbf{C}$, that is $\mathbf{C} \cap{ch(S, \mathcal G)} \ne \emptyset$, or $S$ does not have a definite arrow into $\mathbf{C}$ but $\mathbf{C}$ is non-empty and induces an incomplete subgraph of $\mathcal{G}$.
\end{theorem}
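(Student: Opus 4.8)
The plan is to derive \cref{theo} purely as a logical consequence of the two preceding lemmas, so that the argument reduces to chaining \cref{lem: definite ancestral relations in Markov equivalent DAGs} with \cref{lem: graphical characteristic of critical set} and then simplifying the resulting Boolean expression into the two-case form stated in the theorem. No new graph-theoretic content is needed; the substantive work has already been discharged in the lemmas, and \cref{theo} is their clean repackaging into a condition readable directly off $\mathcal{G}$.

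First I would take the contrapositive of \cref{lem: definite ancestral relations in Markov equivalent DAGs}. That lemma asserts that $T$ is a definite descendant of $S$ exactly when $\mathbf{C} \cap ch(S, \mathcal{D}) \neq \emptyset$ for \emph{every} DAG $\mathcal{D} \in [\mathcal{G}]$. Negating both sides, $T$ fails to be a definite descendant of $S$ if and only if there exists \emph{at least one} DAG $\mathcal{D} \in [\mathcal{G}]$ with $\mathbf{C} \cap ch(S, \mathcal{D}) = \emptyset$. This existential statement over the equivalence class is precisely the left-hand condition appearing in \cref{lem: graphical characteristic of critical set}, so the alternation of quantifiers lines up exactly and I can substitute directly.

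Next I would invoke \cref{lem: graphical characteristic of critical set} to convert this existential condition into a static graphical condition on the single MPDAG $\mathcal{G}$. This yields: $T$ is \emph{not} a definite descendant of $S$ if and only if either $\mathbf{C} = \emptyset$, or $\mathbf{C}$ induces a complete subgraph of $\mathcal{G}$ while $\mathbf{C} \cap ch(S, \mathcal{G}) = \emptyset$. Negating once more, $T$ is a definite descendant of $S$ iff $\mathbf{C} \neq \emptyset$ and it is not the case that $\mathbf{C}$ is simultaneously complete and arrow-free from $S$; equivalently, $\mathbf{C} \neq \emptyset$ together with ($\mathbf{C}$ induces an incomplete subgraph, or $\mathbf{C} \cap ch(S, \mathcal{G}) \neq \emptyset$).

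The final step is the routine-but-careful Boolean simplification that recasts this into the mutually exclusive form of the theorem. Writing $P$ for the event $\mathbf{C} \cap ch(S, \mathcal{G}) \neq \emptyset$ (a definite arrow from $S$ into $\mathbf{C}$) and $Q$ for ``$\mathbf{C}$ is nonempty and induces an incomplete subgraph,'' I would observe that $P$ already forces $\mathbf{C} \neq \emptyset$, which lets me absorb the standalone nonemptiness clause and reduce the condition to $P \lor Q$. Applying the identity $P \lor Q \equiv P \lor (\lnot P \land Q)$ then produces exactly ``$S$ has a definite arrow into $\mathbf{C}$, or $S$ has no definite arrow into $\mathbf{C}$ but $\mathbf{C}$ is nonempty and incomplete,'' as claimed. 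Since every step is an equivalence, the only place demanding care is keeping the quantifier negations straight when passing between the two lemmas and correctly noting the implication $P \Rightarrow \mathbf{C} \neq \emptyset$; there is no genuine obstacle beyond this bookkeeping.
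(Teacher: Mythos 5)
Your proposal is correct and follows exactly the paper's own route: the paper likewise proves \cref{theo} by chaining \cref{lem: definite ancestral relations in Markov equivalent DAGs} with \cref{lem: graphical characteristic of critical set} and negating, with all substantive graph-theoretic work residing in those two lemmas. Your write-up merely spells out the quantifier negations and Boolean simplification more explicitly than the paper does, and that bookkeeping is carried out correctly.
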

 With \cref{theo}, we can identify whether $T$ is a definite descendant of $S$ in an MPDAG by finding all chordless b-possibly causal path from $S$ to $T$ and checking graphical characteristic of the corresponding critical set. We provide an example to illustrate \cref{theo} in \cref{Appendix: An illustration Example}.

\subsection{Algorithms} \label{sec: Algorithms}
Here we provide an efficient algorithm to identify the ancestral relation between any two distinct vertices in an MPDAG based on the theoretical results in \cref{theo}. It is straightforward to identify the non-ancestral relation by checking if there exists a b-possibly causal path from the source to the target. However, discriminating between a definite ancestral relation and a possible ancestral relation in an MPDAG by \cref{theo} requires finding the critical set. 

According to the definition of the critical set, we need to find all chordless possibly causal paths from the source variable to the target variable first. However, it is cumbersome to check whether a path is chordless, since it involves considering many edges not on the path. In \cref{prop: obtain critical set by ds path} we propose a more efficient way to find critical sets by leveraging the relation between chordless path and definite status path in \cref{lem: ds and chordless}. We first provide the notion of \textit{collider} and \textit{definite status path}.

\textbf{Colliders and Definite Status paths.}
In a path $p=\langle S=V_{0},..., V_{k}=T\rangle $, if $V_{i-1} \rightarrow V_{i}$ and $V_{i} \leftarrow V_{i+1}$, then we say $V_{i}$ is a \textit{collider} on $p$. If $V_{i-1}$ and $V_{i+1}$ are not adjacent, then the triple $\langle V_{i-1}, V_{i}, V_{i+1}\rangle $ is called a \textit{v-structure collided} on $V_{i}$, and $V_{i}$ can be called \textit{unshielded collider}. A node $V_{i}$ is a \textit{definite non-collider} on a path $p$ if there is at least one edge out of $V_{i}$ on $p$ or $V_{i-1} - V_{i} - V_{i+1}$ is a subpath of $p$ and $V_{i-1}$ is not adjacent with $V_{i+1}$. A node is of \textit{definite status} on a path if it is a collider, a definite non-collider or an endpoint on the path. A path $p$ is of definite status if every node on $p$ is of definite status.


\begin{lemma}\label{lem: ds and chordless}
    Let $S$ and $T$ be two distinct vertices in an MPDAG $\mathcal{G}$. If $p$ is a chordless path from $S$ to $T$ in $\mathcal{G}$, then $p$ is of definite status.
\end{lemma}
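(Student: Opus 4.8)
The plan is to show that every internal vertex of a chordless path $p = \langle S = V_0, \dots, V_k = T \rangle$ is of definite status; the endpoints $S$ and $T$ are of definite status by definition, so it suffices to treat each $V_i$ with $1 \le i \le k-1$. The key structural fact I would extract from the chordless hypothesis is that $V_{i-1}$ and $V_{i+1}$, being non-consecutive on $p$, are non-adjacent in $\mathcal{G}$. I would then classify $V_i$ by the orientations of the two path edges incident to it, namely the edge between $V_{i-1}$ and $V_i$ and the edge between $V_i$ and $V_{i+1}$; in an MPDAG each such edge is either directed or undirected, giving nine local configurations to inspect.

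First I would dispatch the easy configurations. If both edges point into $V_i$, i.e. $V_{i-1} \rightarrow V_i \leftarrow V_{i+1}$, then $V_i$ is a collider. If at least one edge is directed out of $V_i$ (any of $V_i \rightarrow V_{i-1}$ or $V_i \rightarrow V_{i+1}$ occurs), then $V_i$ is a definite non-collider by the first clause of the definition. Finally, if both incident edges are undirected, the subpath $V_{i-1} - V_i - V_{i+1}$ together with the non-adjacency of $V_{i-1}$ and $V_{i+1}$ makes $V_i$ a definite non-collider by the second clause. These cases already cover every configuration except the two in which exactly one incident edge is directed into $V_i$ and the other is undirected.

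The crux of the argument is to rule out these remaining configurations, namely $V_{i-1} \rightarrow V_i - V_{i+1}$ and its mirror $V_{i-1} - V_i \leftarrow V_{i+1}$. Here I would invoke the defining property of an MPDAG: it is maximally oriented, hence closed under Meek's rule R1. In the first configuration we have $V_{i-1} \rightarrow V_i - V_{i+1}$ with $V_{i-1}$ and $V_{i+1}$ non-adjacent, which is precisely the trigger of R1; closure under R1 would force the undirected edge to be oriented $V_i \rightarrow V_{i+1}$, contradicting its being undirected in $\mathcal{G}$. The mirror configuration is excluded symmetrically. I expect this R1 step to be the main obstacle, since it is where the chordless hypothesis (supplying the non-adjacency of $V_{i-1}$ and $V_{i+1}$) and the maximality of the MPDAG must be combined; the rest is routine case checking. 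Having eliminated the two bad configurations, every internal $V_i$ is a collider or a definite non-collider, so $p$ is of definite status.
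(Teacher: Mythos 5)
Your proof is correct and follows essentially the same route as the paper's: a case analysis on the orientations of the two path edges at each internal vertex, with the chordless hypothesis supplying the non-adjacency of $V_{i-1}$ and $V_{i+1}$ and Meek's rule R1 (via maximality of the MPDAG) excluding the configurations $V_{i-1} \rightarrow V_i - V_{i+1}$ and its mirror. Your write-up is in fact slightly more explicit than the paper's in enumerating the configurations and in stating where the non-adjacency is used.
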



\begin{proposition}\label{prop: obtain critical set by ds path}
    Let $\mathcal{G}$ be an MPDAG, $S$ and $T$ be two distinct vertices in $\mathcal{G}$. Denote by $\mathbf{C}_{ST}$ the critical set of $S$ with respect to $T$ in $\mathcal{G}$, then $\mathbf{C}_{ST}=\mathbf{F}_{ST}$, where $\mathbf{F}_{ST}$ denotes all adjacent vertices of $S$ lying on at least one b-possibly causal path of definite status from $S$ to $T$ in $\mathcal{G}$ that there is no chord with $S$ as an endpoint.
\end{proposition}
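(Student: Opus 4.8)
The plan is to prove the set equality $\mathbf{C}_{ST}=\mathbf{F}_{ST}$ by establishing the two inclusions separately. The inclusion $\mathbf{C}_{ST}\subseteq\mathbf{F}_{ST}$ is immediate: if $V\in\mathbf{C}_{ST}$, then $V$ is adjacent to $S$ and lies on some chordless b-possibly causal path $p$ from $S$ to $T$; by \cref{lem: ds and chordless} such a $p$ is of definite status, and being chordless it has in particular no chord with $S$ as an endpoint, so $p$ witnesses $V\in\mathbf{F}_{ST}$. The substance of the proposition is the reverse inclusion $\mathbf{F}_{ST}\subseteq\mathbf{C}_{ST}$, where I must turn a definite-status path with no chord at $S$ into a genuinely chordless one that still passes through the same neighbour of $S$.

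For the reverse inclusion I would take $V\in\mathbf{F}_{ST}$ with a witnessing path $p=\langle S=V_{0},V_{1},\dots,V_{k}=T\rangle$ that is b-possibly causal, of definite status, and has no chord incident to $S$. The first step is to notice that the no-chord-at-$S$ hypothesis forces $V=V_{1}$: any path-vertex $V_{m}$ with $m\ge 2$ that were adjacent to $S$ would give a chord of $p$ with $S$ as endpoint, so $V_{1}$ is the only vertex of $p$ adjacent to $S$, and since $V$ is assumed to be such a vertex we get $V=V_{1}$. The second step is to restrict attention to the vertex set $\mathbf{W}=\{V_{1},\dots,V_{k}\}$ and consider the (nonempty, since $\langle V_{1},\dots,V_{k}\rangle$ qualifies) family of b-possibly causal paths from $V_{1}$ to $T$ using only vertices in $\mathbf{W}$; I would pick a shortest such path $q'=\langle V_{1}=W_{0},\dots,W_{l}=T\rangle$.

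The two key technical steps are then: (i) a \emph{shortcut} argument showing $q'$ is chordless — if $q'$ had a chord $W_{a}-W_{b}$, replacing the detour by this chord yields a strictly shorter path that stays inside $\mathbf{W}$ and is still b-possibly causal, because the ``no backward arrow between any two path vertices'' condition of \cref{b-path} is inherited by every order-preserving vertex subsequence and the chord $W_{a}-W_{b}$ itself cannot be oriented $W_{a}\leftarrow W_{b}$; this contradicts minimality; and (ii) verifying that prepending $S$ gives a chordless b-possibly causal path $q=\langle S,W_{0},\dots,W_{l}=T\rangle$ from $S$ to $T$. For (ii) the edge between $S$ and $V_{1}=W_{0}$ keeps its forward-or-undirected orientation from $p$; no chord among $W_{0},\dots,W_{l}$ exists since $q'$ is chordless; and crucially no new chord incident to $S$ is created, because every $W_{j}$ with $j\ge 1$ lies in $\{V_{2},\dots,V_{k}\}$, none of which is adjacent to $S$ (again by the no-chord-at-$S$ hypothesis). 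Hence $q$ is a chordless b-possibly causal path through $V=V_{1}$, so $V\in\mathbf{C}_{ST}$.

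I expect the main obstacle to be exactly step (ii): the naive move of shortening $p$ over the entire graph can route the path through vertices adjacent to $S$, which would reintroduce a chord at $S$ when the edge $S\to V_{1}$ is restored. The device that resolves this is to perform the shortening only within the vertex set $\mathbf{W}=\{V_{1},\dots,V_{k}\}$, and this is precisely where the hypothesis ``no chord with $S$ as an endpoint'' does the heavy lifting, since it both pins down $V=V_{1}$ and guarantees that $V_{2},\dots,V_{k}$ are non-adjacent to $S$. I would also remark that the definite-status hypothesis on the witnessing path is used only for the forward inclusion (through \cref{lem: ds and chordless}), not for the reverse one.
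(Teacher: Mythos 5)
Your proof is correct and follows essentially the same route as the paper's: the forward inclusion via \cref{lem: ds and chordless}, and the reverse inclusion by extracting a chordless b-possibly causal subpath and using the no-chord-at-$S$ hypothesis to force that subpath to start with the same neighbour of $S$. The only presentational difference is that you carry out the shortcutting argument by hand on $\{V_1,\dots,V_k\}$ and then prepend $S$, whereas the paper invokes the subsequence result \cref{lem: perkovic2017interpreting/Lemma_3.6} directly on the full witnessing path.
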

\cref{prop: obtain critical set by ds path} implies that, instead of enumerating all chordless possibly causal path from the source vertex to the target vertex, we enumerate b-possibly causal paths of definite status, excluding paths that contain any chord where the source vertex is an end node. The nodes adjacent to the source vertex on these paths constitute the desired critical set.

The proof of \cref{lem: ds and chordless} and \cref{prop: obtain critical set by ds path} are in \cref{proof_lem: ds and chordless} and \cref{proof_prop: obtain critical set by ds path}, respectively.

Following from \cref{prop: obtain critical set by ds path}, we develop \cref{alg: FindingCriticalSet} showing how to efficiently find the critical set of $S$ w.r.t. $T$ in an MPDAG $\mathcal{G}$. 
Given an MPDAG $\mathcal{G}$, the source vertex $S$ and the target vertex $T$, the output of \cref{alg: FindingCriticalSet} is the critical set of $S$ w.r.t. $T$ in $\mathcal{G}$. Using the breadth-first-search algorithm, \cref{alg: FindingCriticalSet} searches b-possibly causal path of definite status without any chord ending in $S$ from $S$ to $T$. A detailed explanation of this algorithm is provided in \cref{Appendix: Detailed Explaination of algo}. 

\begin{algorithm}
\caption{Finding the critical set of $S$ with respect to $T$ in an MPDAG}
\label{alg: FindingCriticalSet}
\begin{algorithmic}[1]
\State \textbf{Input:} MPDAG $\mathcal{G}$, two distinct vertices $S$ and $T$ in $\mathcal{G}$.
\State \textbf{Output:} The critical set $\mathbf{C}$ of $S$ with respect to $T$ in $\mathcal{G}$.
\State Initialize $\mathbf{C}=\emptyset$, a waiting queue $\mathcal{Q}=[]$, and a set $\mathcal{H}=\emptyset$,
\For{$\alpha \in sib(S) \cup ch(S)$}
    \State add $(\alpha, S, \alpha)$ to the end of $\mathcal{Q}$,
\EndFor
\While{$\mathcal{Q} \neq \emptyset$}
    \State take the first element $(\alpha, \phi, \tau)$ out of $\mathcal{Q}$ and add it to $\mathcal{H}$;
    \If{$\tau = T$}
        \State {add $\alpha$ to $\mathbf{C}$, and remove from $\mathcal{S}$ all triples where the first element is $\alpha$;}
    \Else
        \For{each node $\beta$ in $\mathcal{G}$}
            \If{$\tau \rightarrow \beta$ or $\tau - \beta$}
                \If{$\tau \rightarrow \beta$ or $\phi$ is not adjacent with $\beta$ or $\tau$ is the endnode}
                    \If{$\beta$ and $S$ are not adjacent}
                        \If{$(\alpha, \tau, \beta) \notin \mathcal{H}$ and $(\alpha, \tau, \beta) \notin \mathcal{Q}$}
                        \State add $(\alpha, \tau, \beta)$ to the end of $\mathcal{Q}$,
                        \EndIf
                    \EndIf
                \EndIf
            \EndIf
        \EndFor
    \EndIf
\EndWhile
\State \textbf{return} $\mathbf{C}$
\end{algorithmic}
\end{algorithm}

Finally, we present \cref{alg: ancestral relation between X and Y} to identify the type of ancestral relation. We first find the critical set $\mathbf{C}$ of $S$ with respect to $T$. When $\mathbf{C} = \emptyset$, there is no b-possibly causal path from $S$ to $T$, so $T$ is a definite non-descendant of $S$. When $\mathbf{C} \neq \emptyset$, using \cref{theo}, we can decide whether $T$ is a definite descendant or a possible descendant of $S$. 

\begin{algorithm}
\begin{algorithmic}[1]
\caption{Identify the type of ancestral relation of $S$ with respect to $T$ in an MPDAG}
\label{alg: ancestral relation between X and Y}
\State \textbf{Input:} MPDAG $\mathcal{G}$, two distinct variables $S$ and $T$ in $\mathcal{G}$.
\State \textbf{Output:} The type of ancestral relation between $S$ and $T$.
\State Find the critical set $\mathbf{C}$ of $S$ with respect to $T$ in $\mathcal{G}$ by Algorithm \ref{alg: FindingCriticalSet}.
\If{$|\mathbf{C}|=0$}
    \State \textbf{return} $T$ is a definite non-descendant of $S$.
\EndIf
\If{$S$ has an arrow into $\mathbf{C}$ or $\mathbf{C}$ induces an incomplete subgraph of $\mathcal{G}$}
    \State \textbf{return} $T$ is a definite descendant of $S$.
\EndIf
\State \textbf{return} $T$ is a possible descendant of $S$.
\end{algorithmic}
\end{algorithm}

Since in \cref{alg: FindingCriticalSet}, the same triple like $(\alpha, \phi, \tau)$ can only be visited at most once, where $\alpha$ is a sibling or a child of $S$ in the MPDAG $\mathcal{G}$, $\tau$ is a node on a b-possibly causal path of definite status from $S$ to $T$ without any chord ending in $S$, and $\phi$ lies immediately before $\tau$ on such path. The complexity of \cref{alg: FindingCriticalSet} in the worst case is $\mathcal{O}(|sib(S,\mathcal{G})+ch(S,\mathcal{G})|*|E(\mathcal{G})|)$, where $|E(\mathcal{G})|$ is the number of edges in $\mathcal{G}$. Consequently, the computational complexity of \cref{alg: ancestral relation between X and Y} is $\mathcal{O}(|sib(S,\mathcal{G})+ch(S,\mathcal{G})|*|E(\mathcal{G})|)$.

\section{Counterfactual fairness in MPDAGs}\label{sec: Counterfactual Fairness in MPDAGs}
Now, we return to our problem of learning counterfactually fair models via selecting features from $\mathcal{X}$. We will consider two cases: 1) a general MPDAG and 2) an MPDAG learned with background knowledge that $A$ is a root node.


\subsection{General case} \label{sec: General Case}
We can identify the set of definite descendants, possible descendants and definite non-descendants of the sensitive attribute by simply applying \cref{alg: ancestral relation between X and Y} to each pair of sensitive and any other attribute, which is concluded in \cref{alg: ancestral relation between X and all the other nodes}.  The detailed description of \cref{alg: ancestral relation between X and all the other nodes} is provided in \cref{Appendix: Additional Algorithms}. As the computational complexity of \cref{alg: ancestral relation between X and Y} in the worst case is $\mathcal{O}(|sib(S,\mathcal{G})+ch(S,\mathcal{G})|*|E(\mathcal{G})|)$, the complexity of \cref{alg: ancestral relation between X and all the other nodes} is directly $\mathcal{O}(|sib(S,\mathcal{G})+ch(S,\mathcal{G})|*|E(\mathcal{G})|*|V(\mathcal{G})|)$, where $|E(\mathcal{G})|$ is the number of edges and $|V(\mathcal{G})|$ is the number of nodes in $\mathcal{G}$. We consider two feature selection methods. The first method (called \texttt{Fair}) only selects the definite non-descendants, which ensures counterfactual fairness. However, the number of definite non-descendants in an MPDAG might be too small, resulting in low prediction accuracy. Therefore, we also propose a second method (called \texttt{FairRelax}),which uses possible descendants of $A$ to increase the prediction accuracy at the cost of a violation of counterfactual fairness.

\subsection{Under root node assumption} \label{sec: Counterfactual Fairness in MPDAGs/Under Root Node Assumption}
Kusner et al. \cite[Section 3.2]{kusner2017counterfactual} mentioned the ancestral closure of sensitive attributes, meaning that typically we should expect the sensitive attribute set $A$ to be closed under ancestral relationships given by the causal graph. For instance, in the example that religion can be affected by the geographical place of origin, if \textit{religion} is a sensitive attribute and \textit{geographical place of origin} is a parent of \textit{religion}, then it should also be in $A$. Therefore, $A$ will be the root node in most cases except some counterintuitive scenarios. Thus, we consider the following assumption, which is often true in real-world datasets. 
\begin{assumption} \label{assum: A is a root node}
    The sensitive attribute can only be a root node 
    in a causal MPDAG.
\end{assumption}
For example, `sex' cannot be caused by factors like `education' and `salary'. 
In an MPDAG $\mathcal{G}$, given \cref{assum: A is a root node}, the ancestral relation between the sensitive attribute and any other attribute is fully identified, as shown in the following proposition.
\begin{proposition} \label{prop: no possible ancestral relations in CF in MPDAGs}
In a MPDAG $\mathcal{G}$ with sensitive attribute $A$, if \cref{assum: A is a root node} holds, then any other attribute is either a definite descendant or definite non-descendant of $A$. Moreover, an arbitrary attribute $W$ is a definite descendant of $A$ if and only if there is a causal path from $A$ to $W$ in $\mathcal{G}$. 
\end{proposition}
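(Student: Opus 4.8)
The plan is to prove the two claims in \cref{prop: no possible ancestral relations in CF in MPDAGs} by leveraging \cref{assum: A is a root node} to pin down the structure of the critical set of $A$ with respect to any other vertex $W$, and then apply \cref{theo}. First I would observe that under \cref{assum: A is a root node}, $A$ is a root in \emph{every} DAG $\mathcal{D} \in [\mathcal{G}]$; consequently $A$ has no parents in $\mathcal{G}$, so every edge incident to $A$ is either a directed edge $A \to V$ (if oriented) or an undirected edge $A - V$ (a sibling). The key point is that an undirected edge $A - V$ in the MPDAG must be orientable, in some $\mathcal{D} \in [\mathcal{G}]$, only \emph{away} from $A$: if some $\mathcal{D}$ contained $V \to A$, then $A$ would have a parent, contradicting that $A$ is a root in every equivalent DAG. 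Hence every sibling of $A$ in $\mathcal{G}$ is in fact a child of $A$ in every $\mathcal{D} \in [\mathcal{G}]$, i.e. $sib(A,\mathcal{G}) \cup ch(A,\mathcal{G})$ consists entirely of definite children of $A$.

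Next I would analyze the critical set $\mathbf{C}$ of $A$ with respect to $W$. By \cref{def: Critical Set} (extended to MPDAGs), $\mathbf{C}$ consists of vertices adjacent to $A$ lying on at least one chordless b-possibly causal path from $A$ to $W$; every such vertex is a sibling or child of $A$, and by the previous paragraph is therefore a definite child of $A$. This immediately gives the dichotomy: if $\mathbf{C} = \emptyset$ then there is no b-possibly causal path from $A$ to $W$, so by the Perkovi\'c et al. characterization cited before \cref{def: Critical Set}, $W$ is a definite non-descendant of $A$; if $\mathbf{C} \neq \emptyset$ then $\mathbf{C}$ contains a definite child of $A$, so $\mathbf{C} \cap ch(A,\mathcal{D}) \neq \emptyset$ for every $\mathcal{D} \in [\mathcal{G}]$, and by \cref{lem: definite ancestral relations in Markov equivalent DAGs} (or equivalently the first branch of \cref{theo}) $W$ is a definite descendant of $A$. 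This rules out the possible-descendant case entirely and proves the first sentence of the proposition.

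For the second sentence, I would show the equivalence ``$W$ is a definite descendant of $A$ $\iff$ there is a causal (directed) path from $A$ to $W$ in $\mathcal{G}$.'' The forward direction is immediate: a definite descendant is a descendant in every $\mathcal{D} \in [\mathcal{G}]$, and since a causal path in $\mathcal{G}$ is preserved in every consistent DAG, it suffices to exhibit one such path; I would argue that the nonempty critical set supplies a definite child $V$ of $A$ on a b-possibly causal path, and that under the root assumption this path can be oriented as a fully directed path from $A$ to $W$ in $\mathcal{G}$ itself (each edge, being b-possibly causal and not orientable back toward $A$, is either already directed forward or undirected, and the chordless b-possibly-causal structure forces a directed realization). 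For the reverse direction, a directed path from $A$ to $W$ in $\mathcal{G}$ is a directed path in every $\mathcal{D} \in [\mathcal{G}]$, so $W$ is a descendant of $A$ in every equivalent DAG, hence a definite descendant.

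The main obstacle I anticipate is the forward direction of the second equivalence: converting the existence of a \emph{definite} descendant relation (a statement about all equivalent DAGs, established via the critical set and \cref{theo}) into the existence of an honest \emph{directed} path already present in the MPDAG $\mathcal{G}$. The subtlety is that a b-possibly causal path may contain undirected edges, and I must verify that under \cref{assum: A is a root node} no undirected edge on a relevant chordless path can be oriented ``backward'' in any $\mathcal{D} \in [\mathcal{G}]$ without creating a cycle or giving $A$ an ancestor—so that the whole path is forced to be causal in $\mathcal{G}$. Handling this cleanly will likely require a careful induction along the path together with Meek's orientation rules, rather than a one-line appeal to the earlier lemmas.
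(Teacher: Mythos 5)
Your overall logic is sound, but you take a noticeably heavier route than the paper for the first claim and leave the forward direction of the second claim unfinished. The paper's proof is essentially three lines: take any $W$ with a b-possibly causal path $p$ from $A$; by \cref{lem: perkovic2017interpreting/Lemma_3.6} a subsequence $p^{*}$ of $p$ forms a b-possibly causal \emph{unshielded} (hence definite status) path; under \cref{assum: A is a root node} its first edge is $A \rightarrow V_1$; and by \cref{lem: perkovic2017interpreting/Lemma_B.1} a b-possibly causal definite status path whose first edge is directed is a causal path in $\mathcal{G}$. That single chain simultaneously rules out possible descendants and exhibits the causal path in $\mathcal{G}$ demanded by the second sentence. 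You instead route the dichotomy through the critical set, \cref{lem: definite ancestral relations in Markov equivalent DAGs} and \cref{theo}; this is valid (under the root assumption every neighbour of $A$ on a b-possibly causal path is a child of $A$ in every equivalent DAG, so a nonempty critical set always meets $ch(A,\mathcal{D})$), but it buys nothing here and, crucially, it only certifies the definite-descendant relation abstractly --- it does not hand you a directed path sitting inside $\mathcal{G}$ itself. A minor imprecision along the way: by the completeness of Meek's rules, an edge left undirected in a maximally oriented PDAG admits both orientations among the consistent DAGs, so ``a sibling of $A$ that is a definite child'' cannot actually occur; the assumption, as the paper uses it, means every edge at $A$ is already oriented out of $A$ in $\mathcal{G}$.

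The ``main obstacle'' you anticipate --- upgrading a b-possibly causal path to an honest causal path in $\mathcal{G}$ under the root assumption --- is not an obstacle at all: the ``careful induction along the path together with Meek's orientation rules'' you expect to need is precisely the content of \cref{lem: perkovic2017interpreting/Lemma_B.1}, already stated in the paper's appendix. The only care required is to first pass to an unshielded subsequence via \cref{lem: perkovic2017interpreting/Lemma_3.6}, since \cref{lem: perkovic2017interpreting/Lemma_B.1} applies to definite status paths. With that substitution your argument closes completely.
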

The proof of \cref{prop: no possible ancestral relations in CF in MPDAGs} is in \cref{proof_prop: no possible ancestral relations in CF in MPDAGs}. From \cref{prop: no possible ancestral relations in CF in MPDAGs}, 
it is very interesting to see that fitting a model with the definite non-descendants of $A$ in $\mathcal{G}$ is exactly the same thing as fitting a model with the non-descendants of $A$ in the true DAG $\mathcal{D}$. Thus, the counterfactual fairness can be achieved as if the true causal DAG is fully known. 

Under \cref{assum: A is a root node}, since there is a causal path from $A$ to any definite descendant of $A$, we can directly identify whether a target attribute is a descendants of $A$ by checking if there is a causal path from $A$ to the target. To find all definite descendants in an MPDAG, we can use a breath first search algorithm with computational complexity $\mathcal{O}(|V|+|E|)$, where $|V|$ is the number of nodes and $|E|$ is the number of edges in $\mathcal{G}$. Then the remaining nodes are definite non-descendants of $A$ in $\mathcal{G}$.




\section{Experiment}\label{sec: Experiment}

In this section, we illustrate our approach on a simulated and a real-world dataset by evaluating the prediction performance and fairness of our approach. The prediction performance is evaluated by root mean squared error (RMSE). The counterfactual unfairness can be measured by the discrepancy of the predictions in the real world and counterfactual world for each individual. In addition, unfairness can also be revealed by comparing the distributions of the predictions in two worlds, which coincide if the prediction is counterfactually fair. 

\textbf{Baselines.} We consider three baselines: 1) \texttt{Full} is a standard model that uses all attributes, including the sensitive attributes to make predictions, 2) \texttt{Unaware} is a model that uses all attributes except the sensitive attributes to make predictions, and 3) \texttt{Oracle} is a model that makes predictions with all attributes that are non-descendants of the sensitive attribute given the groundtruth DAG. As mentioned in Section \ref{sec: Counterfactual Fairness in MPDAGs}, 
our proposed methods include \texttt{FairRelax}, which makes predictions using all definite non-descendants and possible descendants of the sensitive attribute in an MPDAG, and \texttt{Fair}, which makes predictions using all definite non-descendants of the sensitive attribute in an MPDAG.


\subsection{Synthetic data} \label{sec: Experiment/Synthetic Data}
The synthetic data is generated from linear structural equation models according to a random DAG. As the simulated DAG is known, we obtain the CPDAG from the true DAG without running the causal discovery algorithms.\footnote{Given a sufficiently large sample size, current causal discovery algorithms can recover the CPDAG with high accuracy on the simulated data \citep{glymour2019review}.} We also add background knowledge to turn a CPDAG into its corresponding MDPAG. 

We first randomly generate DAGs with $d$ nodes and $2d$ directed edges from the graphical model Erd{\H{o}}s-R{\'e}nyi (ER), where $d$ is chosen from $\{10,20,30,40\}$. For each setting, we generate $100$ DAGs. For each DAG $\mathcal{D}$, two nodes are randomly chosen as the outcome and the sensitive attribute, respectively. The sensitive attribute can have two or three values, drawn from a Binomial([0,1]) or Multinomial([0,1,2]) distribution separately. The weight, $\beta_{ij}$, of each directed edges $X_i \rightarrow X_j$ in the generated DAG, is drawn from a Uniform($[-2,-0.5]\cup [0.5,2]$) distribution. The data are generated according to the following linear structural equation model:
\begin{equation} \label{eq: linear structual equation model}
    X_i = \sum_{X_j \in pa(X_i)} \beta_{ij}X_j + \epsilon_i, i=1,...,n,
\end{equation}
where $\epsilon_1,...,\epsilon_n$ are independent $N(0,1.5)$. Then we generate one sample with size $1000$ for each DAG. The proportion of training and test data is splitted as $0.8:0.2$. Once the CPDAG $\mathcal{G^*}$ is obtained, where $\mathcal{D} \in [\mathcal{G^*}]$, we randomly generate the direct causal information $A \rightarrow B$ as the background knowledge from the edges where $A \rightarrow B$ is in DAG $\mathcal{D}$, while $A - B$ is in CPDAG $\mathcal{G^*}$. We show a randomly generated DAG $\mathcal{D}$, the corresponding CPDAG $\mathcal{G^*}$ and MPDAG $\mathcal{G}$ as in \cref{fig: SimulationGraph}, see \cref{Appendix: A randomly generated causal graph example}. For additional experiments based on 
more complicated structural equations and varying amount of possible background knowledge, please refer to \cref{Appendix: Experiment based on non-linear structural equations} and \cref{Appendix: Experiment analyzing fairness performance with varying amount of given domain knowledge}. We also analyze the model robustness experimentally on causal discovery algorithms in \cref{Appendix: Experiment analyzing method robustness on causal discovery algorithm}.

\textbf{Counterfactual fairness.} According to the predefined linear causal model, we first generate the counterfactual data given counterfactual sensitive attributes. For each individual, the noise of any counterfactual feature is the same as that in the observational data. In order to evaluate the counterfactual fairness of the baseline methods, we sample data from both the original and counterfactual data and fit them with all the models. 
Here, unfairness can be measured by the absolute difference of two predictions, $\hat{Y}_{A \leftarrow a}(u)$ and $\hat{Y}_{A \leftarrow a'}(u)$. The results for each model with different graph settings are shown in \cref{tab: RMSE and Unfairness for simultion data} and \cref{fig: Boxplot_unfairness}. Obviously, \texttt{Oracle} and \texttt{Fair} is counterfactually fair, since they do not use any feature that is causally dependant on the sensitive attribute. \texttt{Full} and \texttt{Unaware} have high counterfactual unfairness, while our \texttt{FairRelax} has very low counterfactual unfairness. Additionally, when the model is counterfactually fair, the distributions of the predictions in two worlds should lie on top of each other, as in the \texttt{Oracle} and \texttt{Fair} models. Although counterfactual unfairness is exhibited in the other three models, \texttt{FairRelax} is closer to strictly counterfactually fair methods. An exemplary density plot of the predictions of all the models in one original and counterfactual dataset is 
shown in \cref{fig: Densityplot_unfairness_simulation}.

\begin{table}[!ht]
\caption{Average unfairness and RMSE for synthetic datasets on held-out test set. For each graph setting, the unfairness gets decreasing from left to right and the RMSE gets increasing from left to right.}
\label{tab: RMSE and Unfairness for simultion data}
\begin{center}
\small
\begin{tabular}{cccccccc}
    \hline
           &Node &Edge   &Full     &Unaware  &FairRelax  &Oracle   &Fair\\ \hline
  \multirow{4}{*}{\begin{turn}{90}Unfairness\end{turn}}  &$10$   &$20$    &$0.288\pm0.363$ &$0.200\pm0.322$  &$0.023\pm0.123$ &$0.000\pm0.000$ &$0.000\pm0.000$ \\ 
            &$20$   &$40$    &$0.203\pm0.341$ &$0.165\pm0.312$ &$0.019\pm0.145$ &$0.000\pm0.000$ &$0.000\pm0.000$ \\ 
            &$30$   &$60$    &$0.155\pm0.304$ &$0.143\pm0.312$ &$0.020\pm0.123$ &$0.000\pm0.000$ &$0.000\pm0.000$ \\ 
            &$40$   &$80$    &$0.095\pm0.189$ &$0.075\pm0.182$ &$0.009\pm0.055$ &$0.000\pm0.000$ &$0.000\pm0.000$ \\ \hline\hline
    
    \multirow{4}{*}{\begin{turn}{90}RMSE\end{turn}}    &$10$   &$20$    &$0.621\pm0.251$   &$0.637\pm0.261$  &$1.031\pm0.751$   &$1.065\pm0.751$    &$1.137\pm0.824$ \\ 
            &$20$   &$40$    &$0.595\pm0.255$   &$0.599\pm0.253$   &$0.818\pm0.488$   &$0.847\pm0.55$    &$0.952\pm0.645$ \\ 
            &$30$   &$60$    &$0.597\pm0.24$    &$0.601\pm0.242$  &$0.797\pm0.489$    &$0.849\pm0.644$   &$1.024\pm0.908$\\ 
            &$40$   &$80$    &$0.600\pm0.273$     &$0.601\pm0.272$ &$0.755\pm0.441$    &$0.766\pm0.452$ &$0.800\pm0.480$ \\ \hline
\end{tabular}
\end{center}
\end{table}

\begin{figure}[ht]
\label{fig: Boxplot_unfairness_and_RMSE}
\centering
\subfloat[Average unfairness for each model and graph setting.]{
\label{fig: Boxplot_unfairness}
\includegraphics[width=0.48\columnwidth,height=0.3\columnwidth]{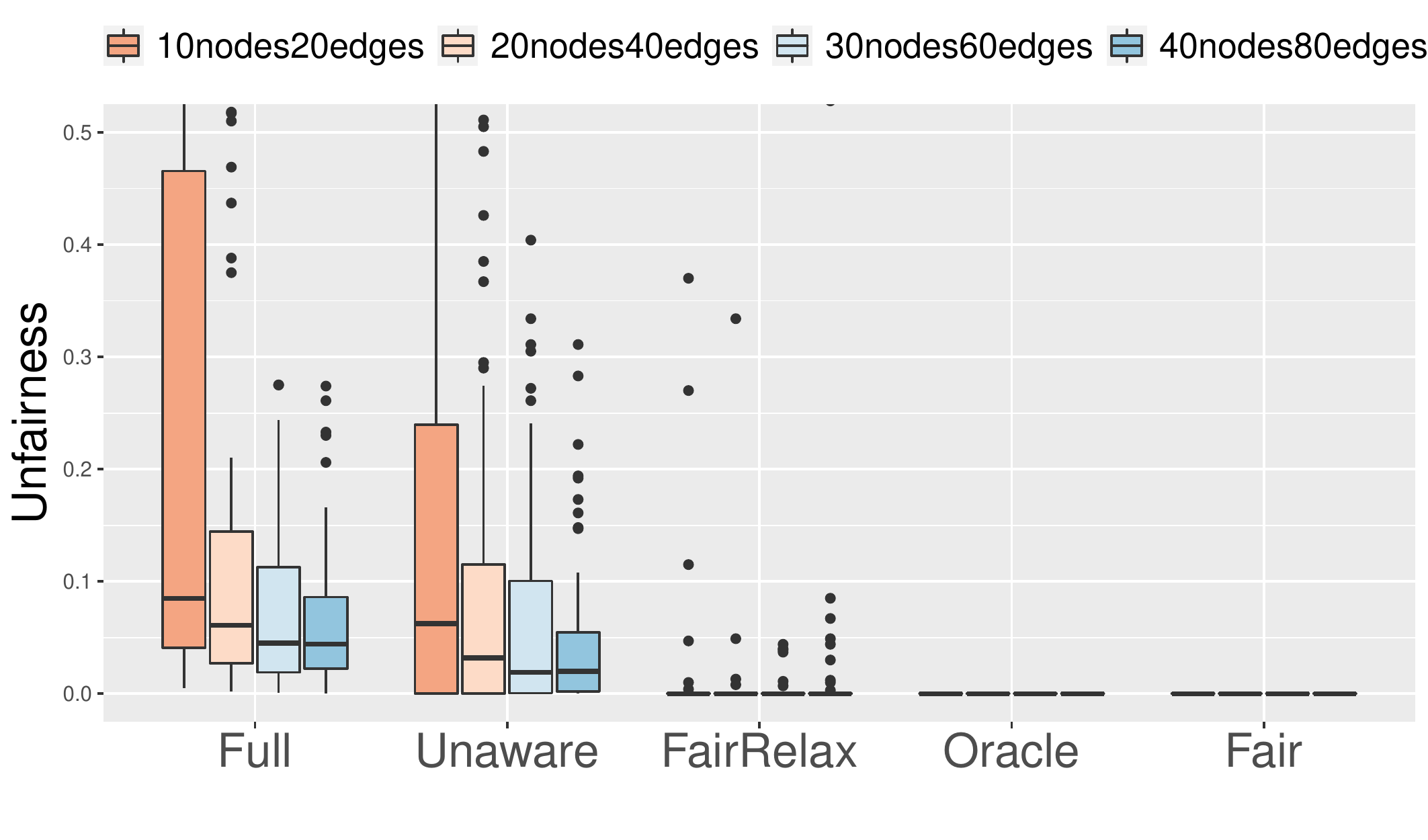}
}
\subfloat[Average RMSE for each model and graph setting.]{
\label{fig: Boxplot_RMSE}
\includegraphics[width=0.48\columnwidth,height=0.3\columnwidth]{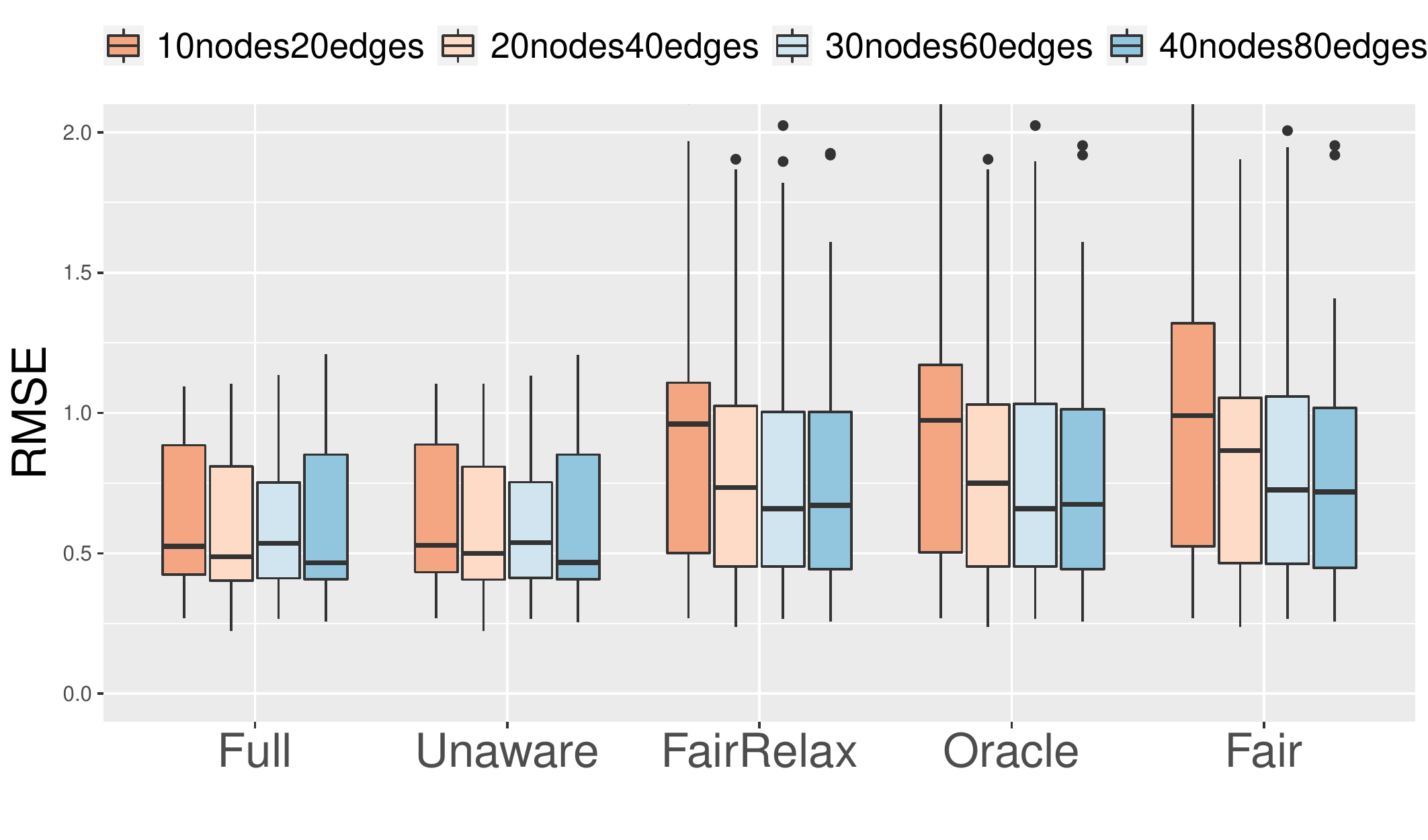}
}
\caption[]{Average unfairness and RMSE for synthetic datasets on held-out test set. For each graph setting, the unfairness gets decreasing from left to right, while RMSE has the opposite trend. The extreme unfairness and high RMSE is because almost all attributes are descendants of the sensitive attribute in around $10/100$ randomly generated graphs. This also explains why the standard deviation of unfairness for the model \texttt{Full}, \texttt{Unware}, \texttt{FairRelax} and RMSE for \texttt{FairRelax}, \texttt{Oracle} and \texttt{Fair} is that large in \cref{tab: RMSE and Unfairness for simultion data}.}
\vskip 0.2in
\end{figure}

\begin{figure}[htp]
\begin{center}
\centerline{\includegraphics[width=0.96\columnwidth,height=0.2\columnwidth]{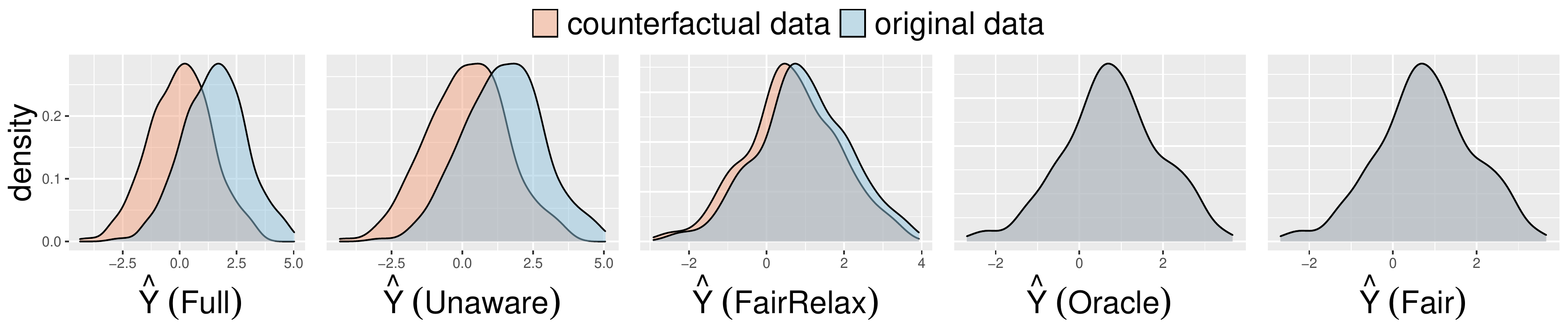}}
\caption{Density plot of the predicted $Y_{A \leftarrow a}(u)$ and $Y_{A \leftarrow a'}(u)$ in synthetic data.}
\label{fig: Densityplot_unfairness_simulation}
\end{center}
\vskip -0.2in
\end{figure}

\textbf{Accuracy.} For each graph setting, we report average RMSE achieved on $100$ causal graphs by fitting a linear regression model for the baselines and our proposed models in \cref{tab: RMSE and Unfairness for simultion data} and \cref{fig: Boxplot_RMSE}. We can observe that, for each graph setting, the \texttt{Full} model obtains the lowest RMSE, which is not surprising as it uses all features. In addition, our \texttt{FairRelax} methods obtains better accuracy than the strictly counterfactually fair methods \texttt{Fair} and \texttt{Oracle}, which is consistent with the accuracy-fairness trade-off phenomenon. More discussion on accuracy-fairness trade-off is included in \cref{Appendix: Discussion on Accuracy-fairness trade-off}.


\subsection{Real data}
The UCI Student Performance Data Set \citep{cortez2008using} regarding students performance in Mathematics is used in this experiment. The data attributes include student grade in secondary education, demographic, social, and school related features. It contains 395 students records with $32$ attributes. We regard \textit{sex} as the sensitive attribute in this dataset. Besides, we remove the first, second and final period grade, denoted by \textit{G1,G2}, and \textit{G3} in the dataset and generate the value of target attribute \textit{Grade} as the average of \textit{G1,G2}, and \textit{G3}.

In this section, we first learn the corresponding CPDAG from this dataset leveraging the GES structure learning algorithm \citep{chickering2002learning}, which is implemented by a general causal discovery software - TETRAD \citep{ramsey2018tetrad}. After uploading the preprocessed data, we can learn the CPDAG $\mathcal{G^*}$. The evolution of the CPDAG to MPDAG is shown in \cref{Appendix: Causal graphs for Student Dataset}. 

Our experiments are carried out under the root node assumption on the MPDAG $\mathcal{G}$ in \cref{fig: real_data_with_assumption}, as \textit{sex} cannot be caused by other variables in the dataset. Due to the space limit, \cref{fig: real_data_with_assumption} is provided in \cref{Appendix: Causal graphs for Student Dataset}. Thus, the ancestral relations can be fully identified according to our theoretical results in Section \ref{sec: Counterfactual Fairness in MPDAGs/Under Root Node Assumption}. Our algorithm will find all the non-descendants of the sensitive attribute. On this dataset, the definite descendants of \textit{sex} can be identified as \textit{\{Walc, goout, Dalc, studytime\}} and all the other nodes are definite non-descendants of \textit{sex}.

The counterfactual fairness and accuracy are measured in almost the same way as in \cref{sec: Experiment/Synthetic Data}. The details on counterfactual data generation and model fitting can be referred to \cref{Appendix: Training Details on Real Data}. The results are reported in \cref{tab: RMSE and Unfairness for real data}. Since under the root node assumption, there is no possible descendants of the sensitive attribute, the model \texttt{Fair} and \texttt{FairRelax} give the same RMSE result and both of them achieve counterfactual fairness at the cost of slight accuracy decrease. Instead, the model \texttt{Full} and \texttt{Unaware} are unfair and the \texttt{Full} model is more unfair than \texttt{Unaware}. Besides, the distribution of predictions for the original and counterfactual data for all models have the same trend as the synthetic data. \cref{fig: Densityplot_unfairness_real_data} is the corresponding density plot.

\begin{table}[t]
\caption{Average RMSE and unfairness for Student dataset on held-out test sets.}
\label{tab: RMSE and Unfairness for real data}
\begin{center}
\small
\begin{tabular}{cccccc}
\hline
        & Full & Unaware & FairRelax  & Fair\\ \hline
Unfairness &$0.761\pm0.228$ &$0.250\pm0.085$ &$0.000\pm0.000$ &$0.000\pm0.000$ \\\hline
RMSE    & $3.49\pm0.292$ & $3.482\pm0.297$ & $3.509\pm0.356$ &$3.509\pm0.356$ \\\hline
\end{tabular}
\end{center}
\end{table}

\begin{figure}[htp]
\begin{center}
\centerline{\includegraphics[width=0.98\columnwidth,height=0.2\columnwidth]{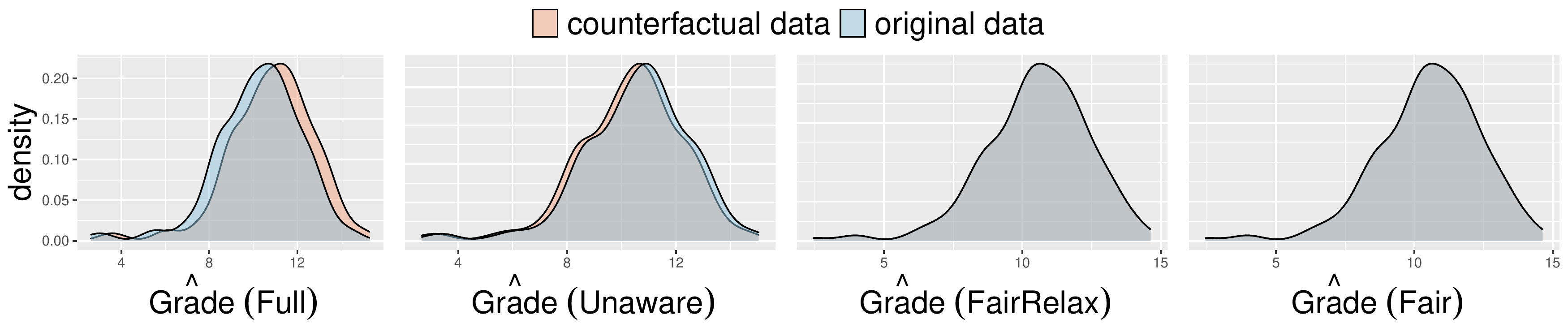}}
\caption{Density plot of the predicted $Grade_{sex \leftarrow a}(u)$ and $Grade_{sex \leftarrow a'}(u)$ for Student dataset.}
\label{fig: Densityplot_unfairness_real_data}
\end{center}
\vskip -0.2in
\end{figure}

\section{Conclusion and discussion}\label{sec: Conclusion}

In this paper, we have developed a general approach to achieve counterfactual fairness when the true causal graph is unknown. 
In order to select features that lead to counterfactual fairness, we propose a sufficient and necessary condition and an efficient algorithm to identify the ancestral relations between any distinct vertices on an MPDAG, 
which may be applied to more applications. Furthermore, an intriguing finding is that, under the assumption that the sensitive attribute can only be a root node in the graph, there is no possible descendant of the sensitive attribute, so that the fair features can be selected correctly whatever the true DAG is. Experiments on synthetic and real-world dataset show the effectiveness of our method.

One may claim that the fair prediction could be a function of the descendants of the sensitive attributes by balancing the observables, thus making the effect of the sensitive attribute canceled out. We agree with this opinion.  However, it concerns the structural equations, which are in general unfalsifiable even if interventional data for all variables is available.
In this paper, we do not address the assumptions on structural equations for achieving counterfactual fairness on MPDAGs. As a first step to obtaining a counterfactual fair predictor on an MPDAG, we focus on utilising the property of the causal graph (Level 1 in \citep{kusner2017counterfactual}) — making prediction with the definite non-descendants (and possible descendants) of the sensitive attribute. But our work can also be extended to the case where “cancel out“ could happen on an MPDAG. One possible idea is to learn latent variables by specifying the structural equations (or more relaxed, conditional distributions). However, due to the fact that an MPDAG represents a set of DAGs with different conditional distributions components and thus enjoy different latent space, the intuitive way to enumerate all DAGs is unrealistic. Addressing such issue on an MPDAG is an interesting future direction.

Following prior work on establishing counterfactual fairness \citep{zhang2017causal, chiappa2019path, chikahara2021learning, wu2019counterfactual}, we assumed no selection bias and the presence of confounders throughout this paper. 
Yet, another fundamental assumption in these earlier studies in causal modelling is that the causal graph is known, which offers a new chance for the bias induced by misspecifying the causal DAG. Our work gives the first method to achieve counterfactual fairness without requiring the causal DAG to be specified.
In the presence of selection bias and confounders on a causal DAG, a counterfactual fairness measure degenerates to demographic parity, which is discussed extensively by Fawkes et al. \citep{fawkes2021selection}. In this situation, it is considerably more difficult to provide a clear causal interpretation without specifying the causal DAG.
This gives rise to an exciting topic to research in future work. Exploring causal discovery algorithms to find the ground-truth causal graph in the presence of selection bias and confounders and then achieving counterfactual fairness on partially ancestral graphs \citep{richardson2002ancestral,zhang2008causal}, which would be significantly more difficult, is another possible future direction.



\section{Acknowledgement}

AZ was supported by Melbourne Research Scholarship from the University of Melbourne. 
SW was supported by ARC DE200101253.
TL was partially supported by Australian Research Council Projects DP180103424, DE-190101473, IC-190100031, DP-220102121, and FT-220100318.
BH was supported by NSFC Young Scientists Fund No. 62006202 and Guangdong Basic and Applied Basic Research Foundation No. 2022A1515011652.
KZ was partially supported by the National Institutes of Health (NIH) under Contract R01HL159805, by the NSF-Convergence Accelerator Track-D award \#2134901, by a grant from Apple Inc., and by a grant from KDDI Research Inc.. 
MG was supported by ARC DE210101624.

\clearpage
\bibliographystyle{plain}
\bibliography{reference}

\clearpage
\section*{Checklist}


\begin{enumerate}

\item For all authors...
\begin{enumerate}
  \item Do the main claims made in the abstract and introduction accurately reflect the paper's contributions and scope?
    \answerYes
  \item Did you describe the limitations of your work?
    \answerYes
  \item Did you discuss any potential negative societal impacts of your work?
    \answerNA{}
  \item Have you read the ethics review guidelines and ensured that your paper conforms to them?
    \answerYes
\end{enumerate}

\item If you are including theoretical results...
\begin{enumerate}
  \item Did you state the full set of assumptions of all theoretical results?
    \answerYes
        \item Did you include complete proofs of all theoretical results?
    \answerYes
\end{enumerate}

\item If you ran experiments...
\begin{enumerate}
  \item Did you include the code, data, and instructions needed to reproduce the main experimental results (either in the supplemental material or as a URL)?
    \answerYes
  \item Did you specify all the training details (e.g., data splits, hyperparameters, how they were chosen)?
    \answerYes
        \item Did you report error bars (e.g., with respect to the random seed after running experiments multiple times)?
    \answerYes
        \item Did you include the total amount of compute and the type of resources used (e.g., type of GPUs, internal cluster, or cloud provider)?
    \answerNA{}
\end{enumerate}

\item If you are using existing assets (e.g., code, data, models) or curating/releasing new assets...
\begin{enumerate}
  \item If your work uses existing assets, did you cite the creators?
    \answerYes
  \item Did you mention the license of the assets?
    \answerNA{}
  \item Did you include any new assets either in the supplemental material or as a URL?
    \answerNA{}
  \item Did you discuss whether and how consent was obtained from people whose data you're using/curating?
    \answerNA{}
  \item Did you discuss whether the data you are using/curating contains personally identifiable information or offensive content?
    \answerNA{}
\end{enumerate}

\item If you used crowdsourcing or conducted research with human subjects...
\begin{enumerate}
  \item Did you include the full text of instructions given to participants and screenshots, if applicable?
    \answerNA{}
  \item Did you describe any potential participant risks, with links to Institutional Review Board (IRB) approvals, if applicable?
    \answerNA{}
  \item Did you include the estimated hourly wage paid to participants and the total amount spent on participant compensation?
    \answerNA{}
\end{enumerate}

\end{enumerate}


\clearpage
\appendix
\section{Preliminaries} \label{Appendix: Preliminaries}


\textbf{Graph and Path.}
Let $p=\langle S=V_{0},..., V_{k}=T\rangle $ be a path in a graph $\mathcal{G}$, $p$ is a \textit{causal path} from $S$ to $T$ if $V_i \rightarrow V_{i+1}$ for all $0 \leq i \leq k-1 $. $p$ is a \textit{possibly causal path} from $S$ to $T$ if no edge $V_i \leftarrow V_{i+1}$ is in $\mathcal{G}$. Otherwise, $p$ is a \textit{non-causal path} in $\mathcal{G}$. A (causal, possibly causal, non-causal) cycle is a (causal, possibly causal, non-causal) path from a vertex to itself.

\textbf{Ancestral Relations.}
If there is $S \rightarrow T$ in $\mathcal{G}$, we say $S$ is a parent of $T$ and $T$ is a child of $S$, denoted by $pa(T,\mathcal{G})$ and $ch(S,\mathcal{G})$, respectively. If there is a causal path from $S$ to $T$, then we say $S$ is an ancestor of $T$ and $T$ is a descendant of $S$, denoted by $an(T,\mathcal{G})$ and $de(S,\mathcal{G})$. If there is a possibly causal path from $S$ to $T$, then we say $S$ is a possible ancestor of $T$ and $T$ is a possible descendant of $S$, denoted by $possAn(T,\mathcal{G})$ and $possDe(S,\mathcal{G})$. As a convention, we regard every node as an ancestor and a descendant of itself.

\textbf{MPDAGs Construction.} Borrowed from \citep{perkovic2017interpreting}, \cref{alg: ConstructMPDAG} summarizes, the way to construct the maximal PDAG $\mathcal{G'}$ from the maximal PDAG $\mathcal{G}$ and backgroud knowledge $\mathcal{B}$, by leveraging Meek's rule in \cref{fig: orientation_rule}. Specifically, here the background knowledge $\mathcal{B}$ is assumed to be the \textit{direct causal information} in the form $S \rightarrow T$, meaning that $S$ is a direct cause of $T$. If Algorithm \ref{alg: ConstructMPDAG} does not return FAIL, then the background knowledge $\mathcal{B}$ and returned maximal PDAG $\mathcal{G'}$ are consistent with the input maximal PDAG $\mathcal{G}$.

\begin{algorithm}
\caption{Construct MPDAG \citep{perkovic2017interpreting, meek1995causal}}
\label{alg: ConstructMPDAG}
\begin{algorithmic}[1]
\State {\bfseries Inputs:} MPDAG $\mathcal{G}$ and Background knowledge $\mathcal{B}$.
\State {\bfseries Output:} MPDAG $\mathcal{G'}$ or FAIL.
\State Let $\mathcal{G'}=\mathcal{G}$;
\While{$\mathcal{B} \neq \emptyset$}
    \State Select an edge $\{S \rightarrow T\}$ in $\mathcal{B}$;
    \State $\mathcal{B} = \mathcal{B} \backslash \{S \rightarrow T\}$;
    \If{$\{S-T\}$ OR $\{S\rightarrow T\}$ is in $\mathcal{G'}$}
        \State Orient $\{S\rightarrow T\}$ in $\mathcal{G'}$;
        \State Orienting edges in $\mathcal{G'}$ following the rules in Figure \ref{fig: orientation_rule} until no edge can be oriented;
    \Else
        \State FAIL;
    \EndIf
\EndWhile
\end{algorithmic}
\end{algorithm}

\begin{figure}[htp]
\begin{center}
\centerline{\includegraphics[width=0.45\columnwidth]{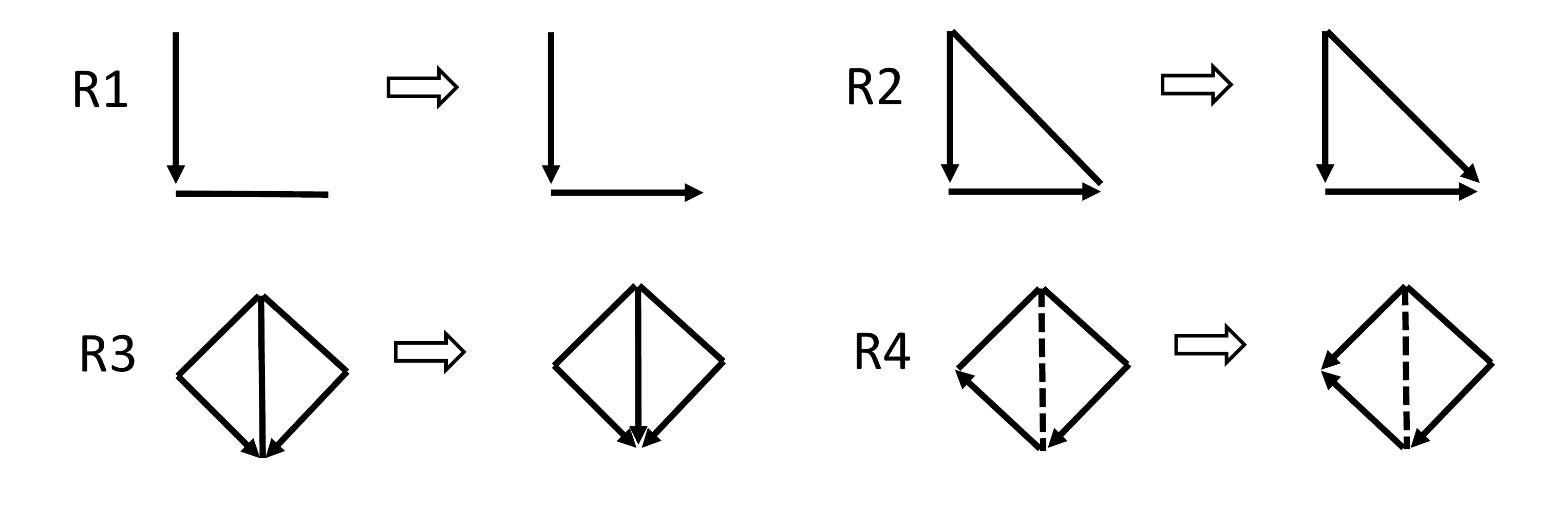}}
\caption{Meek's orientation rules: R1, R2, R3 and R4 \citep{meek1995causal}. For each rule, if the left-hand side graph is an induced subgraph of a PDAG $\mathcal{G}$, orient the undirected edge on it with the direction on the right-hand side.}
\label{fig: orientation_rule}
\end{center}
\end{figure}


\subsection{Existing results}

\begin{lemma}\citep[Lemma B.1]{perkovic2017interpreting} \label{lem: perkovic2017interpreting/Lemma_B.1}
    Let $p=\langle V_{1},..., V_{k}\rangle$ be a b-possibly causal definite status path in an MPDAG $\mathcal{G}$. If there is a node $i \in \{1,...,n-1\}$ such that $V_i \rightarrow V_{i+1}$, then $p(V_i, V_k)$ is a causal path in $\mathcal{G}$.
\end{lemma}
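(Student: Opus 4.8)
The plan is to prove the statement by induction on the position along the path, showing that the single forward orientation $V_i \rightarrow V_{i+1}$ forces every subsequent edge on $p(V_i, V_k)$ to point forward as well. Concretely, I would establish the inductive claim that $V_m \rightarrow V_{m+1}$ in $\mathcal{G}$ for every $m$ with $i \le m \le k-1$. The base case $m=i$ is exactly the hypothesis, and once the claim holds for this whole range the subpath $p(V_i, V_k)=\langle V_i,\dots,V_k\rangle$ is causal by the definition of a causal path. The degenerate case $i+1=k$ is immediate, since then $p(V_i,V_k)$ is the lone edge $V_i \rightarrow V_k$.

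For the inductive step I would assume $V_m \rightarrow V_{m+1}$ for some $i \le m \le k-2$ and argue that $V_{m+1}\rightarrow V_{m+2}$. Because $m+1 \le k-1$, the vertex $V_{m+1}$ is interior to $p$, hence of definite status, so it is either a collider or a definite non-collider on $p$. First I would rule out the collider case: a collider at $V_{m+1}$ would require the edge $V_{m+1} \leftarrow V_{m+2}$, but this is a backward edge between two path vertices, which is forbidden because $p$ is b-possibly causal (\cref{b-path}). Therefore $V_{m+1}$ must be a definite non-collider.

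Next I would exploit the precise form of the definite non-collider definition. The two conditions that can certify a definite non-collider are (a) at least one edge out of $V_{m+1}$ on $p$, or (b) the subpath has the shape $V_m - V_{m+1} - V_{m+2}$ with $V_m$ and $V_{m+2}$ non-adjacent. Condition (b) is impossible here, since the incoming edge $V_m \rightarrow V_{m+1}$ is directed rather than undirected. Hence condition (a) must hold. Of the two path-edges incident to $V_{m+1}$, the edge $V_m \rightarrow V_{m+1}$ points into $V_{m+1}$ and cannot supply the required out-edge, so the remaining edge must point away, giving $V_{m+1}\rightarrow V_{m+2}$. In particular the undirected alternative $V_{m+1} - V_{m+2}$ is excluded, because it would leave $V_{m+1}$ with no out-edge and fit neither branch of the non-collider definition. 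This closes the induction.

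The main obstacle, and the step needing the most care, is the case analysis at the definite non-collider: one must verify that an undirected edge immediately downstream of a directed edge genuinely contradicts \emph{definite} status, not merely one sufficient condition for it. The delicate observation is that the directedness of $V_m \rightarrow V_{m+1}$ simultaneously kills the collider option (via b-possible-causality) and the ``$V_m - V_{m+1} - V_{m+2}$'' branch of the non-collider definition, leaving the forward orientation $V_{m+1}\rightarrow V_{m+2}$ as the only status-consistent possibility. Handling the boundary indices correctly — treating $V_k$ as an endpoint so the induction terminates at $m=k-2$ yielding $V_{k-1}\rightarrow V_k$ — then completes the argument.
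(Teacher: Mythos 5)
Your proof is correct, and it is essentially the canonical argument: the paper itself imports this lemma from Perkovi\'c et al.\ \citep{perkovic2017interpreting} without reproducing a proof, and your induction---ruling out the collider case via b-possible causality and the undirected continuation via the two-branch definition of a definite non-collider, leaving $V_{m+1} \rightarrow V_{m+2}$ as the only definite-status-consistent orientation---is exactly the reasoning behind the cited result. You also correctly handle the statement's typo (the index range $\{1,\dots,n-1\}$ should read $\{1,\dots,k-1\}$) and the boundary case at $m=k-2$.
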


\begin{lemma}\citep[Lemma 3.6]{perkovic2017interpreting}
\label{lem: perkovic2017interpreting/Lemma_3.6}
    Let $S$ and $T$ be distinct nodes in an MPDAG $\mathcal{G}$. If $p$ is a b-possibly causal path from $S$ to $T$ in $\mathcal{G}$, then a subsequence $p^*$ of $p$ forms a b-possibly causal unshielded path from $S$ to $T$ in $\mathcal{G}$.
\end{lemma}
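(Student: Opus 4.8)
The plan is to prove the statement by extracting a \emph{shortest} b-possibly causal subsequence path of $p$ and showing that its minimality forces it to be unshielded. Concretely, I would consider the family $\mathcal{P}$ of all subsequences of $p$ that themselves form b-possibly causal paths from $S$ to $T$ in $\mathcal{G}$. This family is nonempty, since $p \in \mathcal{P}$, so I may choose $p^{*}=\langle S=W_{0}, W_{1}, \dots, W_{m}=T\rangle \in \mathcal{P}$ with the fewest vertices. I claim this $p^{*}$ is the desired b-possibly causal unshielded path from $S$ to $T$.

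The crucial observation, which I would establish first, is that being b-possibly causal is preserved under deleting vertices while retaining the induced order. By \cref{b-path}, a path is b-possibly causal exactly when no edge $V_i \leftarrow V_j$ with $i < j$ among its vertices occurs in $\mathcal{G}$, counting also edges not drawn on the path. This is a condition on \emph{ordered pairs} of path vertices only. Hence, if I restrict to any subset of the vertices of a b-possibly causal path and keep their relative order, then every ordered pair in the restriction is already an ordered pair of the original, so the forbidden backward orientation is still absent. Thus any ordered vertex-subset of $p^{*}$ that forms a path is automatically b-possibly causal.

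Next I would argue by contradiction that $p^{*}$ is unshielded. Suppose some triple $\langle W_{i-1}, W_i, W_{i+1}\rangle$ on $p^{*}$ is shielded, that is, $W_{i-1}$ and $W_{i+1}$ are adjacent in $\mathcal{G}$. Delete $W_i$ to form $p^{**}=\langle W_{0}, \dots, W_{i-1}, W_{i+1}, \dots, W_{m}\rangle$. Because $W_{i-1}$ and $W_{i+1}$ are adjacent, $p^{**}$ is a genuine (simple) path from $S$ to $T$; it is a subsequence of $p^{*}$ and hence of $p$; and by the observation above it is again b-possibly causal, since its ordered vertex pairs form a subset of those of $p^{*}$. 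In particular, the newly consecutive pair $(W_{i-1}, W_{i+1})$ was already an ordered pair of $p^{*}$, so $W_{i-1}\leftarrow W_{i+1}$ is excluded and the shortcut edge is either undirected or oriented forward. Therefore $p^{**}\in\mathcal{P}$ has strictly fewer vertices than $p^{*}$, contradicting minimality. Consequently no triple of $p^{*}$ is shielded, so $p^{*}$ is unshielded, completing the argument.

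The main obstacle is not the combinatorics of the shortcut but making precise that b-possibly-causality transfers to the shorter path, and here the subtlety lies entirely in the clause ``including the edge not on $p$'' in \cref{b-path}. It is precisely because the definition forbids backward edges between \emph{all} pairs of path vertices, not merely consecutive ones, that deleting an interior vertex can never introduce a violation: the only pair that becomes newly consecutive was already constrained in $p^{*}$. I would take care to record that $p^{**}$ remains simple (no repeated vertices, inherited from $p^{*}$) and that the shielded hypothesis supplies exactly the edge needed to close the resulting gap, so the deletion step is always legitimate whenever a shield is present.
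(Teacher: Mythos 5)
Your proof is correct, and it is essentially the same argument as the one in the source this paper cites: the paper does not prove this lemma itself but imports it verbatim as Lemma 3.6 of Perkovi\'c et al.\ \citep{perkovic2017interpreting}, whose proof likewise takes a shortest b-possibly causal subsequence of $p$ and shortcuts any shielded triple, relying—exactly as you observe—on the fact that the ``including the edge not on $p$'' clause in \cref{b-path} makes b-possible-causality a condition on all ordered vertex pairs and hence hereditary under passing to vertex-subsequences that still form paths.
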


Let $X$ be a variable in an MPDAG $\mathcal{G}$, $\mathbf{R} \subset{sib(X, \mathcal{G})}$, then we use $\mathcal{G}_{\mathbf{R} \rightarrow X}$ to denote the partially directed graph resulted by orienting $\mathbf{R} \rightarrow X$ and $X \rightarrow sib(X,\mathcal{G}) \backslash \mathbf{R}$ in $\mathcal{G}$.  Fang \& He \citep{fang2020ida} propose the following \cref{theo: fang2020ida/parental_set} to check the existence of $\mathcal{G}_{\mathbf{R} \rightarrow X}$.
\begin{theorem}\citep[Theorem 1]{fang2020ida}
\label{theo: fang2020ida/Theorem_1}
\label{theo: fang2020ida/parental_set}
    Let $\mathcal{G}$ be an MPDAG consistent with a CPDAG $\mathcal{G^*}$. For any vertex $X$ and $\mathbf{R}\subset{sib(X, \mathcal{G})}$, the following three statements are equivalent.
    \begin{itemize}[leftmargin=*]
    \itemsep0em
        \item There is a DAG $\mathcal{D} \in [\mathcal{G}]$ such that $pa(X,\mathcal{D})=\mathbf{R} \cup pa(X, \mathcal{G})$ and $ch(X,\mathcal{D}) = sib(X,\mathcal{G}) \cup ch(X,\mathcal{G}) \backslash \mathbf{R}$.
        \item Compared with $\mathcal{G}$, $\mathcal{G}_{\mathbf{R} \rightarrow X}$ does not introduce any new V-structure collided on X or any directed triangle containing $X$.
        \item The induced subgraph of $\mathcal{G}$ over $\mathbf{R}$ is complete, and there does not exist an $R \in \mathbf{R}$ and a $W \in adj(X, \mathcal{G}) \backslash (\mathbf{R} \cup pa(X, \mathcal{G}))$ such that $W \rightarrow R$.
    \end{itemize}
\end{theorem}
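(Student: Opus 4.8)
The plan is to prove the three statements equivalent via the cycle (i) $\Rightarrow$ (ii), (ii) $\Leftrightarrow$ (iii), (iii) $\Rightarrow$ (i), where (i), (ii), (iii) denote the three bulleted conditions in order. The workhorses throughout are two saturation facts about the MPDAG $\mathcal{G}$, which follow because Meek's rules R1 and R2 cannot fire on $\mathcal{G}$: \textbf{(R1-fact)} every sibling of $X$ is adjacent to every parent of $X$, since $P \to X - R$ with $P,R$ nonadjacent would force $X \to R$; and \textbf{(R2-fact)} no child $W$ of $X$ can have an edge $W \to R$ into a sibling $R$ of $X$, since $X \to W \to R$ with $X - R$ would force $X \to R$. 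I also use that $\mathcal{G}_{\mathbf{R}\to X}$ differs from $\mathcal{G}$ only on edges incident to $X$, and that all DAGs in $[\mathcal{G}]$ carry exactly the v-structures encoded by $\mathcal{G}$.

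For (i) $\Rightarrow$ (ii): a DAG $\mathcal{D}\in[\mathcal{G}]$ realizing the prescribed parents and children of $X$ agrees with $\mathcal{G}_{\mathbf{R}\to X}$ on every edge at $X$ and refines $\mathcal{G}$ elsewhere. A new v-structure collided on $X$ in $\mathcal{G}_{\mathbf{R}\to X}$ would then be a v-structure of $\mathcal{D}$ absent from $\mathcal{G}$, contradicting that $\mathcal{D}$ and $\mathcal{G}$ share v-structures; a directed triangle through $X$ would be a directed $3$-cycle of $\mathcal{D}$ (its non-$X$ edge inherited, directed, from $\mathcal{G}$), contradicting acyclicity. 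This is the easy direction.

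For (ii) $\Leftrightarrow$ (iii): I translate each clause of (ii) using the two saturation facts. New parents of $X$ come only from $\mathbf{R}$; by the R1-fact each element of $\mathbf{R}$ is already adjacent to every old parent, so the only possible new v-structure collided on $X$ is $R_1 \to X \leftarrow R_2$ with $R_1,R_2\in\mathbf{R}$ nonadjacent. Hence ``no new v-structure on $X$'' is equivalent to ``$\mathbf{R}$ induces a complete subgraph.'' Likewise, a directed triangle through $X$ must read $X\to W\to R\to X$ with the edge $W\to R$ inherited from $\mathcal{G}$, $R\in pa(X)_{\mathcal{G}_{\mathbf{R}\to X}}$, $W\in ch(X)_{\mathcal{G}_{\mathbf{R}\to X}}$. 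Checking cases with the R2-fact rules out $R\in pa(X,\mathcal{G})$ and $W\in ch(X,\mathcal{G})$ entirely, leaving exactly $R\in\mathbf{R}$ and $W\in sib(X,\mathcal{G})\setminus\mathbf{R} = adj(X,\mathcal{G})\setminus(\mathbf{R}\cup pa(X,\mathcal{G})\cup ch(X,\mathcal{G}))$; since the excluded children cannot carry such an edge anyway, this is precisely the second clause of (iii).

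The main work, and the main obstacle, is (iii) $\Rightarrow$ (i): constructing an explicit DAG. The plan is to take $\mathcal{G}_{\mathbf{R}\to X}$, which by (ii) already carries exactly the v-structures of $\mathcal{G}$ and no directed triangle at $X$, and close it under Meek's rules to obtain a partially directed graph $\mathcal{G}'$. I would argue the closure never orients an edge both ways and never creates a directed cycle, so $\mathcal{G}'$ is a genuine MPDAG with $[\mathcal{G}']\subseteq[\mathcal{G}]$ in which $X$ already has its final parent and child sets; extending $\mathcal{G}'$ to a DAG by the standard Dor--Tarsi/Meek procedure then yields the desired $\mathcal{D}\in[\mathcal{G}]$. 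The delicate point is acyclicity, because orienting the edges at $X$ and propagating could a priori produce a directed cycle through $X$ longer than a triangle, which (iii) does not forbid outright. I would dispose of this by taking a minimal such cycle $X\to A_1\to\cdots\to A_m\to X$: its endpoints $A_1,A_m$ must be nonadjacent, since an $A_1$--$A_m$ edge would yield either a triangle at $X$ (contradicting (iii)) or a directed cycle inside $\mathcal{G}$ (contradicting acyclicity of the MPDAG); and then the causal/possibly-causal path correspondence of \cref{lem: perkovic2017interpreting/Lemma_B.1}, together with R2, shows the internal directed path would already have forced one edge at $X$ to the opposite orientation in $\mathcal{G}$, a contradiction. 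Verifying that this closure introduces neither an orientation conflict nor a spurious v-structure away from $X$ is the technically heaviest step, and is where I expect the bulk of the effort to lie.
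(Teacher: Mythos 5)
First, a framing point: the paper does not prove this statement at all. It is imported verbatim as \citep[Theorem 1]{fang2020ida} in the appendix of existing results, so there is no in-paper proof to compare your attempt against; your proposal has to be judged on its own merits against what a full proof (as in Fang \& He, 2020) requires.

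On those merits, your directions (i) $\Rightarrow$ (ii) and (ii) $\Leftrightarrow$ (iii) are sound and essentially complete. The two saturation facts are correct consequences of $\mathcal{G}$ being closed under Meek's rules, the reduction of ``no new v-structure collided on $X$'' to completeness of $\mathbf{R}$ is right (R1-closure makes every $R \in \mathbf{R}$ adjacent to every old parent), and your case analysis showing a directed triangle at $X$ must have the form $X \to W \to R \to X$ with $R \in \mathbf{R}$ and $W \in sib(X,\mathcal{G}) \setminus \mathbf{R}$ is correct, including the observation that children of $X$ cannot carry an arrow into a sibling of $X$.

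The genuine gap is in (iii) $\Rightarrow$ (i), which is the entire substance of the theorem, and you have effectively acknowledged rather than closed it. Two specific problems. First, your minimal-cycle argument only addresses directed cycles through $X$, but it invokes \cref{lem: perkovic2017interpreting/Lemma_B.1} on the internal path $A_1 \to \cdots \to A_m$; that lemma applies to paths in the MPDAG $\mathcal{G}$, whereas the arrows on your cycle live in the Meek closure $\mathcal{G}'$ of $\mathcal{G}_{\mathbf{R}\to X}$ and may have been created by the propagation itself, so the transfer back to $\mathcal{G}$ is not licensed without an induction over rule applications --- precisely the step you defer. Second, even granting no two-way orientations and no directed cycles in the closure, this does not by itself yield $[\mathcal{G}'] \neq \emptyset$: nonemptiness requires an extendability argument (one cannot invoke Meek's consistency machinery here without circularity, since consistency of the orientations $\mathbf{R} \to X$ with the class is exactly statement (i)). The standard route, and the one the cited proof takes, is to exploit chordality of the chain component of $X$ and construct the witnessing DAG directly from a perfect elimination ordering in which $\mathbf{R}$ precedes $X$ and $sib(X,\mathcal{G}) \setminus \mathbf{R}$ follows, using clause (iii) to show such an ordering exists and creates no new v-structure and no conflict with the pre-existing directed edges. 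Without either that construction or a completed soundness induction for the closure, the hard direction remains unproved.
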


\begin{definition} [Critical Set] \citep[Definition 2]{fang2020ida} 
\label{def: fang2020ida/Critical Set}
    Let $\mathcal G^*$ be a CPDAG. $S$ and $T$ are two distinct vertices in $\mathcal G^*$. The critical set of $S$ with respect to $T$ in $\mathcal G^*$ consists of all adjacent vertices of $S$ lying on at least one chordless possibly causal path from $S$ to $T$.
\end{definition}

\begin{theorem} \citep[Theorem 1]{fang2022local}
\label{theo: fang2021local/CPDAG}
    Suppose that $\mathcal{G^*}$ is a CPDAG, $S$ and $T$ are two distinct vertices in $\mathcal{G^*}$, and $\mathbf{C}$ is the critical set of $S$ with respect to $T$ in $\mathcal{G^*}$. Then, $T$ is a definite descendant of $S$ if and only if $\mathbf{C} \cap{ch(S, \mathcal G^*)} \ne \emptyset$, or $\mathbf{C}$ is non-empty and induces an incomplete subgraph of $\mathcal{G^*}$.
\end{theorem}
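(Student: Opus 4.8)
The plan is to determine, uniformly over the Markov equivalence class $[\mathcal{G}^*]$, exactly when every consistent DAG $\mathcal{D}$ places a child of $S$ inside the critical set $\mathbf{C}$, and then to read this condition off from $\mathcal{G}^*$ itself. Since a CPDAG is an MPDAG with empty background knowledge, the quickest route is to specialise \cref{lem: definite ancestral relations in Markov equivalent DAGs} and \cref{lem: graphical characteristic of critical set} to $\mathcal{G}=\mathcal{G}^*$: the first gives that $T$ is a definite descendant of $S$ iff $\mathbf{C}\cap ch(S,\mathcal{D})\neq\emptyset$ for all $\mathcal{D}\in[\mathcal{G}^*]$, while the second gives that $\mathbf{C}\cap ch(S,\mathcal{D})=\emptyset$ for some $\mathcal{D}$ iff $\mathbf{C}=\emptyset$ or $\mathbf{C}$ induces a complete subgraph with $\mathbf{C}\cap ch(S,\mathcal{G}^*)=\emptyset$. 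Negating the second condition by De Morgan yields exactly ``$\mathbf{C}\cap ch(S,\mathcal{G}^*)\neq\emptyset$, or $\mathbf{C}$ is non-empty and induces an incomplete subgraph,'' which is the claim. Because this CPDAG statement is really the engine behind those lemmas, I would also give the following direct, self-contained argument.

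For the ``if'' direction I would fix an arbitrary $\mathcal{D}\in[\mathcal{G}^*]$ and show $T\in de(S,\mathcal{D})$. If $\mathbf{C}\cap ch(S,\mathcal{G}^*)\neq\emptyset$, some $\alpha$ with $S\to\alpha$ in $\mathcal{G}^*$ lies on a chordless possibly causal path $p$ to $T$; $p$ is of definite status by \cref{lem: ds and chordless} and, being chordless, b-possibly causal, so \cref{lem: perkovic2017interpreting/Lemma_B.1} makes the $\alpha$-to-$T$ tail of $p$ causal, and $S\to\alpha\to\cdots\to T$ is then a causal path present in every DAG of $[\mathcal{G}^*]$. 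Otherwise $\mathbf{C}$ is non-empty, incomplete, and disjoint from $ch(S,\mathcal{G}^*)$, so its members are siblings of $S$; choose non-adjacent $\alpha,\gamma\in\mathbf{C}$. If in $\mathcal{D}$ some $\beta\in\mathbf{C}$ has $S\to\beta$, the same path argument gives $T\in de(S,\mathcal{D})$; if instead every member of $\mathbf{C}$ is oriented into $S$, then $\alpha\to S\leftarrow\gamma$ is an unshielded collider in $\mathcal{D}$ that is absent from $\mathcal{G}^*$, contradicting that all DAGs in $[\mathcal{G}^*]$ share the same v-structures. Either way $T\in de(S,\mathcal{D})$, so $T$ is a definite descendant.

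For the ``only if'' direction I would prove the contrapositive by building a witness DAG in which $T$ is not a descendant of $S$. If $\mathbf{C}=\emptyset$ there is no chordless possibly causal path from $S$ to $T$, so by the reduction \cref{lem: perkovic2017interpreting/Lemma_3.6} no possibly causal path exists at all and $T\notin possDe(S,\mathcal{G}^*)$, i.e. $T$ is a definite non-descendant. If instead $\mathbf{C}$ is non-empty, complete, and disjoint from $ch(S,\mathcal{G}^*)$, I would orient the whole complete set $\mathbf{C}$ into $S$: \cref{theo: fang2020ida/parental_set} then supplies a DAG $\mathcal{D}\in[\mathcal{G}^*]$ with $pa(S,\mathcal{D})=\mathbf{C}\cup pa(S,\mathcal{G}^*)$ and all remaining siblings turned into children. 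Were $T\in de(S,\mathcal{D})$, a shortest causal path from $S$ to $T$ in $\mathcal{D}$ would be chordless, since a chord either shortcuts it or forces a directed cycle; as $\mathcal{D}$ and $\mathcal{G}^*$ share a skeleton it is a chordless possibly causal path in $\mathcal{G}^*$, so its first vertex after $S$, say $W'$, lies in $\mathbf{C}$, yet $W'\to S$ by construction while $S\to W'$ on the causal path --- a contradiction. Hence $T\notin de(S,\mathcal{D})$.

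The hard part is the ``only if'' construction. First, invoking \cref{theo: fang2020ida/parental_set} requires verifying its hypotheses for the whole complete critical set, in particular that no adjacent non-parent $W$ of $S$ forces an edge $W\to R$ into some $R\in\mathbf{C}$; this is precisely the bookkeeping that \cref{lem: graphical characteristic of critical set} isolates. Second, one must exclude \emph{every} causal path from $S$ to $T$ in the witness DAG rather than only the chordless ones, which is exactly where passing to a shortest (hence chordless) causal path and using the shared-skeleton property are needed to pin the path's first step inside $\mathbf{C}$. By comparison, the ``if'' direction is routine once \cref{lem: perkovic2017interpreting/Lemma_B.1} and the invariance of v-structures across $[\mathcal{G}^*]$ are available.
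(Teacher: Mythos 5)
Your proposal is correct and follows essentially the same route as the paper: the paper states this CPDAG result as a cited fact (Theorem 1 of Fang et al.) and proves only its MPDAG generalization, \cref{theo}, by combining \cref{lem: definite ancestral relations in Markov equivalent DAGs} with \cref{lem: graphical characteristic of critical set} (the latter via \cref{theo: fang2020ida/parental_set}), and your argument is exactly that two-lemma decomposition specialized to $\mathcal{G}=\mathcal{G}^*$, with the direct version unpacking the same steps (orienting a complete critical set into $S$ to build the witness DAG, and a shortest-causal-path/shared-skeleton argument to pin the first step inside $\mathbf{C}$). The only cosmetic difference is that for CPDAGs the orientation step can be justified by the simpler \cref{lem: maathuis2009estimating} rather than the full MPDAG machinery, but invoking the stronger tool is harmless.
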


\begin{lemma} \citep[Lemma 3.1]{maathuis2009estimating}
\label{lem: maathuis2009estimating}
    Given a CPDAG $\mathcal{G^*}$, a variable $X$, and $R \subset{sib(X, \mathcal{G^*})}$, orienting $R \rightarrow X$ for each $R \in \mathbf{R}$ and $X \rightarrow W$ for each $W \in sib(S,\mathcal{G^*}) \backslash \mathbf{R}$ is consistent with $\mathcal{G^*}$ if and only if new orientations do not introduce v-structures collided on $X$.
\end{lemma}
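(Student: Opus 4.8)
The plan is to read ``consistent with $\mathcal{G^*}$'' as the statement that there is a DAG $\mathcal{D}\in[\mathcal{G^*}]$ with $pa(X,\mathcal{D})=pa(X,\mathcal{G^*})\cup\mathbf{R}$ and $ch(X,\mathcal{D})=ch(X,\mathcal{G^*})\cup(sib(X,\mathcal{G^*})\setminus\mathbf{R})$, and to use throughout the standard fact that every member of $[\mathcal{G^*}]$ has the same skeleton and the same v-structures as $\mathcal{G^*}$. Necessity is then immediate: whether a triple $\langle A,X,B\rangle$ is an unshielded collider at $X$ depends only on the two edges incident to $X$ and on the adjacency of $A$ and $B$, so any collider at $X$ produced by the prescribed orientation is a collider at $X$ in the realizing DAG $\mathcal{D}$, hence already present in $\mathcal{G^*}$; therefore consistency forbids a new v-structure collided on $X$.

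For sufficiency I would invoke \cref{theo: fang2020ida/parental_set} with $\mathcal{G}=\mathcal{G^*}$. Its statements (1) and (2) are equivalent, so a realizing DAG exists precisely when orienting $\mathbf{R}\rightarrow X$ and $X\rightarrow sib(X,\mathcal{G^*})\setminus\mathbf{R}$ creates neither a new v-structure collided on $X$ nor a new directed triangle containing $X$. It therefore suffices to show that for an honest CPDAG the directed-triangle clause is automatic; the lemma then follows, because the one remaining condition is exactly ``no new v-structure collided on $X$''.

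Ruling out a new directed triangle is the step I expect to be the crux, and I would settle it using the chain-graph structure of essential graphs. Such a triangle must be a directed $3$-cycle $X\rightarrow Y\rightarrow Z\rightarrow X$ that uses at least one newly oriented edge at $X$, and there are three shapes. If both $Y$ and $Z$ are siblings of $X$, then $X-Y$ and $X-Z$ are undirected, so $X,Y,Z$ lie in a single chain component of $\mathcal{G^*}$; since a CPDAG is a chain graph whose chain components are undirected, the adjacent pair $Y,Z$ must be joined by an undirected edge, contradicting the directed $Y\rightarrow Z$ the cycle requires. If instead the cycle uses the old child edge $X\rightarrow Y$ with $Y\in ch(X,\mathcal{G^*})$ and the new parent edge $Z\rightarrow X$ with $Z\in\mathbf{R}$, then the directed path $X\rightarrow Y\rightarrow Z$ together with the undirected $X-Z$ would already have forced $X\rightarrow Z$ by Meek's rule R2, contradicting $Z\in sib(X,\mathcal{G^*})$; the mirror shape, with the new child edge $X\rightarrow Y$ ($Y\in sib(X,\mathcal{G^*})\setminus\mathbf{R}$) and the old parent edge $Z\rightarrow X$ ($Z\in pa(X,\mathcal{G^*})$), is excluded identically via the path $Y\rightarrow Z\rightarrow X$ and R2. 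As $\mathcal{G^*}$ is already closed under Meek's rules, none of these configurations can occur, so no new directed triangle is ever created and the equivalence follows.

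As a self-contained alternative to citing \cref{theo: fang2020ida/parental_set}, I would build $\mathcal{D}$ directly: orient $X$'s incident edges as prescribed and extend the remaining, necessarily chordal, undirected subgraph by a perfect elimination ordering in which the neighbours of $X$ placed above it are exactly $\mathbf{R}$. This placement is legitimate precisely because the no-new-v-structure hypothesis makes $\mathbf{R}$ a clique that is completely joined to $pa(X,\mathcal{G^*})$. The main difficulty on this route is to certify that the resulting orientation is globally acyclic and introduces no v-structure \emph{anywhere}, not merely at $X$; I would discharge it with the same chain-component bookkeeping, which guarantees that directed edges run only between components and therefore cannot close a cycle through the freshly oriented edges at $X$.
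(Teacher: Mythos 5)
The paper never actually proves this statement: \cref{lem: maathuis2009estimating} is imported verbatim from Maathuis et al.\ (2009) as background, cited only to note that \cref{theo: fang2021local/CPDAG} rests on it for CPDAGs the way \cref{theo: fang2020ida/parental_set} underpins the MPDAG case, so there is no in-paper proof to compare against. Judged on its own merits, your argument is correct. Your reading of ``consistent'' matches statement (1) of \cref{theo: fang2020ida/parental_set} (and silently fixes the paper's typo $sib(S,\mathcal{G^*})$ for $sib(X,\mathcal{G^*})$); the necessity direction is the standard Verma--Pearl invariance of skeleton and v-structures across $[\mathcal{G^*}]$, applied correctly; and for sufficiency you reduce to \cref{theo: fang2020ida/parental_set} with $\mathcal{G}=\mathcal{G^*}$, so that everything hinges on showing the directed-triangle clause is vacuous for honest CPDAGs. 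Your case analysis there is complete --- the cycle $X \rightarrow Y \rightarrow Z \rightarrow X$ can use two new edges at $X$, an old child edge with a new parent edge, or a new child edge with an old parent edge, while the both-old configuration would be a directed cycle already in $\mathcal{G^*}$ --- and each case is correctly discharged: the first by the chain-graph property that an edge joining two vertices of one chain component must be undirected (a directed $Y \rightarrow Z$ plus the undirected path $Z - X - Y$ would form a partially directed cycle), the other two by closedness of $\mathcal{G^*}$ under Meek's rule R2. Two caveats are worth flagging. First, your route inverts history: Fang and He's theorem is the later MPDAG generalization of Maathuis et al.'s lemma, so as a freestanding proof of the 2009 result it is non-circular only if \cref{theo: fang2020ida/parental_set} is taken as given --- which is precisely how this paper treats it, so within the paper's quoted toolkit your derivation is legitimate, and the directed-triangle elimination is genuine added content. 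Second, your ``self-contained alternative'' via a perfect elimination ordering is in spirit the original argument of Maathuis et al., but as you concede it remains a sketch (global acyclicity and the absence of new v-structures away from $X$ are asserted rather than established), so the Fang--He reduction should be regarded as your actual proof, and it stands.
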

\section{Detailed proofs}
\subsection{Proof of \cref{lem: definite ancestral relations in Markov equivalent DAGs}}
\label{proof_lem: definite ancestral relations in Markov equivalent DAGs}
\begin{proof}
    First, we prove the sufficiency. Let $\mathcal{D}$ be any underlying DAG $\mathcal{D} \in [\mathcal{G}]$, and $\mathbf{C}$ be the critical set of $S$ with respect to $T$ in $\mathcal{G}$. Suppose $C \in \mathbf{C}$ is a child of $S$ in $\mathcal{D}$, that is $S \rightarrow C$ in $\mathcal{D}$. By the definition of critical set, $C$ lies on a chordless b-possibly causal path $\pi$ from $S$ to $T$ in $\mathcal{G}$. Since $S \rightarrow C$ in $\mathcal{D}$, by \cref{lem: perkovic2017interpreting/Lemma_B.1}, the corresponding path $\pi$ in $\mathcal{D}$ is directed. Therefore, $S$ is an ancestor of $T$ in the underlying DAG. 
    
    Next, we prove the necessity: For another direction, suppose that $S$ is a definite ancestor of $T$ in any underlying DAG $\mathcal{D}$. Let $\pi$ be the shortest causal path from $S$ to $T$ in $\mathcal{D}$, then the corresponding path of $\pi$ in $\mathcal{G}$ is a chordless b-possibly causal path, since if $\pi$ has any chord in $\mathcal{G}$, $\pi$ in $\mathcal{D}$ cannot be the shortest path. Denote the vertex adjacent to $S$ on $\pi$ be $C$, then $C \in \mathbf{C}$ and $C$ is a child of $S$ in the DAG $\mathcal{D}$. Therefore, if $T$ is definite descendant of $S$ in $\mathcal{G}$, then $\mathbf{C}$ always contains a child of $S$ in every DAG $\mathcal{D} \in [\mathcal{G}]$.
\end{proof}

\subsection{Proof of \cref{lem: graphical characteristic of critical set}}
\label{proof_lem: graphical characteristic of critical set}
The proof idea of \cref{lem: graphical characteristic of critical set} is to find the graphical condition in an MPDAG that when $\mathbf{C} \neq \emptyset$, all vertices in $\mathbf{C}$ or a superset of $\mathbf{C}$ can be oriented to $X$ in some Markov equivalent DAG, by utilizing locally valid orientation rules for MPDAGs (\cref{theo: fang2020ida/Theorem_1}). The rules are to check whether a set of variables in an MPDAG can be the parents of a given target. 

In this section, we will first introduce some technical lemmas, and then prove \cref{lem: graphical characteristic of critical set} in \cref{sec: Graphical criterion}.

\subsubsection{Technical lemmas}
In this section, we introduce some technical lemmas that are useful in the proof of \cref{lem: graphical characteristic of critical set}.

\begin{lemma}\label{lem: complete2complete}
    Let $\mathcal{G}$ be an MPDAG. For any vertex $X$ in $\mathcal{G}$ and $\mathbf{R} \subset sib(X,\mathcal{G})$, if $\mathbf{R}$ induces a complete subgraph of $\mathcal{G}$ and there exists a $R \in \mathbf{R}$ and a $W \in {adj(X,\mathcal{G}) \backslash{(\mathbf{R} \cup pa(X,\mathcal{G}))}}$ such that $W \rightarrow R$, then $\mathbf{R} \cup W$ induces a complete subgraph of $\mathcal{G}$ and $W \in {sib(X,\mathcal{G}) \backslash{\mathbf{R}}}$.
\end{lemma}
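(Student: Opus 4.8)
The plan is to exploit that an MPDAG is closed under Meek's orientation rules R1--R4 (\cref{fig: orientation_rule}): any undirected edge that one of these rules would orient must already be oriented in $\mathcal{G}$, so manufacturing such a forced orientation on an edge that is undirected in $\mathcal{G}$ yields a contradiction. I would prove the two conclusions in order, first that $W$ is a sibling of $X$, then that $W$ is adjacent to every element of $\mathbf{R}$, since the first fact is needed as input to the second.

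First I would show $W \in sib(X,\mathcal{G}) \setminus \mathbf{R}$. Because $W \in adj(X,\mathcal{G})$ but $W \notin pa(X,\mathcal{G})$ and $W \notin \mathbf{R}$, the edge between $X$ and $W$ is either $X \rightarrow W$ or $X - W$, and it suffices to rule out $X \rightarrow W$. If $X \rightarrow W$ held, then combined with the hypothesis $W \rightarrow R$ and the undirected edge $X - R$ (which holds since $R \in \mathbf{R} \subset sib(X,\mathcal{G})$), rule R2 applied to $X \rightarrow W \rightarrow R$ with $X - R$ would force $X \rightarrow R$, contradicting that $X - R$ is undirected. Hence $X - W$, so $W \in sib(X,\mathcal{G})$, and since $W \notin \mathbf{R}$ we obtain $W \in sib(X,\mathcal{G}) \setminus \mathbf{R}$.

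Next, using that $W$ is now a sibling of $X$, I would show $W$ is adjacent to each $R' \in \mathbf{R}$. As $\mathbf{R}$ already induces a complete subgraph and $W \rightarrow R$ makes $W$ adjacent to $R$, it is enough to take an arbitrary $R' \in \mathbf{R} \setminus \{R\}$, assume for contradiction that $W$ and $R'$ are not adjacent, and case on the orientation of the edge between $R$ and $R'$; this edge exists because $\mathbf{R}$ is complete, but it need not be undirected. If $R - R'$, then R1 applied to $W \rightarrow R - R'$ with $W, R'$ not adjacent forces $R \rightarrow R'$, a contradiction. If $R \rightarrow R'$, then R4 applied to the undirected edge $X - R'$, using the undirected edge $X - W$, the directed chain $W \rightarrow R \rightarrow R'$, and the edge $X - R$, with $W, R'$ not adjacent, forces $X \rightarrow R'$, contradicting the undirected edge $X - R'$. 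If $R' \rightarrow R$, then $W \rightarrow R \leftarrow R'$ together with the undirected edges $X - W$, $X - R$, $X - R'$ and the non-adjacency of $W$ and $R'$ triggers R3 at $R$, forcing $X \rightarrow R$, again contradicting an undirected edge. Every case is contradictory, so $W$ is adjacent to $R'$; together with the completeness of $\mathbf{R}$ this gives that $\mathbf{R} \cup \{W\}$ induces a complete subgraph.

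The main obstacle is the second part: one must notice that ``$\mathbf{R}$ induces a complete subgraph'' does \emph{not} force the internal edges of $\mathbf{R}$ to be undirected, since background knowledge may orient $R - R'$. Consequently no single Meek rule suffices, and the crux of the argument is to pair each of the three possible orientations of the $R$--$R'$ edge with the rule (R1, R4, or R3, respectively) that produces a forbidden orientation on one of the undirected edges incident to $X$.
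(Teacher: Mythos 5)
Your proof is correct and follows essentially the same route as the paper's: rule out $X \rightarrow W$ via Meek's R2 applied to $X \rightarrow W \rightarrow R$ with $X - R$, then for a putative non-adjacent $R' \in \mathbf{R}$ case on the orientation of the $R$--$R'$ edge and derive a forbidden orientation of an undirected edge using R1, R4, or R3 respectively. The only difference is presentational — you establish $W \in sib(X,\mathcal{G})$ up front rather than folding the child case into the contradiction argument — which is a slightly cleaner organization of the identical argument.
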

\begin{proof}
     Suppose for a contradiction that $\mathbf{R} \cup W$ induces an incomplete subgraph of $\mathcal{G}$. Since $\mathbf{R}$ induces a complete subgraph of $\mathcal{G}$, that means some vertex $R' \in \mathbf{R}$ is not adjacent with $W$. As $W \in {adj(X,\mathcal{G}) \backslash{(\mathbf{R} \cup pa(X,\mathcal{G}))}}$, the node $W$ can be a child or a sibling of $X$.
     \begin{enumerate}[leftmargin=6.5mm]
        \itemsep0em
         \item [(1)] For the first case, as in \cref{fig: forproof_case1}, if $W$ is a child of $X$ in $\mathcal{G}$, since $W \rightarrow R$, then $X - R$ can be oriented by Rule 2 in Meek's criteria as $X \rightarrow R$, which contradicts that $R$ is a sibling of $X$;
         \item [(2)] For the second case, if $W$ is a sibling of $X$ in $\mathcal{G}$ and $W \rightarrow R$, then the edge between $R$ and $R'$ can not be an undirected edge in $\mathcal{G}$, since $R'$ and $W$ are not adjacent. $R \rightarrow R'$ can be oriented by Meek's Rule 2, or $R \leftarrow R'$ and $\langle R', R, W \rangle$ is a v-structure collided on $R$. For the former case, if $R \rightarrow R'$ is in $\mathcal{G}$ as in \cref{fig: forproof_case2}, then $X \rightarrow R'$ can be oriented by Rule 4, which contradicts that $R'$ is a sibling of $X$ in $\mathcal{G}$. For the latter case in \cref{fig: forproof_case3}, if $W$ is a sibling of $X$ in $\mathcal{G}$ and $W \rightarrow R \leftarrow R'$, since $R'$ and $W$ are not adjacent, then $X - R$ can be oriented as $X \rightarrow R$ in $\mathcal{G}$ by Meek's Rule 3, which contradicts that $R$ is a sibling of $X$. Therefore, there does not exist any vertex $R' \in \mathbf{R}$ not adjacent with $W$, so $\mathbf{R} \cup W$ induces a complete subgraph of $\mathcal{G}$.
     \end{enumerate}
\end{proof} 

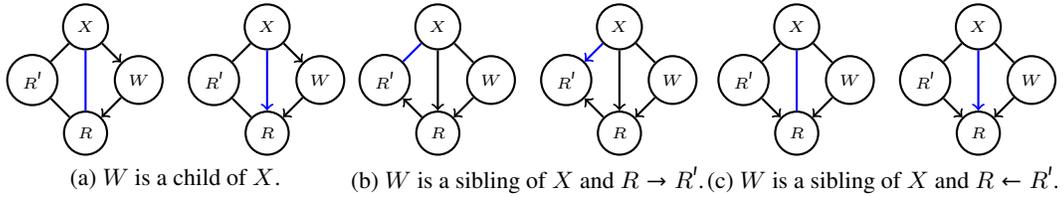
\begin{figure}[ht]
\centering
\subfloat[$W$ is a child of $X$.]{%
    \label{fig: forproof_case1}
    \begin{tikzpicture}[node distance={10mm}, thick, main/.style = {draw, circle}] 
    \tikzstyle{every node}=[font=\tiny]
    \node[main] (1) {$X$}; 
    \node[main] (2) [below left of=1] {$R'$};
    \node[main] (3) [below right of=1] {$W$}; 
    \node[main] (4) [below right of=2] {$R$};
    \node[main] (6) [right of=3] {$R'$};
    \node[main] (5) [above right of=6]{$X$}; 
    \node[main] (7) [below right of=5] {$W$}; 
    \node[main] (8) [below right of=6] {$R$};
    \draw(1) -- (2); 
    \draw[->] (1) -- (3);
    \draw[blue] (1) -- (4);
    \draw[->] (3) -- (4);
    \draw(4) -- (2); 
    \draw(5) -- (6);
    \draw[->] (5) -- (7);
    \draw[blue, ->] (5) -- (8);
    \draw[->] (7) -- (8);
    \draw(8) -- (6);
    \end{tikzpicture} }
\subfloat[$W$ is a sibling of $X$ and $R \rightarrow R'$.]{%
    \label{fig: forproof_case2}
    \begin{tikzpicture}[node distance={10mm}, thick, main/.style = {draw, circle}] 
    \tikzstyle{every node}=[font=\tiny]
    \node[main] (1) {$X$}; 
    \node[main] (2) [below left of=1] {$R'$};
    \node[main] (3) [below right of=1] {$W$}; 
    \node[main] (4) [below right of=2] {$R$};
    \node[main] (6) [right of=3] {$R'$};
    \node[main] (5) [above right of=6]{$X$}; 
    \node[main] (7) [below right of=5] {$W$}; 
    \node[main] (8) [below right of=6] {$R$};
    \draw[blue] (1) -- (2);
    \draw (1) -- (3);
    \draw[->] (1) -- (4);
    \draw[->] (3) -- (4);
    \draw[->] (4) -- (2);
    \draw[blue, ->] (5) -- (6);
    \draw (5) -- (7);
    \draw[->] (5) -- (8);
    \draw[->] (7) -- (8);
    \draw[->] (8) -- (6);
    \end{tikzpicture} }
\subfloat[$W$ is a sibling of $X$ and $R \leftarrow R'$.]{
    \label{fig: forproof_case3}
    \begin{tikzpicture}[node distance={10mm}, thick, main/.style = {draw, circle}] 
    \tikzstyle{every node}=[font=\tiny]
    \node[main] (1) {$X$}; 
    \node[main] (2) [below left of=1] {$R'$};
    \node[main] (3) [below right of=1] {$W$}; 
    \node[main] (4) [below right of=2] {$R$};
    \node[main] (6) [right of=3] {$R'$};
    \node[main] (5) [above right of=6]{$X$}; 
    \node[main] (7) [below right of=5] {$W$}; 
    \node[main] (8) [below right of=6] {$R$};
    \draw (1) -- (2);
    \draw (1) -- (3);
    \draw[blue] (1) -- (4);
    \draw[->] (2) -- (4);
    \draw[->] (3) -- (4);
    \draw (5) -- (6);
    \draw (5) -- (7);
    \draw[blue, ->] (5) -- (8);
    \draw[->] (6) -- (8);
    \draw[->] (7) -- (8);
    \end{tikzpicture} }
\caption{The three cases discussed in the proof of \cref{lem: complete2complete}. For each case, the the blue undirected edge on the left-hand side subgraph will be oriented as the blue edge on the right-hand side subgraph.}
\label{fig: forproof}
\end{figure}

\begin{lemma}\label{lem: complex2simple}
In an MPDAG $\mathcal{G}$, for any vertex $X$, there exists $\mathbf{H} \subseteq{sib(X,\mathcal{G})}$ that induces a complete subgraph of $\mathcal{G}$ if and only if there exists some $\mathbf{R}$ that $\mathbf{H} \subseteq \mathbf{R} \subseteq{sib(X,\mathcal{G})}$, such that there is a DAG $\mathcal{D} \in [\mathcal{G}]$ that $pa(X,\mathcal{D})=\mathbf{R} \cup pa(X,\mathcal{G})$ and $ch(X,\mathcal{D})=sib(X,\mathcal{G}) \cup ch(X, \mathcal{G}) \backslash \mathbf{R}$.
\end{lemma}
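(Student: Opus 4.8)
The plan is to use the equivalence between statements (1) and (3) of \cref{theo: fang2020ida/Theorem_1} as the bridge between the DAG-orientation condition and a purely graphical condition on $\mathbf{R}$, and to use \cref{lem: complete2complete} to drive an enlargement argument. Throughout I treat $\mathbf{H}\subseteq sib(X,\mathcal{G})$ as fixed, so the claim is that $\mathbf{H}$ induces a complete subgraph of $\mathcal{G}$ if and only if some $\mathbf{R}$ with $\mathbf{H}\subseteq\mathbf{R}\subseteq sib(X,\mathcal{G})$ can serve as the sibling-parents of $X$ in some $\mathcal{D}\in[\mathcal{G}]$.

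The easy direction is $(\Leftarrow)$. Assuming such $\mathbf{R}$ and $\mathcal{D}$ exist, statement (1) of \cref{theo: fang2020ida/Theorem_1} holds for $\mathbf{R}$, so by the equivalence with statement (3) the induced subgraph of $\mathcal{G}$ over $\mathbf{R}$ is complete. Since $\mathbf{H}\subseteq\mathbf{R}$ and completeness of an induced subgraph is inherited by every subset, $\mathbf{H}$ induces a complete subgraph of $\mathcal{G}$. This requires nothing beyond the cited theorem.

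The substantive direction is $(\Rightarrow)$, which I would prove by a finite enlargement procedure. Start from $\mathbf{R}_0=\mathbf{H}$, which induces a complete subgraph and satisfies $\mathbf{R}_0\subseteq sib(X,\mathcal{G})$. At stage $i$, either $\mathbf{R}_i$ already meets the second clause of statement (3) of \cref{theo: fang2020ida/Theorem_1}, i.e.\ there is no $R\in\mathbf{R}_i$ and $W\in adj(X,\mathcal{G})\setminus(\mathbf{R}_i\cup pa(X,\mathcal{G}))$ with $W\rightarrow R$, in which case we stop; or such a pair $(R,W)$ exists. In the latter case \cref{lem: complete2complete} applies (its hypotheses are exactly that $\mathbf{R}_i$ is complete and such a pair exists), and it guarantees both that $W\in sib(X,\mathcal{G})\setminus\mathbf{R}_i$ and that $\mathbf{R}_i\cup\{W\}$ induces a complete subgraph. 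We therefore set $\mathbf{R}_{i+1}=\mathbf{R}_i\cup\{W\}$, preserving the invariants $\mathbf{R}_{i+1}\subseteq sib(X,\mathcal{G})$ and $\mathbf{R}_{i+1}$ complete, while strictly increasing the size.

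The main obstacle, and the point where \cref{lem: complete2complete} does the real work, is ensuring the procedure both terminates and terminates in a usable state. Termination is immediate because each step strictly enlarges $\mathbf{R}_i$ inside the finite set $sib(X,\mathcal{G})$; the crucial content is that every offending $W$ is forced to be a sibling of $X$ (not merely an adjacent child) and to be adjacent to all of $\mathbf{R}_i$, so that absorbing it keeps $\mathbf{R}$ within $sib(X,\mathcal{G})$ and keeps the induced subgraph complete. At termination we obtain $\mathbf{R}\supseteq\mathbf{H}$ satisfying statement (3) of \cref{theo: fang2020ida/Theorem_1}, and invoking the reverse implication $(3)\Rightarrow(1)$ yields a DAG $\mathcal{D}\in[\mathcal{G}]$ with $pa(X,\mathcal{D})=\mathbf{R}\cup pa(X,\mathcal{G})$ and $ch(X,\mathcal{D})=sib(X,\mathcal{G})\cup ch(X,\mathcal{G})\setminus\mathbf{R}$, completing the proof.
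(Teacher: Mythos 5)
Your proposal is correct and follows essentially the same route as the paper's proof: both directions hinge on the equivalence of statements (1) and (3) in \cref{theo: fang2020ida/Theorem_1}, and the forward direction is the same iterative enlargement of $\mathbf{H}$ via \cref{lem: complete2complete} until the no-offending-arrow condition holds (the paper phrases the terminal case as ending at $sib(X,\mathcal{G})$, which your finiteness argument covers implicitly). No gaps.
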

\begin{proof}
    According to \cref{theo: fang2020ida/Theorem_1}, in an MPDAG $\mathcal{G}$, for any vertex $X$ and $\mathbf{R} \subset sib(X,\mathcal{G})$, the following two statements are equivalent:
    \begin{enumerate}[leftmargin=6.5mm]
    \itemsep0em
        \item[(1)] There is a DAG $\mathcal{D} \in [\mathcal{G}]$ such that $pa(X,\mathcal{D})=\mathbf{R} \cup pa(X,\mathcal{G})$ and $ch(X,\mathcal{D})=sib(X,\mathcal{G}) \cup ch(X, \mathcal{G}) \backslash \mathbf{R}$.
        \item[(2)] The induced subgraph of $\mathcal{G}$ over $\mathbf{R}$ is complete, and there does not exist an $R \in \mathbf{R}$ and a $W \in {adj(X,\mathcal{G}) \backslash{(\mathbf{R} \cup pa(X,\mathcal{G}))}}$ such that $W \rightarrow R$.
    \end{enumerate}

    Therefore, \cref{lem: complex2simple} can also be stated as: In an MPDAG $\mathcal{G}$, for any vertex $X$, there exists $\mathbf{H} \subseteq{sib(X,\mathcal{G})}$ that induces a complete subgraph of $\mathcal{G}$ if and only if there exists some $\mathbf{R}$ that $\mathbf{H} \subseteq \mathbf{R} \subseteq{sib(X,\mathcal{G})}$, such that the induced subgraph of $\mathcal{G}$ over $\mathbf{R}$ is complete, and there does not exist an $R \in \mathbf{R}$ and a $W \in {adj(X,\mathcal{G}) \backslash{(\mathbf{R} \cup pa(X,\mathcal{G}))}}$ such that $W \rightarrow R$.
    
    The proof of sufficiency is straightforward. That set $\mathbf{R}$ induces a complete subgraph can ensure that any subset of $\mathbf{R}$ induces a complete subgraph of $\mathcal{G}$.
    
    Next, we prove the necessity. If $\mathbf{H} \subseteq \mathbf{R}$ is complete and there does not exist an $H \in \mathbf{H}$ and a $W \in {adj(X,\mathcal{G}) \backslash{(\mathbf{H} \cup pa(X,\mathcal{G}))}}$ such that $W \rightarrow H$, then $\mathbf{H}$ satisfies the condition and we are done. Otherwise, if $\mathbf{H}$ is complete and there exists an $H \in \mathbf{H}$ and a $W \in {adj(X,\mathcal{G}) \backslash{(\mathbf{H} \cup pa(X,\mathcal{G}))}}$ such that $W \rightarrow H$, by \cref{lem: complete2complete}, $(\mathbf{H} \cup W) \subseteq \mathbf{R}$ induces a complete subgraph of $\mathcal{G}$. Similarly, if $\mathbf{H} \cup W$ is complete and there does not exist an $H' \in \mathbf{H} \cup W$ and a $W' \in {adj(X,\mathcal{G}) \backslash{(\mathbf{H} \cup W \cup pa(X,\mathcal{G}))}}$ such that $W' \rightarrow H'$, then $\mathbf{H} \cup W$ satisfies the condition and we are done. Otherwise, $(\mathbf{H} \cup W \cup W') \subseteq \mathbf{R}$ induces a complete subgraph of $\mathcal{G}$. Following this derivation, either we are done or we will end with the result that $sib(X,\mathcal{G})$ is complete. For the latter situation, by \cref{theo: fang2020ida/Theorem_1}, since orienting every vertex in $sib(X,\mathcal{G})$ towards $X$ does not introduce any new V-structure collided on $X$ or any directed triangle containing $X$. In this case, $\mathbf{R}=sib(X,\mathcal{G})$ meets the left hand side and we are done.
\end{proof}

\subsubsection{Proof of \cref{lem: graphical characteristic of critical set}}
\begin{proof}
    We first show the necessity. By \cref{def: Critical Set}, $\mathbf{C} \subseteq sib(S,\mathcal{G}) \cup ch(S,\mathcal{G})$. Let $\mathcal{D} \in [\mathcal{G}]$ be an arbitrary DAG. If $\mathbf{C} \cap{ch(S, \mathcal{D})} = \emptyset$ and $\mathbf{C} \ne \emptyset$, then $\mathbf{C} \subseteq pa(S, \mathcal{D})$, and thus $\mathbf{C} \subseteq sib(S,\mathcal{G})$. Denote by $\mathbf{R} = sib(S, \mathcal{G}) \cap pa(S,\mathcal{D})$, we have $\mathbf{C} \subseteq{\mathbf{R}} \subseteq{sib(S,\mathcal{G})}$ and $\mathbf{C} \subseteq{\mathbf{R}} \subseteq{pa(S,\mathcal{D})}$. \cref{theo: fang2020ida/Theorem_1} proved that a non-empty subset $\mathbf{R}$ of $sib(S,G)$ can be a part of $S$'s parent set in some equivalent DAG if and only if $\mathbf{R}$ induces a complete subgraph of $\mathcal{G}$, and there does not exist a set $R \in \mathbf{R}$ and a $W \in adj(S,\mathcal{G})\backslash (\mathbf{R} \cup pa(S,\mathcal{G}))$ such that $W \rightarrow R$. Therefore, as a subset of $\mathbf{R}$, $\mathbf{C}$ induces a complete subgraph of $\mathcal{G}$. This completes the proof of necessity.
    
    We next prove the sufficiency. If $\mathbf{C}=\emptyset$, then it is straightforward that $\mathbf{C} \cap{ch(S, \mathcal D)} = \emptyset$ for some $\mathcal{D} \in [\mathcal{G}]$. Now assume $\mathbf{C} \ne \emptyset$ and $\mathbf{C} \cap{ch(S, \mathcal G)} = \emptyset$. As $\mathbf{C} \subseteq sib(S,\mathcal{G}) \cup ch(S,\mathcal{G})$, we have $\mathbf{C} \subseteq sib(S,\mathcal{G})$. Since $\mathbf{C}$ induces a complete subgraph of $\mathcal{G}$, by \cref{lem: complex2simple}, there exists $\mathbf{R}$, $\mathbf{C} \subseteq{\mathbf{R}} \subseteq{sib(S,\mathcal{G})}$, that there is a DAG $\mathcal{D} \in [\mathcal{G}]$ such that $pa(S,\mathcal{D})=\mathbf{R} \cup pa(S,\mathcal{G})$ and $ch(S,\mathcal{D})=sib(S,\mathcal{G}) \cup ch(S, \mathcal{G}) \backslash \mathbf{R}$. As $\mathbf{R} \subseteq pa(S,\mathcal{D})$ and $\mathbf{C} \subseteq{\mathbf{R}}$, $\mathbf{C} \subseteq pa(S,\mathcal{D})$ and thus $\mathbf{C} \cap{ch(S, \mathcal{D})} = \emptyset$.
\end{proof}

\subsection{Proof of \cref{theo}}
   \cref{theo} is closely related to \cref{theo: fang2021local/CPDAG} for CPDAGs from \citep{fang2022local}. Since CPDAG is a special case of MPDAG, all results for MPDAGs works for CPDAGs. Although the condition provided by these two theorems are the same, they based on different theoretical results on locally valid orientation rules for CPDAGs and MPDAGs. The one for CPDAGs is mainly based on \cref{lem: maathuis2009estimating} and for MPDAGs, it is mainly based on \cref{theo: fang2020ida/Theorem_1}.
\begin{proof}
    \cref{fig: Proof structure of theo} shows how all lemmas fit together to prove the \cref{theo}. To decide whether $T$ is a definite descendant of $S$ in an MPDAG $\mathcal{G}$, \cref{lem: definite ancestral relations in Markov equivalent DAGs} provides a sufficient and necessary condition on a graphical characteristic of $\mathbf{C}$ on every DAG $\mathcal{D} \in [\mathcal{G}]$, which is then further explored by \cref{lem: graphical characteristic of critical set} to a graphical characteristic of $\mathbf{C}$ on the corresponding MPDAG $\mathcal{G}$.
    Following from \cref{lem: definite ancestral relations in Markov equivalent DAGs} and \cref{lem: graphical characteristic of critical set}, we have the desired sufficient and necessary condition to check whether $T$ is a definite descendant of $S$ in an MPDAG $\mathcal{G}$ on the graphical characteristic of $\mathbf{C}$.
\end{proof}

\begin{figure}[htp]
\vskip -0.2in
\begin{center}
    \begin{tikzpicture}[node distance={25mm}, thick, main/.style = {draw}] 
    \node[] (1) {\textbf{\cref{theo}}}; 
    \node[] (2) [right of=1] {\cref{lem: definite ancestral relations in Markov equivalent DAGs}};
    \node[] (3) [left of=1] {\cref{lem: graphical characteristic of critical set}}; 
    \node[] (4) [left of=3] {\cref{lem: complex2simple}};
    \node[] (5) [left of=4] {\cref{lem: complete2complete}};
    \draw[->] (2) -- (1); 
    \draw[->] (3) -- (1);
    \draw[->] (4) -- (3);
    \draw[->] (5) -- (4);
    \end{tikzpicture}
\caption{Proof structure of \cref{theo}}
\label{fig: Proof structure of theo}
\end{center}
\end{figure}


\subsection{Proof of \cref{lem: ds and chordless}}
\label{proof_lem: ds and chordless}
\begin{proof}
    Suppose $p = \left \langle S=V_0,...,V_k = T \right \rangle$ is a chordless path from $S$ to $T$. We will show that $p$ is of definite status by showing that every vertex on $p$ is of definite status. Any triple $\langle V_{i-1}, V_{i}, V_{i+1} \rangle $ on $p$ with $i \in \{1,...,k-1\}$ can be in the form: (1) $V_{i-1} \rightarrow V_{i} \leftarrow V_{i+1}$; or (2) $V_{i}\rightarrow V_{i+1}$ or $V_{i-1}\rightarrow V_{i}$ on $p$ or $V_{i-1} - V_{i} - V_{i+1}$ is a subpath of $p$ and $V_{i-1}$ is not adjacent with $V_{i+1}$. In the former case, $V_{i}$ is a collider; in the latter case, $V_{i}$ is a definite non-collider. The triple cannot be in the form $V_{i-1} \rightarrow V_{i} - V_{i+1}$ or $V_{i-1} - V_{i} \leftarrow V_{i+1}$, since the undirected edge can be oriented departs $V_{i}$ by Rule 1 in Meek's criterion. Therefore, every vertex on $p$ is of definite status. Thus, we completes the proof that $p$ is of definite status.
\end{proof}

\subsection{Proof of \cref{prop: obtain critical set by ds path}}
\label{proof_prop: obtain critical set by ds path}
\begin{proof}
    If all definite status b-possibly causal path from $S$ to $T$ are chordless, then by \cref{lem: ds and chordless}, $\mathbf{F_{ST}}$ is exactly $\mathbf{C_{ST}}$. Suppose that there are definite status b-possibly causal path from $S$ to $T$ with chords. We first prove that $\mathbf{C_{ST}} \subseteq \mathbf{F_{ST}}$. By the definition of critical set, for any $C \in \mathbf{C_{ST}}$, there is a chordless possibly causal path $p$ from $X$ to $Y$ on which $C$ is adjacent to $S$. By \cref{lem: ds and chordless}, $p$ is also a definite status b-possibly causal path from $S$ to $T$ without any chord. Therefore, as an adjacent vertex of $S$ on $p$, $C \in \mathbf{F_{ST}}$ as well. Since $C$ is an arbitrary vertex in $\mathbf{C_{ST}}$, $\mathbf{C_{ST}} \subseteq \mathbf{F_{ST}}$. Then we prove $\mathbf{F_{ST}} \subseteq \mathbf{C_{ST}}$. For any $F \in \mathbf{F_{ST}}$, there is a definite status b-possibly causal path $p^{*}$ from $S$ to $T$ in $\mathcal{G}$ that there is no chord with $S$ as an endpoint, on which $F$ is adjacent to $X$. By \cref{lem: perkovic2017interpreting/Lemma_3.6}, some subsequence of $p^{*}$ forms a chordless b-possibly causal path $p^{\ast \ast}$ from $S$ to $T$. As on $p^{*}$, there is no chord with $S$ as an endpoint, the chordless b-possibly causal path $p^{\ast \ast}$ must start with the edge $S - F ...$ or $S \rightarrow F ...$. Therefore, $F \in \mathbf{C_{ST}}$. Since $F$ is an arbitrary vertex in $\mathbf{F_{ST}}$, $\mathbf{F_{ST}} \subseteq \mathbf{C_{ST}}$. This completes the proof of \cref{prop: obtain critical set by ds path}.
\end{proof}

\subsection{Proof of \cref{prop: no possible ancestral relations in CF in MPDAGs}}
\label{proof_prop: no possible ancestral relations in CF in MPDAGs}
\begin{proof}
    Suppose there is a possible descendant of $A$ in $\mathcal{G}$, which is denoted by $B$. Then there is a b-possibly causal path $p$ from $A$ to $B$. By \cref{lem: perkovic2017interpreting/Lemma_3.6}, a subsequence $p^{*}$ of $p$ forms a b-possibly causal unshielded path from $A$ to $B$. Suppose $p^{*} = \left \langle A=V_0,...,V_k = B \right \rangle$, \cref{assum: A is a root node} implies $A \rightarrow V_1$. By \cref{lem: perkovic2017interpreting/Lemma_B.1}, $p^{*}$ is a causal path from $A$ to $B$ in $\mathcal{G}$. Therefore, $B$ is a definite descendant of $A$.
\end{proof}

\section{An illustration example for \cref{theo}} \label{Appendix: An illustration Example}

\textbf{Example.} Consider the MPDAG $\mathcal{G}$ in \cref{fig: eg1} and the node $A$. We show the ancestral relations between $A$ and any other nodes. In $\mathcal{G}$, it is obvious that $B,C,D$ and $H$ are possible descendants of $A$. That is because $B$ is adjacent with $A$, and the critical set of $B$ with respect to $A$ is itself, which induces a complete subgraph of $\mathcal{G}$. The same conclusion can be drawn for $C,D$, and $H$.  Node $E$ is also a possible descendant of $A$, since the chordless possibly causal path from $A$ to $E$ are $A-B-E$ and $A-C-E$, the critical set of $A$ with respect to $E$ is \{$B,C$\}. As the induced subgraph of $\mathcal{G}$ over \{$B,C$\} is complete, by \cref{theo}, $E$ is not a definite descendant of $A$, so it is a possible descendant of $A$. For $F$, the chordless possibly causal path from $A$ to $F$ are $A-B-F$, $A-C-F$ and $A-D-F$, thus the critical set of $A$ with respect to $F$ is \{$B,C,D$\}. Since the corresponding induced subgraph is incomplete, by \cref{theo}, $F$ is a definite descendant of $A$.

\begin{figure}[htp]
\begin{center}
\begin{tikzpicture}[node distance={13mm}, thick, main/.style = {draw, circle}] 
\node[main] (1) {$A$}; 
\node[main] (2) [below left of=1] {$B$};
\node[main] (3) [below right of=2] {$C$}; 
\node[main] (4) [below right of=3] {$D$};
\node[main] (5) [below left of=3] {$E$}; 
\node[main] (6) [below right of=5] {$F$};
\node[main] (7) [above right of=4] {$H$};
\draw (1) -- (2);
\draw (1) -- (3);
\draw (1) -- (4);
\draw (1) -- (7);
\draw (2) -- (3);
\draw (2) -- (5);
\draw[->] (2) to [out=215,in=180,looseness=1.2] (6);
\draw (3) -- (4);
\draw (3) -- (5);
\draw[->] (3) -- (6);
\draw[->] (3) -- (7);
\draw[->] (4) -- (6);
\draw[->] (5) -- (6);
\end{tikzpicture} 
\caption{An MPDAG $\mathcal{G}$ for illustrating ancestral relations of the node $A$ with any other nodes. The node $B,C,D,E$ and $H$ are possible descendants of $A$; node $F$ is a definite descendant of $A$.}
\label{fig: eg1}
\end{center}
\end{figure}
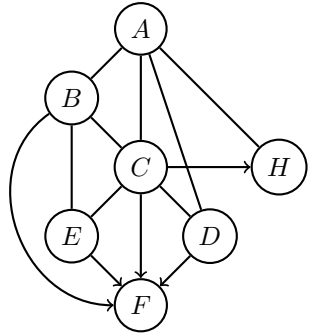

\section{\cref{alg: FindingCriticalSet} and detailed explanation} \label{Appendix: Detailed Explaination of algo}

In \cref{alg: FindingCriticalSet}, every b-possibly causal path of definite status (Line 12-13) on the way starting from $S$ to $\tau$ without any chord ending in $S$ (Line 14) is recorded in a queue $\mathcal{Q}$ as a triple $(\alpha,\phi,\tau)$, where $\alpha$ is the node lying immediately after $S$ and $\phi$ is the node that lie immediately before $\tau$ on the path. If $\tau$ is exactly $Y$, we add $\alpha$ to the critical set $\mathbf{C}$ and remove from $\mathcal{Q}$ all triples where the first element is $\alpha$, that is, we stop enumerating the required paths on which the node adjacent with $S$ is $\alpha$ (Line 8-9). Otherwise, we extend the path to the adjacencies of $\tau$, $\beta$, so that the path from $S$ to $\beta$ is still a b-possibly causal path of definite status without any chord ending in $S$ and then we add the corresponding triples to the queue $\mathcal{Q}$ (Line 11-16). In this algorithm, $\mathcal{H}$ is introduced to store the visited triples, and to avoid visiting the same triple twice.

\section{Algorithm in \cref{sec: General Case}} \label{Appendix: Additional Algorithms}

\begin{algorithm}
\begin{algorithmic}[1]
\caption{Identify the type of ancestral relation of $S$ with respect to all the other vertices in an MPDAG}
\label{alg: ancestral relation between X and all the other nodes}
\State \textbf{Input:} MPDAG $\mathcal{G}$, a variable $S$ in $\mathcal{G}$.
\State \textbf{Output:} The type of ancestral relation between $S$ and all the other vertices in $\mathcal{G}$.
\For{each node $W$ in $\mathcal{G}$}
    \State Identify the type of ancestral relation of $S$ with respect to $W$ in $\mathcal{G}$ by \cref{alg: ancestral relation between X and Y}.
\EndFor
\State \textbf{Return} the set of definite descendants, possible descendants and definite non-descendants of $S$ in $\mathcal{G}$.
\end{algorithmic}
\end{algorithm}

\newpage
\section{Supplementary experimental results}

\subsection{Causal graphs for one simulation} \label{Appendix: A randomly generated causal graph example}
\begin{figure}[ht]
\centering
\subfloat[DAG $\mathcal{D}$]{%
        \begin{tikzpicture}[node distance={10mm}, thick, main/.style = {draw, circle}]
        \tikzstyle{every node}=[font=\tiny]
        \node[main] (1) {$A$}; 
        \node[main] (10) [below of=1] {$K$};
        \node[main] (5) [below left of=1] {$E$};
        \node[main] (3) [below right of=1] {$C$};
        \node[main] (7) [above left of=5] {$H$};
        \node[main] (9) [below of=10] {$Y$};
        \node[main] (4) [right of=3] {$D$};
        \node[main] (2) [above of=4] {$B$};
        \node[main] (6) [right of=4] {$F$};
        \node[main] (8) [below of=4] {$I$};
        \draw[->] (1) -- (3);
        \draw[->] (1) -- (5);
        \draw[->] (1) -- (7);
        \draw[->] (1) -- (10);
        \draw[->] (2) -- (4);
        \draw[->] (2) -- (6);
        \draw[->] (3) -- (8);
        \draw[->] (3) -- (9);
        \draw[->] (4) -- (8);
        \draw[<-] (5) -- (7);
        \draw[->] (5) -- (9);
        \draw[->] (5) -- (10);
        \draw[->] (6) -- (8);
        \draw[->] (7) to [out=270,in=180,looseness=1.2] (9);
        \draw[->] (8) -- (9);
        \end{tikzpicture} }
\subfloat[CPDAG $\mathcal{G^*}$]{%
        \begin{tikzpicture}[node distance={10mm}, thick, main/.style = {draw, circle}]
        \tikzstyle{every node}=[font=\tiny]
        \node[main] (1) {$A$}; 
        \node[main] (10) [below of=1] {$K$};
        \node[main] (5) [below left of=1] {$E$};
        \node[main] (3) [below right of=1] {$C$};
        \node[main] (7) [above left of=5] {$H$};
        \node[main] (9) [below of=10] {$Y$};
        \node[main] (4) [right of=3] {$D$};
        \node[main] (2) [above of=4] {$B$};
        \node[main] (6) [right of=4] {$F$};
        \node[main] (8) [below of=4] {$I$};
        \draw (1) -- (3);
        \draw (1) -- (5);
        \draw (1) -- (7);
        \draw (1) -- (10);
        \draw (2) -- (4);
        \draw (2) -- (6);
        \draw[->] (3) -- (8);
        \draw[->] (3) -- (9);
        \draw[->] (4) -- (8);
        \draw (5) -- (7);
        \draw[->] (5) -- (9);
        \draw (5) -- (10);
        \draw[->] (6) -- (8);
        \draw[->] (7) to [out=270,in=180,looseness=1.2] (9);
        \draw[->] (8) -- (9);
        \end{tikzpicture} }
\subfloat[MPDAG $\mathcal{G}$]{
        \begin{tikzpicture}[node distance={10mm}, thick, main/.style = {draw, circle}]
        \tikzstyle{every node}=[font=\tiny]
        \node[main] (1) {$A$}; 
        \node[main] (10) [below of=1] {$K$};
        \node[main] (5) [below left of=1] {$E$};
        \node[main] (3) [below right of=1] {$C$};
        \node[main] (7) [above left of=5] {$H$};
        \node[main] (9) [below of=10] {$Y$};
        \node[main] (4) [right of=3] {$D$};
        \node[main] (2) [above of=4] {$B$};
        \node[main] (6) [right of=4] {$F$};
        \node[main] (8) [below of=4] {$I$};
        \draw (1) -- (3);
        \draw (1) -- (5);
        \draw (1) -- (7);
        \draw (1) -- (10);
        \draw (2) -- (4);
        \draw (2) -- (6);
        \draw[->] (3) -- (8);
        \draw[->] (3) -- (9);
        \draw[->] (4) -- (8);
        \draw (5) -- (7);
        \draw[->] (5) -- (9);
        \draw[->] (5) -- (10);
        \draw[->] (6) -- (8);
        \draw[->] (7) to [out=270,in=180,looseness=1.2] (9);
        \draw[->] (8) -- (9);
        \end{tikzpicture} }
\caption{(a) is one of the generated DAG $\mathcal{D}$ with $10$ nodes and $10$ directed edges; (b) is the corresponding CPDAG $\mathcal{G^*}$; (c) is the corresponding MPDAG $\mathcal{G}$ following Meek's rule, with the background knowledge that $E$ is a direct cause of $K$. The randomly selected sensitive attribute is represented by $A$ and the outcome attribute is $Y$. \cref{alg: ancestral relation between X and all the other nodes} detects the ancestral relations in MPDAG $\mathcal{G}$: the definite non-descendants of the sensitive attributes are $\{B,D,F\}$, the possible descendants are $\{C,H,E,K,I\}$, and there is no definite descendants of $A$ in $\mathcal{G}$.}
\label{fig: SimulationGraph}
\end{figure}
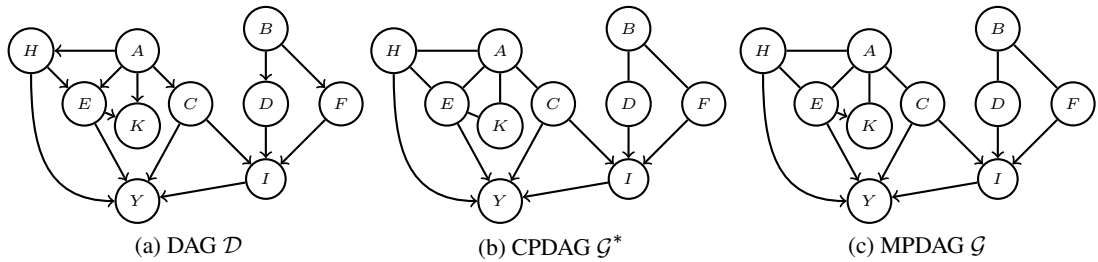



\newpage
\subsection{Causal graphs for the Student Dataset} \label{Appendix: Causal graphs for Student Dataset}

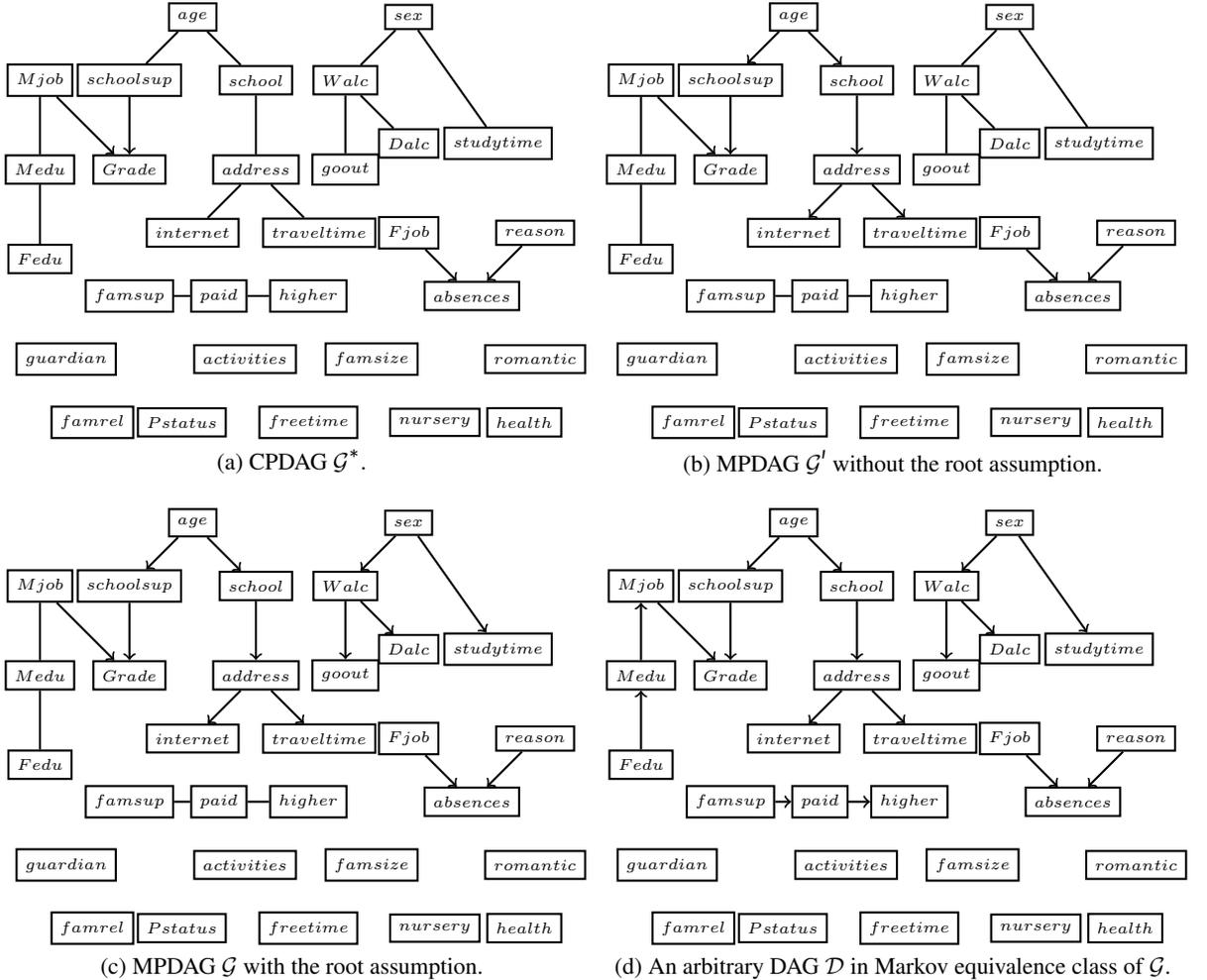
\begin{figure}[ht]
\centering
\subfloat[CPDAG $\mathcal{G^*}$.]{
    \label{fig: real_data_cpdag}
    \begin{tikzpicture}[node distance={12mm}, thick, main/.style = {draw, rectangle}]
    \tikzstyle{every node}=[font=\tiny]
    \node[main] (3) {$Mjob$}; 
    \node[main] (2) [below of=3] {$Medu$};
    \node[main] (1) [below of=2]{$Fedu$}; 
    \node[main] (4) [right of=2] {$Grade$};
    \node[main] (5) [above of=4] {$schoolsup$}; 
    \node[main] (6) [above right of=5] {$age$};
    \node[main] (7) [below right of=6] {$school$};
    \node[main] (8) [below of=7] {$address$};
    \node[main] (9) [below left of=8] {$internet$}; 
    \node[main] (10) [below right of=8] {$traveltime$};
    \node[main] (13) [right of=8] {$goout$};
    \node[main] (12) [above of=13] {$Walc$};
    \node[main] (14) [below right of=12] {$Dalc$};
    \node[main] (11) [above right of=12] {$sex$}; 
    \node[main] (15) [right of=14] {$studytime$};
    \node[main] (16) [below of=14] {$Fjob$};
    \node[main] (18) [below right of=16] {$absences$};
    \node[main] (17) [above right of=18] {$reason$};
    \node[main] (19) [below left of=9] {$famsup$};
    \node[main] (20) [right of=19] {$paid$};
    \node[main] (21) [right of=20] {$higher$};
    \node[main] (24) [below left of=21] {$activities$};
    \node[main] (25) [below right of=21] {$famsize$};
    \node[main] (27) [below left of=19] {$guardian$};
    \node[main] (23) [below left of=24] {$Pstatus$};
    \node[main] (22) [left of=23] {$famrel$};
    \node[main] (29) [below right of=25] {$nursery$};
    \node[main] (28) [right of=29] {$health$};
    \node[main] (26) [below right of=24] {$freetime$};
    \node[main] (30) [below right of=18] {$romantic$};
    \draw (1) -- (2);
    \draw (2) -- (3);
    \draw[->] (3) -- (4);
    \draw[->] (5) -- (4);
    \draw (6) -- (5);
    \draw (6) -- (7);
    \draw (7) -- (8);
    \draw (8) -- (9);
    \draw (8) -- (10);
    \draw (11) -- (12);
    \draw (12) -- (13);
    \draw (12) -- (14);
    \draw (11) -- (15);
    \draw[->] (16) -- (18);
    \draw[->] (17) -- (18);
    \draw (19) -- (20);
    \draw (20) -- (21);
    \end{tikzpicture} }
\subfloat[MPDAG $\mathcal{G'}$ without the root assumption.]{
    \label{fig: real_data_without_assumption}
    \begin{tikzpicture}[node distance={12mm}, thick, main/.style = {draw, rectangle}]
    \tikzstyle{every node}=[font=\tiny]
    \node[main] (3) {$Mjob$}; 
    \node[main] (2) [below of=3] {$Medu$};
    \node[main] (1) [below of=2]{$Fedu$}; 
    \node[main] (4) [right of=2] {$Grade$};
    \node[main] (5) [above of=4] {$schoolsup$}; 
    \node[main] (6) [above right of=5] {$age$};
    \node[main] (7) [below right of=6] {$school$};
    \node[main] (8) [below of=7] {$address$};
    \node[main] (9) [below left of=8] {$internet$}; 
    \node[main] (10) [below right of=8] {$traveltime$};
    \node[main] (13) [right of=8] {$goout$};
    \node[main] (12) [above of=13] {$Walc$};
    \node[main] (14) [below right of=12] {$Dalc$};
    \node[main] (11) [above right of=12] {$sex$}; 
    \node[main] (15) [right of=14] {$studytime$};
    \node[main] (16) [below of=14] {$Fjob$};
    \node[main] (18) [below right of=16] {$absences$};
    \node[main] (17) [above right of=18] {$reason$};
    \node[main] (19) [below left of=9] {$famsup$};
    \node[main] (20) [right of=19] {$paid$};
    \node[main] (21) [right of=20] {$higher$};
    \node[main] (24) [below left of=21] {$activities$};
    \node[main] (25) [below right of=21] {$famsize$};
    \node[main] (27) [below left of=19] {$guardian$};
    \node[main] (23) [below left of=24] {$Pstatus$};
    \node[main] (22) [left of=23] {$famrel$};
    \node[main] (29) [below right of=25] {$nursery$};
    \node[main] (28) [right of=29] {$health$};
    \node[main] (26) [below right of=24] {$freetime$};
    \node[main] (30) [below right of=18] {$romantic$};
    \draw (1) -- (2);
    \draw (2) -- (3);
    \draw[->] (3) -- (4);
    \draw[->] (5) -- (4);
    \draw[->] (6) -- (5);
    \draw[->] (6) -- (7);
    \draw[->] (7) -- (8);
    \draw[->] (8) -- (9);
    \draw[->] (8) -- (10);
    \draw (11) -- (12);
    \draw (12) -- (13);
    \draw (12) -- (14);
    \draw (11) -- (15);
    \draw[->] (16) -- (18);
    \draw[->] (17) -- (18);
    \draw (19) -- (20);
    \draw (20) -- (21);
    \end{tikzpicture} }\\
\subfloat[MPDAG $\mathcal{G}$ with the root assumption.]{
    \label{fig: real_data_with_assumption}
    \begin{tikzpicture}[node distance={12mm}, thick, main/.style = {draw, rectangle}]
    \tikzstyle{every node}=[font=\tiny]
    \node[main] (3) {$Mjob$}; 
    \node[main] (2) [below of=3] {$Medu$};
    \node[main] (1) [below of=2]{$Fedu$}; 
    \node[main] (4) [right of=2] {$Grade$};
    \node[main] (5) [above of=4] {$schoolsup$}; 
    \node[main] (6) [above right of=5] {$age$};
    \node[main] (7) [below right of=6] {$school$};
    \node[main] (8) [below of=7] {$address$};
    \node[main] (9) [below left of=8] {$internet$}; 
    \node[main] (10) [below right of=8] {$traveltime$};
    \node[main] (13) [right of=8] {$goout$};
    \node[main] (12) [above of=13] {$Walc$};
    \node[main] (14) [below right of=12] {$Dalc$};
    \node[main] (11) [above right of=12] {$sex$}; 
    \node[main] (15) [right of=14] {$studytime$};
    \node[main] (16) [below of=14] {$Fjob$};
    \node[main] (18) [below right of=16] {$absences$};
    \node[main] (17) [above right of=18] {$reason$};
    \node[main] (19) [below left of=9] {$famsup$};
    \node[main] (20) [right of=19] {$paid$};
    \node[main] (21) [right of=20] {$higher$};
    \node[main] (24) [below left of=21] {$activities$};
    \node[main] (25) [below right of=21] {$famsize$};
    \node[main] (27) [below left of=19] {$guardian$};
    \node[main] (23) [below left of=24] {$Pstatus$};
    \node[main] (22) [left of=23] {$famrel$};
    \node[main] (29) [below right of=25] {$nursery$};
    \node[main] (28) [right of=29] {$health$};
    \node[main] (26) [below right of=24] {$freetime$};
    \node[main] (30) [below right of=18] {$romantic$};
    \draw (1) -- (2);
    \draw (2) -- (3);
    \draw[->] (3) -- (4);
    \draw[->] (5) -- (4);
    \draw[->] (6) -- (5);
    \draw[->] (6) -- (7);
    \draw[->] (7) -- (8);
    \draw[->] (8) -- (9);
    \draw[->] (8) -- (10);
    \draw[->] (11) -- (12);
    \draw[->] (12) -- (13);
    \draw[->] (12) -- (14);
    \draw[->] (11) -- (15);
    \draw[->] (16) -- (18);
    \draw[->] (17) -- (18);
    \draw (19) -- (20);
    \draw (20) -- (21);
    \end{tikzpicture} 
}
\subfloat[An arbitrary DAG $\mathcal{D}$ in Markov equivalence class of $\mathcal{G}$.]{ 
    \label{fig: real_data_DAG}
    \begin{tikzpicture}[node distance={12mm}, thick, main/.style = {draw, rectangle}]
    \tikzstyle{every node}=[font=\tiny]
    \node[main] (3) {$Mjob$}; 
    \node[main] (2) [below of=3] {$Medu$};
    \node[main] (1) [below of=2]{$Fedu$}; 
    \node[main] (4) [right of=2] {$Grade$};
    \node[main] (5) [above of=4] {$schoolsup$}; 
    \node[main] (6) [above right of=5] {$age$};
    \node[main] (7) [below right of=6] {$school$};
    \node[main] (8) [below of=7] {$address$};
    \node[main] (9) [below left of=8] {$internet$}; 
    \node[main] (10) [below right of=8] {$traveltime$};
    \node[main] (13) [right of=8] {$goout$};
    \node[main] (12) [above of=13] {$Walc$};
    \node[main] (14) [below right of=12] {$Dalc$};
    \node[main] (11) [above right of=12] {$sex$}; 
    \node[main] (15) [right of=14] {$studytime$};
    \node[main] (16) [below of=14] {$Fjob$};
    \node[main] (18) [below right of=16] {$absences$};
    \node[main] (17) [above right of=18] {$reason$};
    \node[main] (19) [below left of=9] {$famsup$};
    \node[main] (20) [right of=19] {$paid$};
    \node[main] (21) [right of=20] {$higher$};
    \node[main] (24) [below left of=21] {$activities$};
    \node[main] (25) [below right of=21] {$famsize$};
    \node[main] (27) [below left of=19] {$guardian$};
    \node[main] (23) [below left of=24] {$Pstatus$};
    \node[main] (22) [left of=23] {$famrel$};
    \node[main] (29) [below right of=25] {$nursery$};
    \node[main] (28) [right of=29] {$health$};
    \node[main] (26) [below right of=24] {$freetime$};
    \node[main] (30) [below right of=18] {$romantic$};
    \draw[->] (1) -- (2);
    \draw[->] (2) -- (3);
    \draw[->] (3) -- (4);
    \draw[->] (5) -- (4);
    \draw[->] (6) -- (5);
    \draw[->] (6) -- (7);
    \draw[->] (7) -- (8);
    \draw[->] (8) -- (9);
    \draw[->] (8) -- (10);
    \draw[->] (11) -- (12);
    \draw[->] (12) -- (13);
    \draw[->] (12) -- (14);
    \draw[->] (11) -- (15);
    \draw[->] (16) -- (18);
    \draw[->] (17) -- (18);
    \draw[->] (19) -- (20);
    \draw[->] (20) -- (21);
    \end{tikzpicture} }
\caption{The causal graphs for Student dataset. The attribute information can be found at \url{https://archive.ics.uci.edu/ml/datasets/Student+Performance}. (a) is the learnt CPDAG $\mathcal{C}$; (b) Given the background knowledge that the age is the parent of \textit{schoolsup} and \textit{school}, without any other assumption, we can have the MPDAG $\mathcal{G'}$ by applying Meek's rule; (c) With the additional root node assumption, we can obtain the MPDAG $\mathcal{G}$; (d) is an arbitary DAG $\mathcal{D}$ in the Markov equivalent class of $\mathcal{G}$.}.
\end{figure}

\subsection{Training details on real data} \label{Appendix: Training Details on Real Data}

We assume the linear causal model in the obtained MPDAG $\mathcal{G}$. To test the counterfactual fairness of the baseline methods, as in \cref{{sec: Experiment/Synthetic Data}}, we first generate the counterfactual data. Since the ground-truth DAG is unknown, we generate the counterfactual data from a DAG sampled from the Markov equivalence class of MPDAG $\mathcal{G}$.\footnote{On this dataset, all the possible true DAGs give the same results as the nodes with uncertain edge directions are not related to the sensitive attribute.} Then we fit the parameters of the model using the original data and generate samples from the model given the counterfactual \textit{sex} and the same noise in the original data for each individual. We fit baseline models to both the original and counterfactual sampled data and measure the unfairness in the same way as in \cref{sec: Experiment/Synthetic Data}. This procedure is carried out $10$ times and the average unfairness and RMSE results for five models are reported in \cref{tab: RMSE and Unfairness for real data}.

\subsection{Experiment based on more complicated structural equations} \label{Appendix: Experiment based on non-linear structural equations}

To show the generality of our method, we generate each variable $X_i$ from the following non-linear structure equation:
\begin{equation} \label{eq: nonlinear structual equation model}
    X_i = g_i(f_i(pa(X_i)+\epsilon_i)), i=1,...,n,
\end{equation}
where the causal mechanism $f_i$ is randomly chosen from \textit{linear}, \textit{sin}, \textit{cos}, \textit{tanh}, \textit{sigmoid} function and their combinations; $g_i$, which represents the post-nonlinear distortion in variable $X_i$, is randomly chosen from \textit{linear} function, \textit{absolute} and \textit{reciprocal} function; $\epsilon_i$ is the noise term, sampling from \textit{Gaussian}, \textit{Exponential} and \textit{Gumbel} distributions. With the same basic settings and evaluation metrics as \cref{sec: Experiment/Synthetic Data}, we fit a SVM regression model for the baselines and our proposed models. The average unfairness and RMSE achieved on 100 causal graphs is reported in \cref{tab: RMSE and Unfairness for simultion data_non-linear structural equations}. The corresponding boxplot is shown in \cref{fig: Boxplot_unfairness_and_RMSE_nonlinear} as well. We can see that it yields the same trend on counterfactual fairness as the linear case, while the accuracy in this dataset does not necessarily decrease with the increase in fairness. More discussion on accuracy-fairness tradeoff can be referred to \cref{Appendix: Discussion on Accuracy-fairness trade-off}.

\begin{table}[!ht]
\caption{Average unfairness and RMSE for synthetic datasets generated by nonlinear structural equations on held-out test set. For each graph setting, the unfairness gets decreasing from left to right, while there is no obvious increase in RMSE.}
\label{tab: RMSE and Unfairness for simultion data_non-linear structural equations}
\begin{center}
\small
\begin{tabular}{cccccccc}
    \hline
           &Node &Edge   &Full     &Unaware  &FairRelax  &Oracle   &Fair\\ \hline
  \multirow{4}{*}{\begin{turn}{90}Unfairness\end{turn}}  &$10$   &$20$    &$0.575\pm0.431$ &$0.218\pm0.262$  &$0.028\pm0.115$ &$0.000\pm0.000$ &$0.000\pm0.000$ \\ 
            &$20$   &$40$    &$0.491\pm0.358$ &$0.143\pm0.210$ &$0.017\pm0.080$ &$0.000\pm0.000$ &$0.000\pm0.000$ \\ 
            &$30$   &$60$    &$0.388\pm0.309$ &$0.126\pm0.208$ &$0.010\pm0.044$ &$0.000\pm0.000$ &$0.000\pm0.000$ \\ 
            &$40$   &$80$    &$0.388\pm0.384$ &$0.094\pm0.139$ &$0.009\pm0.057$ &$0.000\pm0.000$ &$0.000\pm0.000$ \\ \hline\hline
    
    \multirow{4}{*}{\begin{turn}{90}RMSE\end{turn}}    &$10$   &$20$    &$4.033\pm4.675$   &$4.024\pm4.663$  &$4.095\pm4.638$   &$4.098\pm4.649$    &$4.101\pm4.646$ \\ 
            &$20$   &$40$    &$3.921\pm4.532$   &$3.881\pm4.467$   &$3.921\pm4.497$      &$3.920\pm4.495$   &$3.927\pm4.491$ \\ 
            &$30$   &$60$    &$3.370\pm3.960$    &$3.371\pm3.958$  &$3.437\pm4.024$    &$3.438\pm4.025$   &$3.442\pm4.023$\\ 
            &$40$   &$80$    &$3.457\pm3.999$     &$3.451\pm3.983$ &$3.474\pm3.956$    &$3.478\pm3.960$ &$3.479\pm3.963$ \\ \hline
\end{tabular}
\end{center}
\end{table}

\begin{figure}[ht]
\centering
\subfloat[Average unfairness for each model and graph setting.]{
\label{fig: Boxplot_unfairness_nonlinear}
\includegraphics[width=0.48\columnwidth,height=0.3\columnwidth]{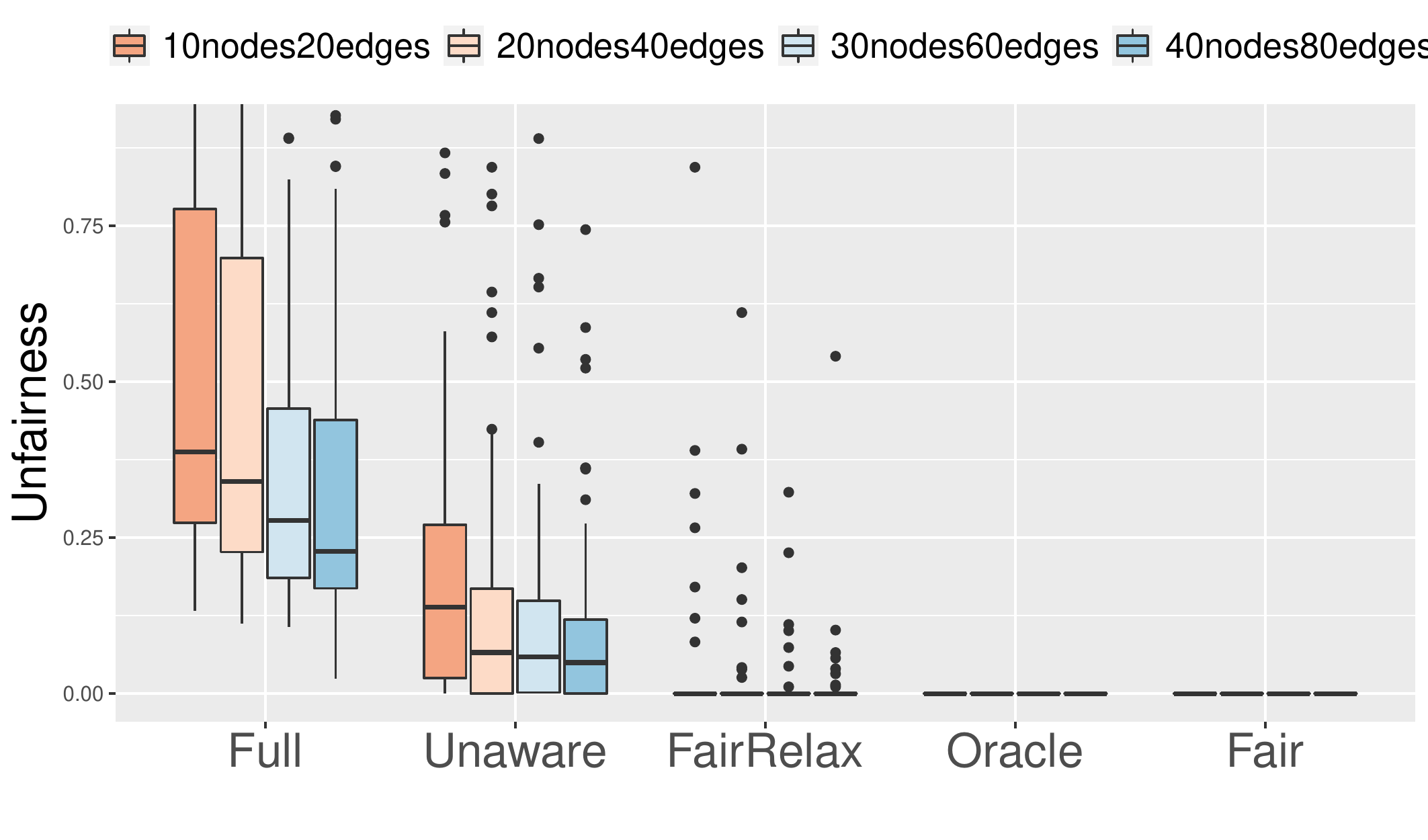}
}
\subfloat[Average RMSE for each model and graph setting.]{
\label{fig: Boxplot_RMSE_nonlinear}
\includegraphics[width=0.48\columnwidth,height=0.3\columnwidth]{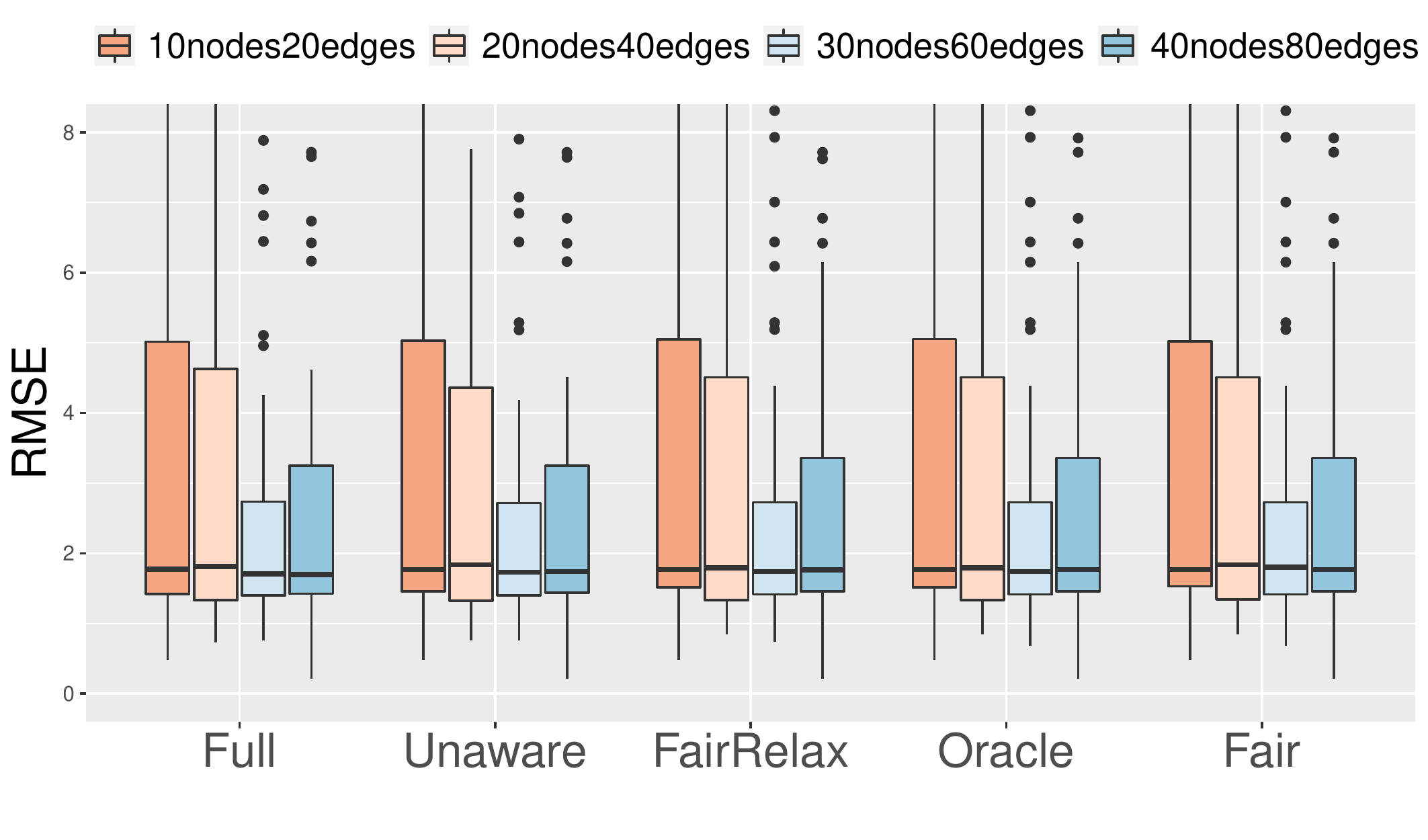}
}
\caption[]{Average unfairness and RMSE for synthetic datasets generated by nonlinear structural equations on held-out test set.}
\label{fig: Boxplot_unfairness_and_RMSE_nonlinear}
\end{figure}

\subsection{Experiment analyzing fairness performance with varying amount of given domain knowledge} \label{Appendix: Experiment analyzing fairness performance with varying amount of given domain knowledge}

Prediction using \texttt{Fair} method results in a strictly fair model regardless of how much domain knowledge is given. Prediction using \texttt{FairRelax} method will show different fairness performance with different amount of domain knowledge, since some definite descendants $X$ of the sensitive attribute may be possible descendants when less domain knowledge is given, thus the unfair feature $X$ will be involved to make predictions. At this point we do not know, theoretically, how different amounts or even types of domain knowledge will affect the performance of \texttt{FairRelax}. However, we can explore this experimentally. 

For a given CPDAG, only the fairness performance of \texttt{FairRelax} model among all models will be affected by the amount of background knowledge. The more background knowledge, the fairer the \texttt{FairRelax}. For example, in the setting `10nodes20edges', when the proportion of the undirected edges' true orientation is increased from 0.1, 0.3, 0.6 to 1, the unfairness for \texttt{FairRelax} are 0.075, 0.023, 0.018, 0.0, respectively. The same trend can be found in other graph settings, see \cref{tab: Fairness performance with varying amount of given domain knowledge}. The ` BK (\%)' represents how much the undirected edges' true orientation is given as background knowledge.

\begin{table}[!ht]
\caption{Average unfairness for synthetic datasets with varying amount of given domain knowledge. For each graph setting, the more domain knowledge, the fairer the model \texttt{FairRelax} becomes.}
\label{tab: Fairness performance with varying amount of given domain knowledge}
\begin{center}
\small
\begin{tabular}{cccccccc}
    \hline
           Node  &Edge  &BK(\%)   &Full     &Unaware  &FairRelax  &Oracle   &Fair\\ \hline
  \multirow{4}{*}{10} &\multirow{4}{*}{20} &$10$    &$0.707\pm1.144$ &$0.587\pm1.093$  &$0.075\pm0.334$ &$0.0\pm0.0$ &$0.0\pm0.0$ \\ 
        &    &$30$    &$0.707\pm1.144$ &$0.587\pm1.093$ &$0.023\pm0.178$ &$0.0\pm0.0$ &$0.0\pm0.0$ \\ 
        &    &$60$    &$0.707\pm1.144$ &$0.587\pm1.093$ &$0.018\pm0.174$ &$0.0\pm0.0$ &$0.0\pm0.0$ \\
        &    &$100$   &$0.707\pm1.144$ &$0.587\pm1.093$ &$0.000\pm0.174$ &$0.0\pm0.0$ &$0.0\pm0.0$ \\ \hline\hline
        
  \multirow{4}{*}{20} &\multirow{4}{*}{40} &$10$    &$0.326\pm0.640$ &$0.280\pm0.624$  &$0.032\pm0.189$ &$0.0\pm0.0$ &$0.0\pm0.0$ \\ 
       &     &$30$   &$0.326\pm0.640$ &$0.280\pm0.624$ &$0.018\pm0.136$ &$0.0\pm0.0$ &$0.0\pm0.0$\\ 
       &     &$60$   &$0.326\pm0.640$ &$0.280\pm0.624$ &$0.014\pm0.132$ &$0.0\pm0.0$ &$0.0\pm0.0$\\
       &     &$100$  &$0.326\pm0.640$ &$0.280\pm0.624$ &$0.000\pm0.000$ &$0.0\pm0.0$ &$0.0\pm0.0$\\ \hline\hline
    
  \multirow{4}{*}{30} &\multirow{4}{*}{60}  &$10$    &$0.442\pm1.176$ &$0.433\pm1.191$  &$0.080\pm0.329$ &$0.0\pm0.0$ &$0.0\pm0.0$ \\ 
      &      &$30$   &$0.442\pm1.176$ &$0.433\pm1.191$ &$0.076\pm0.321$ &$0.0\pm0.0$ &$0.0\pm0.0$\\ 
      &     &$60$   &$0.442\pm1.176$ &$0.433\pm1.191$ &$0.056\pm0.307$ &$0.0\pm0.0$ &$0.0\pm0.0$\\
      &     &$100$  &$0.442\pm1.176$ &$0.433\pm1.191$ &$0.000\pm0.000$ &$0.0\pm0.0$ &$0.0\pm0.0$\\ \hline\hline

  \multirow{4}{*}{40} &\multirow{4}{*}{80}  &$10$    &$0.221\pm0.646$ &$0.199\pm0.647$  &$0.042\pm0.220$ &$0.0\pm0.0$ &$0.0\pm0.0$ \\ 
     &       &$30$   &$0.221\pm0.646$ &$0.199\pm0.647$ &$0.019\pm0.176$ &$0.0\pm0.0$ &$0.0\pm0.0$\\ 
     &       &$60$   &$0.221\pm0.646$ &$0.199\pm0.647$ &$0.001\pm0.010$ &$0.0\pm0.0$ &$0.0\pm0.0$\\
     &       &$100$  &$0.221\pm0.646$ &$0.199\pm0.647$ &$0.000\pm0.000$ &$0.0\pm0.0$ &$0.0\pm0.0$\\ \hline
    
\end{tabular}
\end{center}
\end{table}

\subsection{Experiment analyzing model robustness on causal discovery algorithms} \label{Appendix: Experiment analyzing method robustness on causal discovery algorithm}

In \cref{sec: Experiment/Synthetic Data}, we obtain the ture CPDAG from the true DAG without running the causal discovery algorithm. However, in practice, with the true DAG unknown, the CPDAG can only be obtained from causal discovery algorithms. In order to test the model robustness on causal discovery algorithms, we learn the corresponding CPDAG from the synthetic data by the Greedy Equivalence Search (GES) procedure \citep{chickering2002learning}. The prediction performance and fairness results are reported in \cref{tab: RMSE and Unfairness for simultion data_analyzing method robustness on GES} and \cref{fig: Boxplot_unfairness_and_RMSE_GES}, from which, we can see the same trend on five models as the one in \cref{sec: Experiment/Synthetic Data}. Moreover, there is not much difference on fairness and prediction performance on \texttt{FairRelax} model between the case that the CPDAG is induced directly from the true DAG and the case that the CPDAG is learnt from the observational data by a causal discovery algorithm.

\begin{table}[!ht]
\caption{Average unfairness and RMSE for synthetic datasets on held-out test set when the corresponding CPDAG is learned by GES search procedure. For each graph setting, the unfairness gets decreasing from left to right and the RMSE gets increasing from left to right.}
\label{tab: RMSE and Unfairness for simultion data_analyzing method robustness on GES}
\begin{center}
\small
\begin{tabular}{cccccccc}
    \hline
           &Node &Edge   &Full     &Unaware  &FairRelax  &Oracle   &Fair\\ \hline
  \multirow{4}{*}{\begin{turn}{90}Unfairness\end{turn}}  &$10$   &$20$    &$0.264\pm0.343$ &$0.203\pm0.318$  &$0.084\pm0.215$ &$0.000\pm0.000$ &$0.079\pm0.215$ \\ 
            &$20$   &$40$    &$0.191\pm0.312$ &$0.150\pm0.283$ &$0.067\pm0.243$ &$0.000\pm0.000$ &$0.066\pm0.243$ \\ 
            &$30$   &$60$    &$0.157\pm0.301$ &$0.143\pm0.308$ &$0.066\pm0.219$ &$0.000\pm0.000$ &$0.061\pm0.216$ \\ 
            &$40$   &$80$    &$0.096\pm0.190$ &$0.074\pm0.183$ &$0.038\pm0.109$ &$0.000\pm0.000$ &$0.024\pm0.075$ \\ \hline\hline
    
    \multirow{4}{*}{\begin{turn}{90}RMSE\end{turn}}    &$10$   &$20$    &$0.616\pm0.255$   &$0.631\pm0.262$  &$1.071\pm0.739$   &$1.079\pm0.788$    &$1.112\pm0.767$ \\ 
            &$20$   &$40$    &$0.597\pm0.252$   &$0.601\pm0.250$   &$1.029\pm0.736$   &$0.862\pm0.564$    &$1.037\pm0.734$ \\ 
            &$30$   &$60$    &$0.592\pm0.232$    &$0.595\pm0.235$  &$0.992\pm0.771$    &$0.907\pm0.894$   &$1.097\pm0.955$\\ 
            &$40$   &$80$    &$0.595\pm0.273$     &$0.596\pm0.272$ &$0.928\pm0.738$    &$0.746\pm0.433$ &$0.947\pm0.753$ \\ \hline
\end{tabular}
\end{center}
\end{table}

\begin{figure}[ht]
\centering
\subfloat[Average unfairness for each model and graph setting.]{
\label{fig: Boxplot_unfairness_GES}
\includegraphics[width=0.48\columnwidth,height=0.3\columnwidth]{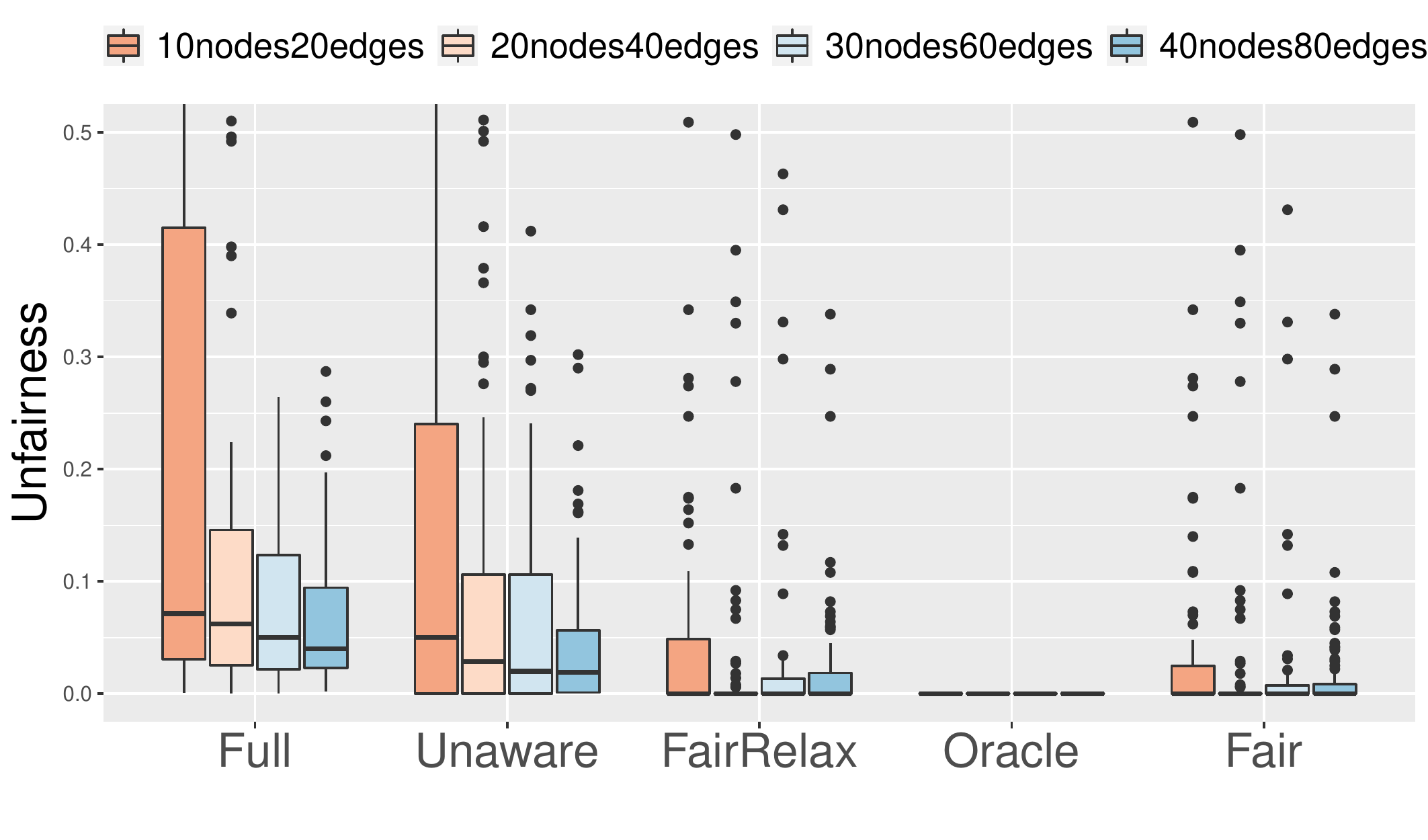}
}
\subfloat[Average RMSE for each model and graph setting.]{
\label{fig: Boxplot_RMSE_GES}
\includegraphics[width=0.48\columnwidth,height=0.3\columnwidth]{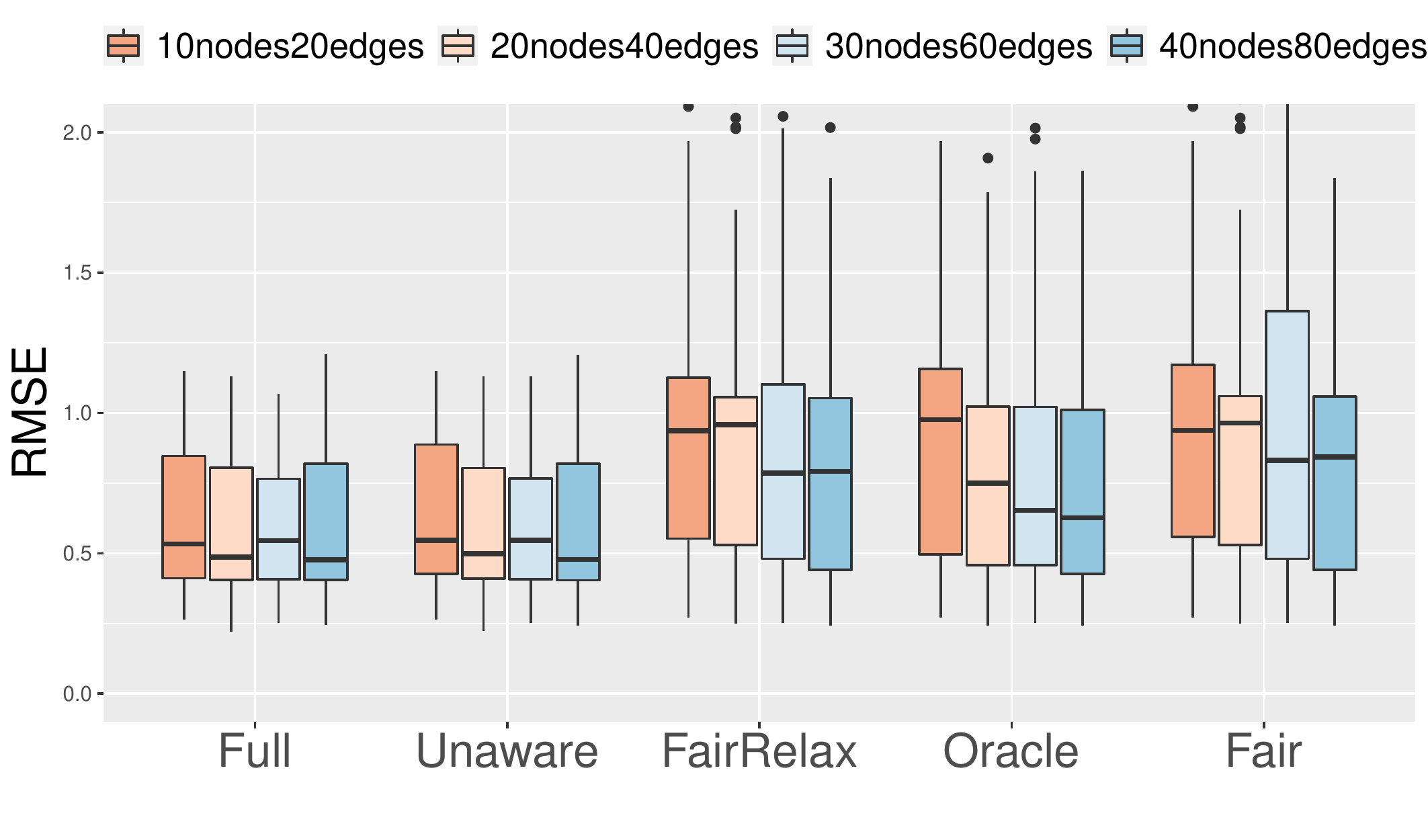}
}
\caption[]{Average unfairness and RMSE for synthetic datasets on held-out test set when the corresponding CPDAG is learned by GES search procedure. For each graph setting, the unfairness gets decreasing from left to right, while RMSE has the opposite trend.}
\label{fig: Boxplot_unfairness_and_RMSE_GES}
\end{figure}
\section{Additional related works on ancestral relations identifiability}\label{sec: Related Work}

A basic task in causal reasoning on an MPDAG $\mathcal{G}$ is to identify the ancestral relations between two distinct nodes in $\mathcal{G}$. The first intuitive method is to list all DAGs in $\mathcal{G}$ and then read off the ancestral relations in each DAG. However, this method is computationally burdensome. 
The second approach is to measure the possible causal effect \citep{perkovic2017interpreting, guo2021minimal, perkovic2020identifying, fang2020ida, liu2020local, liu2020collapsible} from the source variable to the target variable in an MPDAG. The target is a definite descendant (or non-descendant) of the source if and only if all possible causal effect are non-zero (or zero).
The third approach is to analyse the path from the source to target in an MPDAG $\mathcal{G}$.  Perkovi{\'c} et al. \citep{perkovic2017interpreting} propose that the target is a definite non-descendant of the source if and only if there is no b-possibly causal path from the source to target. There is also a sufficient and necessary graphical condition \cite[Theorem 1]{fang2022local} to identify whether a variable is a definite descendant of another variable in CPDAGs. The authors in \citep{roumpelaki2016marginal, mooij2020constraint} \footnote{Although Theorem 3.1 in \citep{roumpelaki2016marginal} proved the necessity, their proof is incomplete as mentioned by \citep{mooij2020constraint}. Proving the necessity for more general types causal graphs remains an open problem \citep{zhang2006causal}.} extend the sufficiency of this condition to other kinds of causal graphs as well. However, to the best of our knowledge, such graphical criterion to determine the definite descendants for MPDAGs has not been examined before.

\section{Discussion on accuracy-fairness trade-off} \label{Appendix: Discussion on Accuracy-fairness trade-off}

The accuracy-fairness trade-off is pointed out in a great number of existing algorithmic fairness works \citep{martinez2019fairness, zhao2019inherent, menon2018cost, zliobaite2015relation, chen2018my, wei2022}. Yet, accuracy may not be doomed to decrease as fairness increases depending on the data setting \citep{dutta2020there,  friedler2021possibility, Yeom2018DiscriminativeBN}. For example, in the synthetic dataset in \cref{sec: Experiment/Synthetic Data}, we do happen to observe an accuracy-fairness trade-off, while it seems that such trade-off does not exist in the synthetic dataset generated by the nonlinear structure equations in \cref{Appendix: Experiment based on non-linear structural equations}. The authors in \citep{wick2019unlocking, inproceedings, dutta2020there} describe when such trade-off exists and when it does not theoretically or empirically. 
Future work may take the fairness-accuracy trade-off into more consideration. 

\end{document}